\documentclass{article}

\usepackage{tikz,siunitx}
\usetikzlibrary{decorations.pathreplacing}
\usetikzlibrary{shapes,arrows}
\usepackage{natbib}

\date{}

\usepackage[utf8]{inputenc} %
\usepackage[T1]{fontenc}    %
\usepackage{lmodern}
\usepackage{hyperref}       %
\usepackage{url}            %
\usepackage{booktabs}       %
\usepackage{amsfonts}       %
\usepackage{nicefrac}       %
\usepackage{microtype}      %

\usepackage{amsmath}
\usepackage{amsthm}
\usepackage{amssymb}

\usepackage{bm}
\usepackage[title]{appendix}
\usepackage{courier}
\usepackage{enumitem}
\usepackage{graphicx}
\usepackage{algorithmic}
\usepackage[caption=false,font=footnotesize]{subfig}
\usepackage{float} 
\usepackage[margin=1in]{geometry}
\usepackage{authblk}
\usepackage{nicefrac}
\usepackage{mathtools}
\usepackage{color}
\usepackage{xspace} %
\usepackage{times}
\usepackage{stmaryrd}

\usepackage{todonotes}

\newtheorem*{theorem*}{Theorem}
\newtheorem*{lemma*}{Lemma}

\RequirePackage{latexsym}
\RequirePackage{amsthm}
\RequirePackage{amssymb}
\RequirePackage{bm}
\RequirePackage[mathscr]{euscript}

\newcommand{\ts}{\mathsf{T}}

\newcommand{\RR}[2]{\mathbb{R}^{#1 \times #2}} %

\newcommand{\vectorize}{\mathrm{vec}}
\newcommand{\bigo}{\mathcal{O}}

\newcommand{\St}{\bm{\mathcal{S}}}
\newcommand{\At}{\bm{\mathcal{A}}}
\newcommand{\Bt}{\bm{\mathcal{B}}}
\newcommand{\Xt}{\bm{\mathcal{X}}}

\newcommand{\Gt}{\bm{\mathcal{G}}}
\newcommand{\Tt}{\bm{\mathcal{T}}}
\newcommand{\It}{\bm{\mathcal{I}}}
\newcommand{\Yt}{\bm{\mathcal{Y}}}
\newcommand{\Qt}{\bm{\mathcal{Q}}}

\newcommand{\Ht}{\bm{\mathcal{H}}}

\newcommand{\Wt}{\bm{\mathcal{W}}}
\newcommand{\Ct}{\bm{\mathcal{C}}}
\newcommand{\CP}[1]{\llbracket #1 \rrbracket}

\newcommand{\T}{\Tt}
\newcommand{\Rbb}{\RR}

\newcommand{\ab}{\mathbf{a}}
\newcommand{\bb}{\mathbf{b}}
\newcommand{\cbb}{\mathbf{c}}

\newcommand{\eb}{\mathbf{e}}

\newcommand{\gb}{\mathbf{g}}

\newcommand{\tb}{\mathbf{t}}
\newcommand{\ub}{\mathbf{u}}
\newcommand{\vb}{\mathbf{v}}

\newcommand{\xb}{\mathbf{x}}
\newcommand{\yb}{\mathbf{y}}
\newcommand{\zb}{\mathbf{z}}

\newcommand{\Ab}{\mathbf{A}}
\newcommand{\Bb}{\mathbf{B}}
\newcommand{\Cb}{\mathbf{C}}
\newcommand{\Db}{\mathbf{D}}

\newcommand{\Gb}{\mathbf{G}}

\newcommand{\Ib}{\mathbf{I}}

\newcommand{\Mb}{\mathbf{M}}

\newcommand{\Qb}{\mathbf{Q}}
\newcommand{\Rb}{\mathbf{R}}
\newcommand{\Sb}{\mathbf{S}}
\newcommand{\Tb}{\mathbf{T}}
\newcommand{\Ub}{\mathbf{U}}
\newcommand{\Vb}{\mathbf{V}}
\newcommand{\Wb}{\mathbf{W}}
\newcommand{\Xb}{\mathbf{X}}
\newcommand{\Yb}{\mathbf{Y}}

\renewcommand{\vec}[1]{\mathbf{\boldsymbol{#1}}}

\ifx\BlackBox\undefined
\newcommand{\BlackBox}{\rule{1.5ex}{1.5ex}}  %

\ifx\QED\undefined
\def\QED{~\rule[-1pt]{5pt}{5pt}\par\medskip}
\fi

\ifx\proof\undefined
\newenvironment{proof}{\par\noindent{\bf Proof\ }}{\hfill\BlackBox\\[2mm]}
\fi

\ifx\theorem\undefined
\newtheorem{theorem}{Theorem}
\fi

\ifx\example\undefined
\newtheorem{example}{Example}
\fi

\ifx\property\undefined

\fi

\ifx\lemma\undefined
\newtheorem{lemma}[theorem]{Lemma}
\fi

\ifx\proposition\undefined
\newtheorem{proposition}[theorem]{Proposition}
\fi

\ifx\remark\undefined
\newtheorem{remark}[theorem]{Remark}
\fi

\ifx\corollary\undefined
\newtheorem{corollary}[theorem]{Corollary}
\fi

\ifx\definition\undefined
\newtheorem{definition}{Definition}
\fi

\ifx\step\undefined
\newtheorem{step}{Step}
\fi

\ifx\conjecture\undefined
\newtheorem{conjecture}[theorem]{Conjecture}
\fi

\ifx\axiom\undefined
\newtheorem{axiom}[theorem]{Axiom}
\fi

\ifx\claim\undefined
\newtheorem{claim}[theorem]{Claim}
\fi

\ifx\assumption\undefined
\newtheorem{assumption}[theorem]{Assumption}
\fi

\newcommand{\CC}{\mathbb{C}} %
\newcommand{\EE}{\mathbb{E}} %
\newcommand{\PP}{\mathbb{P}} %
\renewcommand{\RR}{\mathbb{R}} %

\newcommand{\tr}{\mathop{\mathrm{tr}}}
\newcommand{\diag}{\mathop{\mathrm{diag}}}

\newcommand{\inner}[2]{\left\langle #1,#2 \right\rangle}
\newcommand{\norm}[1]{\left\lVert#1\right\rVert}

\newcommand{\Sigmab}{\mathbf{\Sigma}}

\newcommand{\kron}{\otimes}
\newcommand{\krao}{\odot}

\newcommand{\hadam}{*}
\newcommand{\outprod}{\circ}

\newcommand{\cprank}{\mathrm{rank}_{\mathrm{CP}}}
\newcommand{\rank}{\mathrm{rank}}

\newcommand\textequal[1]{\stackrel{\mathclap{\scalebox{.75}{\normalfont\mbox{#1}}}}{=}}

\renewcommand{\vec}[1]{\ensuremath{\mathbf{#1}}}

\newcommand{\mat}[1]{\ensuremath{\mathbf{#1}}}

\newcommand{\ten}[1]{\mat{\ensuremath{\boldsymbol{\mathcal{#1}}}}}

\newcommand{\ttm}[1]{\times_{#1}}

\newcommand{\edgelabel}[1]{\scalebox{0.7}{\textcolor{gray}{#1}}}
\DeclareMathOperator*{\TT}{\operatorname{TT}}

\newcommand{\dimcolor}{\textcolor{gray}}
\newcommand{\indcolor}{\textcolor{blue}}

\usepackage{xcolor}
\hypersetup{
    colorlinks,
    linkcolor={red!50!black},
    citecolor={blue!50!black},
    urlcolor={blue!80!black}
}

\title{Tensor Cookbook: Mastering Tensors through Diagrams}

\newcommand\Mark[1]{\textsuperscript{#1}}

\author{Beheshteh T. Rakhshan\Mark{$^{1,2}$}, Guillaume Rabusseau\Mark{$^{1,2,3}$}\\
\Mark{1}Mila \& DIRO, \Mark{2} Université de Montréal, \Mark{3}CIFAR AI Chair}

\def\baseline{-0.5ex}

\begin{document}
\maketitle
\newpage
\tableofcontents
\clearpage
\newpage
\section*{Notations}


 \newpage
\section*{Tensor Network Notations} 
    \begin{table}[H]
    %
 \end{table}

\newpage
\section*{Introduction}
High–dimensional data arise naturally in many areas of science and engineering, including machine learning, signal processing, computational physics, and statistics. Such data are often represented as tensors, i.e., multi–dimensional generalizations of matrices. While tensors provide a natural representation for multi–modal structure, their direct manipulation quickly becomes challenging as the order of the tensor grows: the number of parameters increases exponentially, and algebraic expressions involving many indices become difficult to interpret and implement.

Tensor networks (TNs) provide an effective framework for addressing these challenges.
Originally introduced by~\cite{penrose1971applications} and extensively developed in the quantum physics community, the graphical language of tensor networks encodes tensor contractions as edges in a graph, a formalism that not only reduces notational overhead but also reveals structural properties of tensor operations that are obscured by index notation. Despite the central role of high-dimensional tensors in modern machine learning and numerical analysis, tensor network diagrams remain underutilized outside of quantum computing, partly due to the lack of a self-contained mathematical reference accessible to a broad technical audience.
The goal of this manuscript is to provide a self-contained guide to tensor networks and their use in tensor algebra. We present the main operations on tensors, such as contractions, products, and reshaping, through graphical notation, and show how classical tensor decompositions and related computations can be naturally expressed in this framework. In addition, we illustrate how tensor networks simplify the derivation of gradients and the manipulation of high-dimensional probability distributions.

Throughout, we demonstrate that the diagrammatic approach does not merely simplify notation; it yields genuinely shorter and more transparent proofs of classical identities, rank bounds, and gradient formulas that would otherwise require laborious index manipulation.

\subsection*{Organization}

This manuscript is organized into five chapters, each building on the previous to develop 
a complete and self-contained treatment of tensor networks.

\medskip
\noindent\textbf{Chapter 1: Tensor Network Basics.}
We introduce the graphical language of tensor networks from first principles. Tensors are 
represented as nodes with labeled legs, and contractions are encoded as shared edges. We 
develop the core operations: inner products, outer products, traces, and Frobenius norms within this framework, demonstrating how familiar matrix identities extend naturally to 
higher-order tensors. The chapter concludes with copy tensors (hyperedges) and graphical 
representations of matrix factorizations, including QR decomposition and singular value 
decomposition (SVD).

\medskip
\noindent\textbf{Chapter 2: Operations on Tensors.}
We treat tensor reshaping operations, fibers, slices, matricization, and vectorization, formally and graphically. We then develop the principal tensor products: the mode-$n$ 
product, the Kronecker product, the Khatri-Rao product, and the Hadamard product, each accompanied by diagrammatic proofs of their key identities. The chapter closes with a general theorem bounding the rank of tensor matricizations via cuts in the tensor network graph, unifying several classical rank inequalities under a single framework.

\medskip
\noindent\textbf{Chapter 3: Tensor Decompositions.}
We present the three principal decomposition models for high-dimensional tensors. The CANDECOMP/PARAFAC (CP) decomposition expresses a tensor as a sum of rank-one terms; the Tucker decomposition generalizes this via a compressed core tensor and factor matrices,
and the Tensor Train (TT) decomposition provides a chain-structured factorization whose parameter count scales linearly with tensor order. For each model, we provide the mathematical formulation, its tensor network diagram, rank characterizations, and computational algorithms, including Higher-Order SVD (HOSVD) for Tucker and the TT-SVD and Alternating Least Squares (ALS) algorithms for TT. We further discuss efficient algebraic operations in TT format and briefly survey advanced architectures, including Tensor Ring, Projected Entangled Pair States (PEPS), and Hierarchical Tucker decompositions.

\medskip
\noindent\textbf{Chapter 4: Computing Gradients of Tensor Networks.}
We develop a graphical calculus for the differentiation of tensor network functions. Starting 
from classical definitions of gradients and Jacobians, we establish that the Jacobian of a tensor network with respect to any core tensor is obtained simply by removing that core from the diagram. This yields transparent, diagram-level proofs of classical derivative 
identities, including gradients of quadratic forms and trace functionals, and extends naturally to networks where a core appears multiple times. We illustrate the framework with two applications: gradient-based optimization of 
the CP decomposition, and backpropagation through weight matrices parameterized in 
Matrix Product Operator (MPO) format.

\medskip
\noindent\textbf{Chapter 5: Probability Distributions and Random Tensor Networks.}
We explore two interconnected applications of tensor networks to probability and 
statistics. First, we show how joint probability distributions over discrete random 
variables can be encoded as tensors, and how marginals and conditionals reduce to simple 
diagrammatic operations. We then discuss how the Tensor Train format provides an 
efficient, structured parameterization of high-dimensional distributions, including 
Born Machine-style representations rooted in quantum physics. Second, we compute 
the expectations of random tensor networks whose entries are drawn from the standard normal 
distribution. Using Isserlis' theorem and diagrammatic reasoning, we derive closed-form 
expressions for quantities such as $\EE[\|\Ab\Bb\|_F^2]$ and 
$\EE[(\Ab^\ts \Ab)^{\kron 2}]$ with a brevity that would be difficult to achieve through conventional index-based methods.

\bigskip
\subsection*{Target Audience}

This manuscript is intended as a self-contained reference for researchers and 
practitioners who work with high-dimensional data and wish to develop fluency with tensor 
networks. The material is broadly accessible: a working knowledge of linear algebra 
at the level of a first undergraduate course, including matrix multiplication, 
QR decomposition, and SVD are sufficient for the majority of the text.

The presentation is designed to serve multiple communities simultaneously. For 
\textit{mathematicians}, the manuscript provides rigorous 
definitions, rank characterizations, and algorithm analyses grounded in linear algebra 
theory. For \textit{researchers in machine learning and data science}, it offers a practical 
toolkit to parameterize high-dimensional weight matrices, compute gradients through 
structured tensor decompositions, and compactly model complex distributions. For 
\textit{physicists and quantum computing researchers}, it bridges the diagrammatic 
conventions of tensor networks with the language of applied mathematics. 

All tensor network diagrams in this manuscript are typeset using \LaTeX\ with the \texttt{TikZ} 
package, and the diagrammatic notation is gradually introduced so that no prior exposure 
to tensor network diagrams is assumed.

\section{Tensor Networks Basics}\label{chapter:basics}

As their order increases, representing and working with tensors becomes more complicated. 
Tensor networks provide an efficient framework for working with these high-order objects, simplifying both their representation and analysis. 
The graphical notation of tensor networks offers an intuitive way to visualize and simplify complex tensor operations~\citep{orus2014practical,biamonte2017tensor}. 
In this section, we introduce basic notions on tensors and common tensor operations using the language of tensor networks.

\subsection{What are Tensor Networks?}
As their name suggests, Tensor Networks (TNs) are simply tensors connected to each other to form a network. More precisely, a tensor network is a graph in which vertices represent \emph{tensors} and edges represent \emph{contracted} indices (shared modes). The graphical notation for tensor networks was first introduced by~\cite{penrose1971applications}. Tensors are represented by shapes with legs~(edges), i.e.,
$
\begin{tikzpicture}[baseline=-0.5ex]
    \tikzset{tensor/.style = {minimum size = 0.5cm,shape = circle,thick,draw=black,inner sep = 0pt}, edge/.style = {thick,line width=.4mm},every loop/.style={}}
    \def\x{6}
    \node[tensor,fill=green!50!red!50!white] (T) at (\x,0) {$\Tt$};
    \draw[edge] (T) -- (\x-0.6,0);
    \draw[edge] (T) -- (\x+0.6,0);
    \draw[edge] (T) -- (\x-0.5,-0.3);
    \draw[dotted] (T) -- (\x,-0.6);
    \draw[dotted] (T) -- (\x+0.5,-0.3);
    \draw[dotted] (T) -- (\x+0.3,-0.5);
    \end{tikzpicture}.
$
We will use different shapes, such as rectangles, triangles, or circles, and various colors to represent tensors. Shapes and colors serve mainly as visual cues. 
The \emph{order} of a tensor is its number of dimensions, e.g., an $N$-th order tensor $\Tt\in\RR^{d_1\times\cdots\times d_N}$ is a multidimensional array of scalars~$\Tt_{i_1,\dots,i_N}$ indexed by $N$ indices $i_1\in [d_1], i_2\in[d_2],\cdots, i_N \in [d_N]$. Each axis is called a \emph{mode} of a tensor: an $N$-th order tensor has $N$ modes.
 
 Tensors naturally generalize vectors and matrices to higher-order arrays. As the order increases, representing these arrays becomes more challenging. \emph{Tensor networks} provide a simple and intuitive way to represent and manipulate these higher-order objects. Complex operations on tensors can be represented more easily with graphical notations of tensor networks~\citep{biamonte2017tensor,orus2014practical}.

\paragraph{Tensor Network Nodes}\label{def:tn-nodes}
In a tensor network, each node~(or vertex) represents a tensor, and the number of legs~(incident edges) corresponds to the order of the tensor: scalars are nodes with no edges, vector nodes have a single edge, matrix nodes have two edges, and so on. For example,
\begin{center}
    

    $\in \RR^{d_1\times d_2\times d_3\times d_4}$.

\end{center}
represent a scalar, a vector, a matrix, a third- and a fourth-order tensors, respectively. 
Throughout this manuscript,
\begin{itemize}
    \item Tensors are represented by colored shapes called nodes, where the colors and shapes have no specific meaning~(unless stated otherwise). 
    \item We use lower case letters for scalars~(e.g., $a$), bold lower case letters for vectors~(e.g., $\vb$), bold upper case letters for matrices (e.g., $\Mb$), and 
    bold upper case calligraphic letters for higher-order tensors~(e.g., $\Tt$).
    \item The dimension of each mode of a tensor is depicted by a gray letter positioned at the top of the corresponding leg, e.g., 
$

    \ \ \ \in \RR^{d_1\times d_2\times d_3},
$
is a third-order tensor of size $d_1\times d_2\times d_3$.
    
$
\At_{i,j,k} = 
    %
$    
is the $(i,j,k)$-th element of the third-order tensor $\At$.
\end{itemize}
Note that large matrices can sometimes be viewed as high-order tensors through reshaping, e.g., 
\begin{align*}
    &\Tb
    =
    %
    \in\RR^{I_1\times I_2\times I_3\times I_4\times J_1 \times J_2\times J_3\times J_4}.
\end{align*}
\paragraph{Tensor Network Edges}\label{subsec:contraction} In tensor network diagram, legs are of two types: contracted legs~(those connecting two tensors) and un-contracted legs, also called free legs, with one dangling end~(i.e., a leg that is not connected to any other tensor). As mentioned above, un-contracted legs correspond to free indices:  the number of free legs indicates the order of a tensor: scalars have no free legs, vectors have one, matrices have two, and higher-order tensors have three or more.  Contracted legs represent contractions: 
tensors can be connected along legs of the same size, which represents a summation~(contraction) over the connected modes. 
We use the terms summation and contraction interchangeably. The most common contraction operation is \textit{matrix multiplication}. For two matrices $\Ab\in\RR^{d_1\times R}, \Bb\in\RR^{R\times d_2}$, their matrix product corresponds to a contraction between the second mode of $\Ab$ and the first mode of $\Bb$:

\begin{align}\label{eq:matrix-product}
(\Ab\Bb)_{ij} 
=

= \sum_{r=1}^R\Ab_{ir}\Bb_{rj},\ \ \ \text{for}\ \ i\in[d_1],j\in[d_2]\ .
\end{align}
Since this equality is true for all indices $i$ and $j$, we can simply write
\begin{align}\label{eq:matrix-product_noidx}
\Ab\Bb 
=
%
 \ \in \mathbb R^{d_1\times d_2}.
\end{align}

 In the above diagram, we can see that there is a connection between the legs of the same size $R$. This is consistent with matrix multiplication where there is a sum over indices with the same dimension. Finally, since the resulting diagram has two free legs, it represents a matrix. Here, the final object is a matrix which can be represented as  a single node, demonstrating how nodes can be merged in tensor networks:
$$
\Ab\Bb =\Mb \Leftrightarrow 
%
.
$$
Contraction between two matrices can directly be extended to contractions between a matrix and a vector, a tensor and a matrix, a tensor a matrix and a vector products, etc. Here are some examples of tensor networks with their corresponding mathematical expressions: 
\begin{align}\label{eq:matvec}
\Ab\in\RR^{d_1\times R},\vb\in\RR^{R} \ : \ \
%
 
= \sum_{r=1}^R\sum_{k=1}^{d_3}\At_{ijr}\Ab_{rk}\ab_{k}.
\end{align}
As we can see in all diagrams above, edges between two nodes represent summations. They also indicate that two tensors share dimensions of the same size on the corresponding mode. 
As explained before, the number of free legs in the tensor network corresponds to the order of the tensor it represents. For the previous examples, we have

$$
%
 
\in \mathbb R^{d_1\times d_2}.
$$
 
\paragraph{From mathematical expressions to tensor networks, and back} Tensor networks are simple to work with because there are no strict rules for representing legs and nodes. 
In tensor network diagrams, nodes and edges can be positioned arbitrarily in the plane, only the graph structure matters. 
For example, when translating matrix multiplication into  a tensor network diagram, the key feature is to ensure the dimension of the connected legs are consistent, e.g., for $\Ab\in\RR^{d_1\times R}, \Bb\in\RR^{R\times d_2}$  all diagrams below illustrate the \emph{same} matrix multiplication:
\begin{align*} 
\Ab\Bb
=
.
\end{align*}
Note that because of the sizes of $\Ab\in\RR^{d_1\times R}$ and $\Bb\in\RR^{R\times d_2}$, there is only one way to do the contraction between the two; hence, there would be no ambiguity in the diagrams above even if we omitted the dimensions on the edges. 

Conversely, translating tensor network diagrams into mathematical formulations can lead to multiple interpretations. For example, the following matrix multiplication diagram
$
%
$
can be interpreted differently depending on what we choose the shapes of $\Ab$, $\Bb$ and the resulting matrix to be:
\begin{itemize}
    \item if $\Ab\in\RR^{d_1\times R}, \Bb\in\RR^{R\times d_2}$ and the result is of size $d_1 \times d_2$, then the diagram represents the product $\Ab\Bb$,
    \item if $\Ab\in\RR^{R\times d_1}, \Bb\in\RR^{R\times d_2}$ and the result is of size $d_1 \times d_2$, then the diagram represents the product $\Ab^\ts\Bb$,
    \item if $\Ab\in\RR^{R\times d_1}, \Bb\in\RR^{R\times d_2}$ and the result is of size $d_2 \times d_1$, then the diagram represents the product $\Bb^\ts\Ab$,
    \item ...
\end{itemize}
Therefore, one should be mindful when translating tensor network diagrams into mathematical formulations. But this is also what makes tensor networks very useful! When we are working with the diagrams, rather than mathematical expressions, we do not need to care about, e.g., transposes.

\paragraph{Contracting two legs of the same node. } As explained previously, any two legs of the same dimension can be connected to form an edge in a tensor network, even two legs corresponding to the same node. Consider for example a fifth-order tensor $\Tt\in\mathbb R^{d_1\times d_2 \times d_3 \times d_2 \times d_4}$. Since its 2nd and 4th modes have the same dimension, we can contract them together. Since the resulting tensor network has 3 dangling legs, it represents a third-order tensor: 
\begin{align*}
&
 \in \mathbb R^{d_1\times d_3 \times d_4}.
\end{align*}
When contracting two legs of an $N$-th order tensor, the resulting tensor is of order $N-2$. This is closely related to the notion of \emph{partial trace} that we will present later. 
\subsection{Inner Product, Outer Product, Trace and Norm}\label{sec:inner-products}
In this section, we use tensor networks to show how the classical notions of inner product, outer product, trace, and norm can be generalized to higher-order tensors.

\paragraph{Inner product} Similarly to the classical inner~(Euclidean) product between vectors, the inner product of two $N$-th order tensors $\St,\Tt\in\RR^{d_1\times\dots\times d_N}$ is the sum of the products of their entries:
$$ \langle \St,\Tt \rangle = \sum_{i_1=1}^{d_1}\dots\sum_{i_N=1}^{d_N}\St_{i_1\dots i_N}\Tt_{i_1\dots i_N}.$$ In a tensor network diagram, the summation over all dimensions is obtained by connecting all the legs of the two tensors. This results in a tensor network without free legs, representing a scalar. For example, for vectors $\ab\in\RR^{d\times 1}, \bb\in\RR^{d\times 1}$ we have
\begin{align}\label{eq:inner-product}
\langle\ab,\bb\rangle = \sum_{i=1}^d\ab_i\bb_i
=
    \begin{tikzpicture}[baseline=\baseline]
   \tikzset{tensor/.style = {minimum size = 0.5cm,shape = circle,thick,draw=black,inner sep = 0pt}, edge/.style = {thick,line width=.4mm},every loop/.style={}}
    \def\y{0}
    \def\x{0}
    \node[tensor,fill=red!30!white] (a) at (\x,\y){\scalebox{0.85}{$\ab$}};
    \node[tensor,fill=blue!30!white] (b) at (\x+1,\y){\scalebox{0.85}{$\bb$}};
    \draw[edge] (A) -- (B) node [midway,above]{\scalebox{0.7}{\dimcolor{$d$}}};
\end{tikzpicture},
\end{align}
and for two third-order tensors $\St,\Tt\in\RR^{d_1\times d_2\times d_3}$ we have
\begin{align*}
\langle\St,\Tt\rangle = \sum_{i_1=1}^{d_1}\sum_{i_2=1}^{d_2}\sum_{i_3=1}^{d_3}\St_{i_1i_2i_3}\Tt_{i_1i_2i_3} =
\begin{tikzpicture}[baseline=\baseline]
    \tikzset{tensor/.style = {minimum size = 0.5cm,shape = circle,thick,draw=black,inner sep = 0pt}, edge/.style = {thick,line width=.4mm},every loop/.style={}}
    \def\y{0}
    \def\x{0}
    \node[tensor,fill=green!50!red!50!white] (St) at (\x,\y){\scalebox{0.85}{$\St$}};
    \node[tensor,fill=green!50!red!50!white] (Tt) at (\x+1,\y){\scalebox{0.85}{$\Tt$}};
    \draw[edge] (St) -- (Tt);
    \draw[edge] (St) to [out=45,in=135] (Tt);
    \draw[edge] (St) to [out=-45,in=-135] (Tt);
    \node[draw=none] () at (\x+0.5,\y+0.5) {\scalebox{0.7}{\dimcolor{$d_1$}}};
    \node[draw=none] () at (\x+0.5,\y+0.15) {\scalebox{0.7}{\dimcolor{$d_2$}}};
    \node[draw=none] () at (\x+0.5,\y-0.5) {\scalebox{0.7}{\dimcolor{$d_3$}}};
\end{tikzpicture}  
.
\end{align*}
In this book, we will focus only on real-valued tensors, but the tensor network formalism can easily be adapted to complex-valued ones. 
For example, if the two tensors were complex-valued, one would have to take the conjugates of the entries of the second tensor, which could be indicated in the tensor network by having node $\Tt^*$~(component-wise conjugate) instead of $\Tt$. 
\paragraph{Outer product}\label{subsection:outer-product}
Recall that the outer product of two vectors $\ub\in\RR^{m},\vb\in\RR^n$ is the $m\times n$ rank-one matrix $\ub\vb^\top$.
This operation can be generalized to any number of tensors. For example, the outer product of $N$ vectors, $\ab_1\in\RR^{d_1},\cdots,\ab_N\in\RR^{d_N}$, is the tensor of order $N$ defined by
\begin{align*}
    (\ab_1 \outprod \ab_2 \outprod \cdots \outprod \ab_N)_{i_1,i_2,\cdots,i_n}
    &= \\
    &\hspace{-1cm}(\ab_1)_{i_1} (\ab_2)_{i_2}  \cdots (\ab_N)_{i_N}\ \text{for all } i_1 \in [d_1],\cdots, i_N\in[d_N].
\end{align*}
Such a tensor is called a \textbf{rank-one} tensor. %
The diagram below illustrates the outer product of $N$ vectors: 
$$
\ab_1\outprod\ab_2 \outprod\cdots\outprod \ab_N=
\begin{tikzpicture}[baseline=\baseline]
    \tikzset{tensor/.style = {minimum size = 0.5cm,shape = circle,thick,draw=black,inner sep = 0pt}, edge/.style   = {thick,line width=.4mm}}
    \def\y{0}
     \def\x{0}
     \node[tensor,fill=blue!10!white] (a1) at (\x,\y){$\ab_1$};
    \node[tensor,fill=blue!10!white](a2) at (\x+1,\y){$\ab_2$};
  
    \node[tensor,fill=blue!10!white](an) at (\x+3,\y){$\ab_N$};
    \draw[edge] (a1) -- (\x,\y-0.7) node [midway, left] {\scalebox{0.7}{\dimcolor{$d_1$}}};
    \draw[edge] (a2) -- (\x+1,\y-0.7) node [midway, left] {\scalebox{0.7}{\dimcolor{$d_2$}}};
    \draw[edge] (an) -- (\x+3,\y-0.7)  node [midway, left] {\scalebox{0.7}{\dimcolor{$d_N$}}};
    \node (eq) at (\x+2,\y) {$\cdots$};
\end{tikzpicture}
\in\RR^{d_1\times\cdots\times d_N}.
$$
As a special case, for $N=2$, we have  $\ab_1 \outprod \ab_2 = \ab_1\ab_2^\ts\in\RR^{d_1\times d_2}$. As we see in the tensor network diagram of the outer product, there are no shared edges, which indeed reflects that no contraction (summation) occurs in an outer product. 
The notion of an outer product can naturally be extended to higher-order tensors.
Let $\At\in\RR^{m_1\times\dots\times m_p}$ and $\Bt\in\RR^{n_1\times\dots\times n_q}$. The outer product of $\At$ and $\Bt$ is the tensor of order $p+q$ defined by 
$$ (\At\outprod\Bt)_{i_1,i_2,\cdots,i_p,j_1,j_2,\cdots,j_q} = \At_{i_1,i_2,\cdots,i_p} 
\Bt_{j_1,j_2,\cdots,j_q}. $$
Like for vectors, the tensor networks of the outer product is simply obtained by juxtaposing the corresponding nodes without adding any edge:
$$
\At\outprod\Bt = 
\in\RR^{m_1\times\dots\times m_p\times n_1\times\dots\times n_q}.
$$
More generally, for any arbitrary number of tensors with arbitrary orders, the tensor network diagram of their outer product is obtained by simply placing them next to each other, e.g., for  $\Tt\in\RR^{d_1\times d_2\times d_3 \times d_4}$, $\Ab\in\RR^{m\times n}$ and $\vb\in\RR^{p}$
$$
\Ab\outprod \Tt\outprod \vb
=
%
\in\RR^{m\times n \times d_1\times d_2\times d_3 \times d_4\times p},
$$
is defined element-wise by $(\Ab\outprod\Tt\outprod \vb)_{i_1i_2j_1j_2j_3j_4k} = \Ab_{i_1i_2}\Tt_{j_1\cdots j_4}\vb_{k}$, where $i_1\in[m],i_2\in[n],j_l\in[d_l]$ for $l\in[4]$ and $k\in[p]$.

In the proposition below we show a simple visual proof of the fact that inner products of outer products are products of inner products.
\begin{proposition}
    Let $\Xt,\Tt\in\RR^{d_1\times d_2\times\cdots\times d_N}$ with $\Xt = \ab_1\outprod\ab_2\outprod\cdots\outprod\ab_N$ and $\Tt = \tb_1\outprod\tb_2\outprod\cdots\outprod\tb_N$, then $\langle\Xt,\Tt\rangle = \prod_{n=1}^N\langle\ab_n,\tb_n\rangle$.
\begin{proof}
\begin{align*}
\langle\Xt,\Tt\rangle
&=
\left\langle

=
\prod_{n=1}^N\langle\ab_n,\tb_n\rangle.
\end{align*}
\end{proof}
\end{proposition}
\paragraph{Edge of dimension one = no edge}
Note that in tensor network diagrams, an edge of size one is equivalent to having no edge. 
For example, in the matrix multiplication, if the contraction edge is of size one, it is equivalent to the outer product of vectors:
\begin{align*}
    \Ab\in\RR^{p\times 1},\ \ \ \Bb\in\RR^{1\times q}, \ \ \ 
(\Ab\Bb)_{ij} 
&=
%
\\
&=
(\ab\outprod\bb)_{ij},
\end{align*}
where $\ab$ is the unique column of $\Ab$ and $\bb$ the unique row of $\Bb$.
\paragraph{Trace} The trace operation is a special case of tensor contraction applied on square matrices. In tensor networks, the trace is naturally represented by a loop~(self-edge)  connecting the two legs of the matrix:
\begin{align*}
\Ab\in\RR^{d \times d},\hspace{0.5cm}
\tr(\Ab) = \sum_{i=1}^d\Ab_{ii} = \hspace*{-0.5cm}
%
\hspace*{-0.5cm}.
\end{align*}
Since there are no free legs, it is consistent with the fact that the trace is a scalar. Tensor network diagrams offer a very simple proof of the invariance of the trace under cyclic permutation. Let's start with the trace of the product of two matrices $\Ab \in \RR^{d\times R}$ and $\Bb \in \RR^{R\times d}$.  As we explained before, when drawing the graph of a tensor network, only the graph structure is important. This allows us to see that
\begin{align*}
\tr(\Ab\Bb)= 
\hspace{-0.5cm}
%
\hspace{-0.4cm} =
\tr(\Bb\Ab),
\end{align*}
and similarly for three matrices $\Ab\in\RR^{m\times n}, \Bb\in\RR^{n\times p},  \Cb\in\RR^{p\times m}$, we have
\begin{align*}
\tr(\Ab\Bb\Cb) = 
\hspace{-0.7cm}
%
\hspace{-0.7cm} = \tr(\Cb\Ab\Bb).
\end{align*}
Note that the equalities between the  tensor network diagrams are trivial: we just changed the nodes' position without changing the underlying graph's structure. We thus simply interpreted the same diagram in two ways, leading to a less trivial equality between $\tr(\Ab\Bb\Cb)$ and $\tr(\Cb\Ab\Bb)$.

\paragraph{Partial traces} The \emph{partial trace} is a common operation used in particular in the context of modeling many-body quantum systems, where density matrices of a part of a system are obtained by performing a partial trace of the density matrix of the whole system. Without delving into the specifics of quantum systems, consider a square matrix  of dimension $d_1d_2\times d_1d_2$ which we interpret as a fourth-order tensor 
.
The partial trace of $\Tt$ over the subspace $\mathbb R^{d_1}$ is the $d_2\times d_2$ matrix obtained by ``tracing out'' the $d_1$ dimension by connecting the corresponding legs: %
. Similarly, the partial trace of $\Tt$ over the subspace $\mathbb R^{d_2}$ is the $d_1\times d_1$ matrix 
%
.

\paragraph{Tensor Frobenius Norm} The Frobenius norm of a tensor $\At\in\RR^{d_1\times\cdots\times d_N}$ is the square root of the sum of its
squared elements~(i.e., the square root of the inner product with itself). In tensor networks, the norm of a third-order tensor $\At\in\RR^{d_1\times d_2\times d_3}$ is represented as
\begin{align*}
\norm{\At}_F^2 = \langle\At,\At\rangle = 
\sum_{i_1=1}^{d_1}\sum_{i_2=1}^{d_2}\sum_{i_3=1}^{d_3}\At_{i_1i_2i_3}^2
=
%
 .
\end{align*}
It is a direct and natural generalization of the matrix Frobenius norm: 
\begin{align*}
 \norm{\Ab}_F^2 = \tr(\Ab\Ab^\ts) = \tr(\Ab^\ts\Ab) =  \hspace*{-0.5cm}
%
 \hspace*{-0.5cm}.
\end{align*}

\paragraph{Frobenius norm of outer products} Using tensor networks, it is almost trivial to show that the Frobenius norm of an outer product is equal to the product of the Frobenius norms. E.g., for $\Ab\in\RR^{m\times n}$ and $\Tt\in\RR^{d_1\times d_2\times d_3}$, we have 
$$ %
=\norm{\Ab}_F^2 \norm{\Tt}_F^2
.
\end{align*}

\paragraph{Example} We conclude this section with a last example showing how the tensor network graphical notation is very useful for representing complicated operations on tensors:
\begin{align}\label{eq:einsum}
%
.
\end{align}
\subsection{Copy Tensors~(Hyperedges)}\label{sec:copy-tensor}

Sometimes, one needs to represent a simultaneous contraction shared among more than two tensors. Consider, for example, the following contraction between three vectors, $\ab\in\RR^d, \bb\in\RR^d,$ and $\cbb\in\RR^d$ to obtain a scalar: $\sum_{i=1}^d\ab_i\bb_i\cbb_i$.
This operation can be depicted in tensor networks using a special tensor called \emph{copy tensor}. The copy tensor is equivalent to a Kronecker delta, i.e., a hyper-diagonal tensor with ones on the diagonal and zeros elsewhere, and is represented as a black dot in tensor network diagrams. E.g., the copy tensor  

is defined element-wise as
$$
 \left(%
\right)_{ijk} = \delta_{ijk} = \begin{cases}
      1 & \text{if } i=j=k\\
      0 & \text{otherwise}
    \end{cases}.
$$
Using the copy tensor, the 3-way contraction mentioned above can be represented as 
\begin{align*}
%
 = \sum_{i,j,k=1}^d\delta_{ijk}\ab_i\bb_j\cbb_k =
\sum_{i=1}^d\ab_i\bb_i\cbb_i .
\end{align*}
We now give a more formal definition of the copy tensor of arbitrary order $N$.
\begin{definition}
N-th order copy tensor
   %
is the tensor of shape $\underbrace{d\times d\dots\times d}_{N\text{ times}}$ defined by 
\begin{align*}
       %
= \sum_{i=1}^d\underbrace{\eb_i\outprod\eb_i\outprod\dots\outprod\eb_i}_{N\text{ times}},%
\end{align*}
where $\eb_1,\eb_2,\dots,\eb_d$ are the vectors of the canonical basis of $\RR^{d}$.
\end{definition}
The name copy tensor comes from the fact that one can "copy" the canonical vectors by contraction with the copy tensor. Indeed, let $\eb_i\in\RR^{d}$ be a canonical basis vector, then one 
can check that
$       
%
.
$
Here is a short proof of this fact:
\begin{align*}
    \left(
%
.
\end{align*}
It is important to observe that the "copy" property of the copy tensor only holds for a canonical basis vector; it is not true for an arbitrary vector: in general
$       
%
$. Since the copy tensor only "copies" basis vectors because $\delta_{ijk}$ picks out diagonal entries.
\begin{proposition}\label{prop:copy.tensor} In the following, we list some useful properties of copy tensors.
\begin{enumerate}
    \item The simplest version of the copy tensor is an all-ones vector, i.e., 
$$
  %
 =\sum_{i=1}^n\eb_i =\overrightarrow{\mathbf{1}}.
$$\label{itm:one-copy-tensor}
\item The second-order copy tensor is the identity matrix, where we can simply omit the black node in the middle and represent it simply as a line, i.e.,  
$
    %
 = \delta_{ij} = \begin{cases}
      1 & \text{if } i=j\\
      0 & \text{otherwise},
    \end{cases}
    =
    \Ib_{ij}.
$
\item 
Interestingly, when contracting any number of legs from one copy tensor to legs of another copy tensor, the resulting tensor network is itself a copy
tensor (obtained by merging the contracted legs and dots in the tensor network diagram). For example: 
\begin{align*}  
\sum_{l_1}\delta_{i_1j_1k_1l_1}\delta_{i_2j_2k_2l_1} 
&=
%
=
\sum_{i_1}\delta_{i_1j_1k_1l_1}\delta_{i_1j_2k_2l_2}.
\end{align*}
\item\label{prop:diagcopy}
For any vector $\vb\in\RR^n$, let $\diag(\vb)\in\RR^{n\times n}$ denote the diagonal matrix having the entries of $\vb$ on the diagonal, and for any matrix $\Ab\in\RR^{n\times n}$, let $\diag(\Ab)\in\RR^{n}$ denote the vector containing the diagonal entries of $\Ab\in\RR^{n\times n}$.
Then,
$$
    \diag(\vb) = 
     %
 
$ 
represents a diagonal matrix.
\begin{proof}
For the first claim, for any $i,j\in [n]$, we have
\begin{align*}
    %
\right)\\
&= 
\sum_{k}\delta_{ijk}\vb_k = 
\delta_{ij}\vb_i
= 
\begin{cases}
\vb_i & \text{ if } i=j\\
0 &\text{ otherwise}
\end{cases}
= \text{diag}(\vb)_{ij}.
\end{align*}
For the second claim, for any $i\in [n]$, we have
\begin{align*}
    %
\right)
=
\sum_{j,k}\delta_{ijk}\Ab_{kj}
=
\Ab_{ii} = \diag(\Ab)
_{i}.
\end{align*}
\end{proof}
\item\label{prop:diag}
Consequently it is easy to check that $\diag(\vb)\diag(\ub)$ =      
    %
.
\end{enumerate}
\end{proposition}

\subsection{Matrix Factorization in Tensor Networks}
Tensor factorizations can be depicted in tensor network diagrams, just like any other operation. In this section, we will introduce graphical diagrams for matrix factorizations. More general factorizations of higher-order tensors will be covered in Chapter~\ref{chapter:tensor-decomposition}. In tensor networks, factorization means splitting a single node into multiple nodes. We will refer to its inverse operation as merging, i.e., combining multiple nodes into a single node. This process can be clearly illustrated in the following tensor network diagram:
\begin{align*}
\begin{tikzpicture}[baseline=\baseline]
    \tikzset{tensor/.style = {minimum size = 0.5cm,shape = circle,thick,draw=black,inner sep = 0pt}, edge/.style = {thick,line width=.4mm},every loop/.style={}}
    \def\y{0}
    \def\x{0}
    \node[tensor,fill=red!50!white] (A) at (\x,\y){\scalebox{0.85}{$\Ab$}};
    \node[tensor,fill=blue!50!white] (B) at (\x+2,\y){\scalebox{0.85}{$\Bb$}};
    \node[tensor,fill=blue!30!red!30!white] (C) at (\x+1,\y-1){\scalebox{0.85}{$\Cb$}};
    \node[tensor,fill=red!80!blue!60!white] (D) at (\x+3,\y-1){\scalebox{0.85}{$\Db$}};
    \draw[edge](A) -- (C) node [midway,above] {\scalebox{0.7}{\dimcolor{$r_1$}}};
    \draw[edge](C) -- (D) node [midway,above] {\scalebox{0.7}{\dimcolor{$r_2$}}};
    \draw[edge](B) -- (D) node [midway,above] {\scalebox{0.7}{\dimcolor{$r_3$}}};
    \draw[edge](B) -- (C) node [midway,above] {\scalebox{0.7}{\dimcolor{$r_3$}}};
    \draw[edge](A) -- (\x-0.8,\y) node [midway,above] {\scalebox{0.7}{\dimcolor{$d_1$}}};
    \draw[edge](B) -- (\x+2.8,\y) node [midway,above] {\scalebox{0.7}{\dimcolor{$d_2$}}};
    \node[draw=none]() at (\x+4.5,\y-0.15){\scalebox{1}{$\xrightarrow[]{\text{merging}}$}};
    \node[draw=none]() at 
    (\x+4.5,\y-0.75){\scalebox{1}{$\xleftarrow[\text{factorizing}]{}$}};
    \node[tensor,fill=red!30!blue!50!white] (M) at (\x+6.5,\y-0.5){\scalebox{0.85}{$\Mb$}};
    \draw[edge](M) -- (\x+7.5,\y-0.5) node [midway,above] {\scalebox{0.7}{\dimcolor{$d_2$}}};
    \draw[edge](M) -- (\x+5.5,\y-0.5) node [midway,above] {\scalebox{0.7}{\dimcolor{$d_1$}}};
\end{tikzpicture}
\end{align*}
Therefore, we can represent any matrix decomposition in tensor network diagrams. We will see in the following chapters that the QR and singular value decompositions (SVD) are two decompositions that are often used in tensor network algorithm to split nodes. Both of these decompositions factorize a matrix into a product of matrices, some of them satisfying orthogonality constraints. We will use visual cues to represent such orthogonality constraints in tensor network diagrams:
\paragraph{Left and right orthonormal matrices and tensors}
We call a matrix $\Ub\in\RR^{m\times n}$ \emph{left-orthogonal} if the contraction  of its transpose with itself from  the left yields the identity matrix~($\Ub^\ts\Ub = \Ib_{n}$).
Similarly, a matrix $\Vb\in\RR^{m\times n}$ is \emph{right-orthogonal} if the contraction with its transpose from the right yields the identity matrix
($\Vb\Vb^\ts = \Ib_{m}$). In tensor network diagrams, we will use half-colored nodes to denote such orthogonality structure, in such a way that when connecting two copies of an orthogonal node using the legs on the uncolored part, the result is the identity. For example, a left orthogonal matrix $\Ub\in\RR^{m\times n}$ will be represented as 
 $\Ib_n$. 
Similarly, a right-orthogonal matrix $\Vb\in\RR^{m\times n}$ will be drawn as 
%
 and we have
%
 $\Ib_m$.

Note that if the matrix $\Ub$ is left orthogonal then $\Ub\Ub^\ts$ is not necessarily the identity, i.e., 
$$
%
\Ib_m,$$ this will only be the case if $m=n$.

Using these visual cues, QR decomposition and SVD will have the following tensor networks:
\begin{itemize}
    \item\textbf{QR Decomposition} The~(rank revealing) QR decomposition of $\Ab\in\RR^{d_1\times d_2}$ is given by 
    
$$
%
\in\RR^{d_1\times R}$
is the left-orthogonal matrix, i.e., $\Qb^\ts\Qb = \Ib_R$, $\Rb\in\RR^{R\times d_2}$ is upper triangular~(which is not apparent in the tensor network representation) and $R$ is the rank of $\Ab$. 
\item \textbf{Singular Value Decomposition~(SVD)} The~(compact) SVD of $\Ab\in\RR^{d_1\times d_2}$ is given by 
\begin{align*}
%
$ is diagonal and $R$ is the rank of $\Ab$. 
\end{itemize}

Assuming $\Ab$ has the SVD represented in the previous diagram, we can give a short proof in tensor networks of the identity $\tr(\Ab^\ts\Ab) = \sum_{i}\sigma_i^2$, where $\sigma_i$ are the singular values of the matrix $\Ab\in\RR^
{m\times n}$:
$$
\tr(\Ab^\ts\Ab) 
%
= \tr(\Sigmab^2) = \sum_{i}\sigma_i^2,
$$
where we use the left and right orthogonal property of  $\Ub$ and $\Vb$.

\section{Operations on Tensors}\label{chapter:operations}
Recall that a tensor $\Tt \in \RR^{d_1 \times \cdots \times d_N}$ can be seen as a multi-dimensional array of order $N$: an $N$-th order or $N$-way tensor has $N$ modes, each mode corresponding to one dimension \citep{kolda2009tensor}.
\paragraph{Tensor Fibers} For any mode-$i$ (where $i = 1, \dots, N$), tensor \textit{fibers} are obtained by keeping all indices fixed except the $i$-th one. For example, a matrix column corresponds to a mode-1 fiber, while a matrix row corresponds to a mode-2 fiber. In a third-order tensor $\Tt \in \RR^{d_1 \times d_2 \times d_3}$, we have $d_2d_3$ mode-1 fibers, which are vectors of size $d_1$, i.e., $\Tt_{:,i_2,i_3} \in \RR^{d_1}$ for $i_2 \in [d_2]$ and $i_3 \in [d_3]$. The colon indicates varying the first index, while $i_2$ and $i_3$ remain fixed. Fibers are vectors.
\paragraph{Tensor Slices} Slices of a tensor are two-dimensional arrays (matrices) obtained by fixing all but two indices. For a third-order tensor $\Tt \in \RR^{d_1 \times d_2 \times d_3}$, there are horizontal, lateral, and frontal slices denoted by $\Tt_{i_1,:,:} \in \RR^{d_2 \times d_3}$, $\Tt_{:,i_2,:} \in \RR^{d_1 \times d_3}$, and $\Tt_{:,:,i_3} \in \RR^{d_1 \times d_2}$, respectively. Therefore, slices are matrices.
\paragraph{Scalar Multiplication} 
Let $\lambda \in \RR$ be a scalar. The scalar product $\lambda\Tt$ 
 is a tensor of the same dimension $d_1 \times \cdots \times d_N$, defined element-wise by:
$
    (\lambda\Tt)_{i_1 \cdots i_N} = \lambda \cdot \Tt_{i_1 \cdots i_N}$
for all admissible indices $i_k\in[d_k]$ and $k\in[N]$.
\paragraph{Tensors Sum} Let $\St\in\RR^{d_1\times\cdots\times d_N}$. The sum of tensors $\Tt$ and $\St$ is a tensor of the same dimension, $\Tt+\St\in\RR^{d_1\times\cdots\times d_N}$, defined element-wise by: $(\Tt+\St)_{i_1,\cdots,i_N} = \Tt_{i_1,\cdots,i_N} + \St_{i_1,\cdots,i_N}$ for all admissible indices $i_k\in[d_k]$ and $k\in[N]$.
\subsection{Permute and Reshape Tensors}
Permutation and reshaping are two fundamental operations on tensors.
\begin{itemize}
    \item \textbf{Permutation} rearranges the indices of a tensor without changing its number of modes. A common example of a permutation is the transpose of a matrix.
    \item  \textbf{Reshaping} combines multiple indices into larger ones or divides a large index into multiple smaller indices, reducing or increasing the total number of indices while preserving the overall size of the tensor.
\end{itemize}
We introduce matricization and vectorization, two common reshaping operations on tensors.
\begin{definition}(Matricization)\label{matricizitation:def}
Let $\Tt\in\RR^{d_1\times d_2\times \cdots\times d_N}$. The mode-$n$ matricization  $\Tt_{(n)}\in\RR^{d_n\times d_1d_2\cdots d_{n-1}d_{n+1}\cdots d_N}$, for $n\in[N]$, is obtained by unfolding $\Tt$ into a matrix by taking all mode-$n$ fibers and stacking them as columns. For example, the matricization of a tensor $\Tt\in\RR^{d_1\times d_2\times d_3}$ along mode-$1$, which is denoted by $\Tt_{(1)}$, is the $d_1\times d_2d_3$ matrix
$$
    \begin{bmatrix}
    \vline & \vline & \vline  & \vline \\
    \Tt_{:,1,1} & {\Tt}_{:,1,2}  & \dots  & {\Tt}_{:,d_2,d_3} \\
    \vline & \vline & \vline  & \vline 
    \end{bmatrix}
\in\RR^{d_1\times d_2d_3}.
$$

The mode-2 and mode-3 matricizations $\Tt_{(2)}\in\RR^{d_2\times d_1d_3}$ and $\Tt_{(3)}\in\RR^{d_3\times d_1d_2}$ are defined similarly.
\end{definition}

Observe that in mode-1 matricization, the two indices corresponding to the second and third modes of $\Tt$ are grouped together to form a new index that ranges from 1 to $d_2d_3$. In the previous definition, we chose to order this new index using lexicographic ordering, e.g., $11,12,13,21,22,23,31,32,33$ if $d_2=d_3=3$. Different orderings of the columns for matricizations are sometimes used~(e.g., reverse lexicographic in \citep{kolda2009tensor}). In general, the specific ordering
used is not important, but it must be consistent across related calculations~(see \citep{kolda2006multilinear} for further details).

In tensor network diagrams, we represent such a grouping of indices by merging the corresponding legs together:
$$\Tt_{(1)} = 
\left(

\in\RR^{d_1\times d_2d_3}.
$$
As we can see, the grouped legs of the shape
$%
$
represents an edge of size $d_2d_3$ in tensor network diagrams.

In general, the notion of matricization can be extended to any subset $I\subset [N]$ of the modes of $\Tt$ which map  modes in $I$  to rows and the modes not in $I$ to columns, resulting in a matrix $\Tt_{(I)}$ of size $\prod_{i\in I} d_i \times \prod_{j\in [N]\setminus I} d_j$. For instance, for a sixth-order tensor $\Tt\in\RR^{d_1\times\dots\times d_6}$ we can group the indices as follows:
\begin{align*}
\Tt_{i_1,\dots,i_6} 
&=
%
\\
&=
(\Tt_{(\{1,2,6\})})_{i_1i_2i_6 , i_3i_4i_5}\in\RR^{d_1d_2d_6\times d_3d_4d_5}.
\end{align*}
While this concept is intuitive, it is clunky to write formally: the indices $i_{r_1},i_{r_2},\cdots, i_{r_{|I|}}$ in $I$ (decreasingly ordered) of a tensor are mapped to the row index $
j = 1+\sum_{l=1}^{|I|}
\left[(i_{r_l}-1)\prod_{m=1}^{l-1}I_{r_m}\right]
$, while the remaining indices $i_{c_1},i_{c_2},\cdots, i_{c_{N-|I|}}$ (again decreasingly ordered) are mapped to the column index $k  = 1+\sum_{n=1}^{N - |I|}
\left[(i_{c_n}-1)\prod_{s=1}^{n-1}I_{r_s}\right]$. Using tensor networks allows us to abstract away this clunkiness by simply representing such 
reshaping operations by grouping legs together.

Likewise, just as a tensor can be converted into a matrix, it can also be converted into a vector. This process is called vectorization:
\begin{definition}(Vectorization)\label{def:vectorization}
Let $\Tt\in\RR^{d_1\times d_2\times\cdots\times d_N}$. The \emph{vectorization} of $\Tt$ denoted as $\vectorize(\Tt)\in\RR^{d_1d_2\cdots d_N}$ is the vector obtained by concatenating its mode-$N$ fibers\footnote{Using the lexicographic order.}. For example, the vectorization of a tensor $\Tt\in\RR^{d_1\times d_2\times d_3}$ is given by:
$$\vectorize(\Tt) =    
[(\Tt_{1,1,:})\ \  (\Tt_{1,2,:})\ \  \cdots \ \ (\Tt_{d_1,d_2,:})]^\ts
=
\begin{tikzpicture}[baseline=\baseline]
    \tikzset{tensor/.style = {minimum size = 0.5cm,shape = circle,thick,draw=black,inner sep = 0pt}, edge/.style = {thick,line width=.4mm},every loop/.style={}}
    \def\y{0}
    \def\x{0}
    \node[tensor,fill=blue!10!white] (A) at (\x,\y){\scalebox{0.85}{$\Tt$}};
    \draw[edge] (\x+0.15,\y+0.2) -- (\x+0.5,\y+0.2) -- (\x+0.5,\y+0.1) -- (\x+1,\y+0.1) node [midway,above]
    {\scalebox{0.7}{\dimcolor{$d_1$}}};
     \draw[edge] (A) -- (\x+1,\y)node [right]
    {\scalebox{0.7}{\dimcolor{$d_2$}}};

    \draw[edge] (\x+0.15,\y-0.2) -- (\x+0.5,\y-0.2)-- (\x+0.5,\y-0.1)-- (\x+1,\y-0.1) node [midway,below]
    {\scalebox{0.7}{\dimcolor{$d_3$}}};
    \end{tikzpicture}\in\RR^{d_1d_2d_3}.
$$
\end{definition}

Vectorization is a flattening operation that converts a tensor of any order into a vector. Similarly to matricization, in tensor networks
the edge of the shape $
\begin{tikzpicture}[baseline=\baseline]
    \tikzset{tensor/.style = {minimum size = 0.5cm,shape = circle,thick,draw=black,inner sep = 0pt}, edge/.style = {thick,line width=.4mm},every loop/.style={}}
    \def\y{0}
    \def\x{0}
    \draw[edge] (\x+0.15,\y+0.2) -- (\x+0.5,\y+0.2) -- (\x+0.5,\y+0.1) -- (\x+1,\y+0.1) node [midway,above]
    {\scalebox{0.7}{\textcolor{gray}{$d_1$}}};
    \draw[edge] (\x+0.15,\y) -- (\x+1,\y)node [right]
    {\scalebox{0.7}{\textcolor{gray}{$d_2$}}};
    \draw[edge] (\x+0.15,\y-0.2) -- (\x+0.5,\y-0.2)-- (\x+0.5,\y-0.1)-- (\x+1,\y-0.1) 
node [midway,below]
    {\scalebox{0.7}{\dimcolor{$d_3$}}};
    \end{tikzpicture}
$ represents an edge of size $d_1d_2d_3$. In general, throughout this manuscript, convergent edges represent an edge whose size is the product of the sizes of all associated edges. Note that the vectorization of a tensor is equal to the vectorization\footnote{To be consistent with the definition of vectorization used in this book, the vectorization of a matrix is given by the concatenation of its rows, which is sometimes referred to as \emph{row-major vectorization}.Note that this is the default vectorizing operation, \textsf{ravel()}, of the \textsf{numpy} package in \textsf{Python}.} of the  mode-$1$ matricization of the tensor, i.e., for $\Tt\in\RR^{d_1\times d_2\cdots \times d_N}$, $\vectorize(\Tt) = \vectorize(\Tt_{(1)})$.

\subsection{Products}\label{sec:product}
Tensors can be combined or \textit{multiplied} using different types of operation. In this section, we provide graphical illustrations of different product operations.
\paragraph{Mode-$n$ Tensor Product}\label{par:mode-n-tensor-product}
The modes $n$ products, where a tensor is multiplied by a matrix along a specific mode, can be seen as a generalization of the matrix products. More precisely, the mode-$n$ product generalizes matrix-matrix multiplication along one particular mode. These include mode-$n$ products between tensors and matrices, as well as tensors and vectors.
\begin{enumerate}
\item \textbf{Mode-n product~(matrix).}\label{subsec:mode-n}
    The mode-$n$ product of a tensor $\Xt\in\RR^{d_1\times\cdots\times d_N}$ with a matrix $\Mb\in\RR^{m\times d_n}$ denoted as
    $\Xt\ttm n \Mb \in \RR^{d_1\times \cdots\times d_{n-1}\times m\times d_{n+1}\times\cdots\times d_N}$ is defined as the contraction of a tensor with a matrix along mode-$n$ of the tensor and mode-2 of the matrix~(column index), i.e., 
    $$
  \Xt\ttm n \Mb
=

\in\RR^{d_1\times  \dots\times d_{n-1}\times m \times d_{n+1}\times \dots\times d_N}.
$$
The operation contracts the tensor's $n$-th mode with the matrix's second mode~(column), replacing the original dimension $d_n$ with a new dimension $m$. For $\Ab\in\RR^{d_n\times n},\Bb\in\RR^{d_m\times m}$ and $\Xt\in\RR^{d_1\times\cdots\times d_N}$ with distinct modes $n\neq m$, the order of multiplication does not matter, i.e.,
$$
\Xt\ttm n \Ab \ttm m \Bb = 
%
= \Xt\ttm m\Bb\ttm n \Ab,~~\text{For}~~n\neq m.
$$
However, if $m = n$ the order of the products is important. In particular, if $\Ab\in\RR^{n\times d_n}$ and $\Bb\in\RR^{p\times n}$ we have $(\Xt\ttm n \Ab) \ttm n \Bb = \Xt\ttm n(\Bb\Ab)$, whereas the product $(\Xt\ttm n \Bb) \ttm n \Ab$ is not valid unless $n=d_n=p$.
The mode-$n$ tensor product can be seen as a generalization of classical matrix multiplication. In particular,
we can recover matrix products with mode-$1$ and mode-$2$ products of an order 2 tensor: 
if $\Ab\in\RR^{m\times n}$, $\Bb\in\RR^{p\times n}$ and $\Cb\in\RR^{d\times m}$, then
\begin{align*}
\Ab\ttm2 \Bb
=

    = \Cb\Ab\in\RR^{d\times n}.
\end{align*}

We show in the following proposition how the mode-$n$ product can be expressed through matricization and matrix product. 
\begin{proposition}
Let $\Xt\in\RR^{d_1\times\dots\times d_N}$ and $\Mb\in\RR^{m\times d_n}$ then 
$(\Xt\ttm n \Mb)_{(n)} = \Mb\Xt_{(n)}$.
\begin{proof}
We show this identity with a simple tensor network diagram for the special case $n=2$ and $N=3$. The extension to the general case is straightforward. We have
\begin{align*}
(\Xt\ttm2\Mb)_{(2)} 
&= 
\left(
    %
\\
&=
\Mb\Xt_{(2)} \in\RR^{m\times d_1d_3}.
\end{align*}
\end{proof}
\end{proposition}
\item\textbf{Mode-n product~(vector).} The mode-$n$ product of a tensor $\Xt\in\RR^{d_1\times\cdots\times d_N}$ with a vector $\vb\in\RR^{d_n}$, is denoted by 
$\Xt \ttm n \vb \in\RR^{d_1\times\cdots \times d_{n-1}\times d_{n+1}\times \cdots\times d_N}$. Note that the result is a tensor of order $N-1$. It can be pictured in a tensor network diagram as 
    $$
\Xt\ttm n \vb
=
%
\in\RR^{d_1\times  \dots\times d_{n-1}\times d_{n+1}\times \dots\times d_N}.
$$
Note that in mode-$n$ vector multiplication, unlike mode-$n$ matrix multiplication, the order of multiplication matters as a vector contraction reduces the tensor order by 1, so subsequent mode indices shift. Let $\Xt\in\RR^{d_1\times\cdots\times d_m\times\cdots\times d_n\times\cdots\times d_N}$ and $\ab\in\RR^{d_n},\bb\in\RR^{d_m}$, then
$
\Xt\ttm n \ab\ttm m\bb \neq
(\Xt \ttm m \bb)\ttm{n}\ab,
$
as mode-$n$ vector multiplication drops the $m$-th dimension~\citep{bader2006algorithm}.
\end{enumerate}

Next, we introduce several important products that are often used in tensor derivations: Kronecker, Khatri-Rao, and Hadamard products.

\paragraph{Kronecker product}\label{subsection-kron}
Let $\Ab\in\RR^{m\times n}$ and $\Bb\in\RR^{p\times q}$. The Kronecker product, $\Ab\kron\Bb\in\RR^{mp\times nq}$ is defined by
\begin{align}
    \Ab\kron\Bb = 
    \begin{bmatrix}
    a_{11}\Bb & a_{12}\Bb & \dots & a_{1n}\Bb \\
    a_{21}\Bb & a_{22}\Bb & \dots & a_{2n}\Bb \\
    \vdots & \vdots & \ddots & \vdots \\
    a_{m1}\Bb & a_{m2}\Bb & \dots & a_{mn}\Bb 
    \end{bmatrix}.
\end{align}

In tensor networks, the Kronecker product is given by the following diagram:
\begin{align}
    \Ab\kron\Bb = 
\begin{tikzpicture}[baseline=-0.5ex]
    \tikzset{tensor/.style = {minimum size = 0.5cm,shape = circle,thick,draw=black,inner sep = 0pt}, edge/.style   = {thick,line width=.4mm}}
    \def\x{0}
    \def\y{0}
    \node[tensor,fill=red!50!white] (A) at (\x,\y+0.3){$\Ab$};
    \node[tensor,fill=blue!50!white] (B) at (\x,\y-0.3){$\Bb$};
    \draw[edge] (\x-0.25,\y+0.3) -- (\x-0.6,\y+0.3) -- (\x-0.6,\y+0.05)   -- (\x-1.2,\y+0.05) node [midway,above] {\scalebox{0.7}{\dimcolor{$m$}}};
    \draw[edge] (\x-0.25,\y-0.3) -- (\x-0.6,\y-0.3) -- (\x-0.6,\y-0.05) -- (\x-1.2,\y-0.05) node [midway,below] {\scalebox{0.7}{\dimcolor{$p$}}};
    \draw[edge] (\x+0.25,\y+0.3) -- (\x+0.6,\y+0.3) -- (\x+0.6,\y+0.05) -- (\x+1.2,\y+0.05) node [midway,above] {\scalebox{0.7}{\dimcolor{$n$}}};
    \draw[edge] (\x+0.25,\y-0.3) -- (\x+0.6,\y-0.3) -- (\x+0.6,\y-0.05) -- (\x+1.2,\y-0.05) node [midway,below] {\scalebox{0.7}{\dimcolor{$q$}}};
\end{tikzpicture}.
\end{align}
As one can see from this diagram, the Kronecker product of two matrices is nothing else than a reshaping of their outer products. Indeed, it is easy to convince oneself that every component $a_{i,j}b_{k,l}$ of the outer product $\Ab \outprod \Bb$ is present in $\Ab \kron \Bb$; while the outer product is a fourth-order tensor, the Kronecker product is one of its matricizations. One could show more precisely that $\Ab \kron \Bb = (\Ab \outprod \Bb)_{(\{1,3\})}$, but we will refrain from doing so and trust the simpler and direct view given by tensor network diagrams: grouping the row legs of $\Ab$ and $\Bb$ on one side, and their column legs on the other,  we obtain the matrix called the Kronecker product of $\Ab$ and $\Bb$:
$$\Ab\outprod \Bb =    

=\Ab \kron \Bb.
$$

More generally, the Kronecker product can be defined for any two tensors with the same order. It is obtained by grouping together and in order each pair of legs 
of the two tensors, e.g., 
$$
\At\kron\Bt = 
   %
    \in\RR^{m_1n_1\times\dots\times m_pn_p}.
$$

\begin{remark}
In the following, we list some useful properties of the Kronecker product.
\begin{enumerate}
    \item The Kronecker product is not commutative, i.e., $\Ab\kron\Bb \neq \Bb\kron\Ab$ in general%
    . 
    \item
    $
    \Ib\kron\Ab = 
    %
    $
    is a block diagonal matrix.
    \item The Kronecker product of two tensors of the same order results in a tensor of the same order, while their outer product produces a tensor with double the order.
    \item 
    By reshaping the Kronecker product $\Ab\kron\Bb$, the outer product $\Ab\outprod\Bb$ can be obtained.
 \item
The Kronecker product of two {identity matrices} is a larger identity matrix, i.e., $\Ib_m \kron \Ib_n = \Ib_{mn}$, which can easily be seen by drawing the tensor network diagram: 
$$
%
.
$$
     \item
     The Kronecker product of two vectors is the vectorization of their outer product: 
     $$\vectorize(\ub \outprod \vb) = \vectorize(\ub  \vb^\ts) =\ub \kron \vb =  %
    .$$
    \item 
   The Kronecker product satisfies the so-called \textit{mixed product} property: $(\Ab\kron \Bb)(\Cb\kron\Db) = \Ab\Cb\kron\Bb\Db$ where $\Ab\in\RR^{m\times n},\Bb\in\RR^{p\times q},\Cb\in\RR^{n\times d}$ and $\Db\in\RR^{q\times k}$.
\begin{proof}
 \begin{align*}
(\Ab\kron\Bb)(\Cb\kron\Db) 
&= 
%
\\
&=
\Ab\Cb\kron\Bb\Db.
    \end{align*}
\end{proof}
As a special case, we have $(\Ab\kron\Ib)(\Ib\kron\Bb)=\Ab\kron\Bb$. 
\item As we can see in the derivation above, it is often useful to reshape
$
    %
$ and vice versa when dealing with identities involving Kronecker products.
\item For $\Ab\in\RR^{m\times m}$ and $\Bb\in\RR^{n\times n},$ we have the equality $\tr(\Ab\kron\Bb)=\tr(\Ab)\tr(\Bb)$, which can easily be proved by the following diagram:
$$
\tr(\Ab\kron\Bb) = 
%
=
\tr(\Ab)\tr(\Bb).
$$

\item The \textbf{Sylvester Identity} is a very useful identity connecting vectorization, matrix product, and Kronecker product: $$\vectorize(\Ab\Xb\Bb) = (\Ab\kron \Bb^\ts)\vectorize(\Xb),$$
where $\Ab\in\RR^{m\times n},\Xb\in\RR^{n\times p}$ and $\Bb\in\RR^{p\times q}.$
\begin{proof}
The proof is rather direct using tensor networks: 
\begin{align*}
\vectorize(\Ab\Xb\Bb) &=
 \vectorize\left(%
=
(\Ab\kron\Bb^\ts)\vectorize(\Xb).
\end{align*}
It is easy in the last diagram to identify the vectorization of $\Xb$ and the Kronecker product of $\Ab$ and $\Bb$, and thus to conclude that this tensor network represents a matrix-vector product between the two. However, when translating the tensor network in classical mathematical notation, some care is required. First, we can see that since $\vectorize(\Xb)$ is of dimension $np$, the Kronecker product has to be between $\Ab$ and $\Bb^\ts$ to obtain a matrix of shape $mq\times np$. Then, we need to know which Kronecker product to use: $\Ab \kron \Bb^\ts$ or $\Bb^\ts \kron \Ab$, which are both of the correct shape. Because in this manuscript we use lexicographic ordering for reshape operations such as $\vectorize$, the correct formulation is  $\Ab \kron \Bb^\ts$. However, if we were to use reverse lexicographic ordering, as in~\citep{kolda2009tensor}, the correct formulation would be $\Bb^\ts \kron \Ab$.
\end{proof}
The \emph{Sylvester identity} is a useful identity to solve  systems of equations of the form $\Ab\Xb+\Xb\Bb = \Cb$.
\end{enumerate}
\end{remark}
There are useful relationships between the $n$-mode product, matricization, and Kronecker products: 
\begin{proposition}
Let $\Tt\in\RR^{d_1\times d_2\times\dots\times d_p}$, $\Ab_1\in\RR^{d_1\times n_1},
\Ab_2\in\RR^{d_2\times n_2},\dots,$
$\Ab_p\in\RR^{d_p\times n_p}$, then 
\begin{align*}(\Tt\ttm1\Ab_1\ttm2\Ab_2\ttm3\Ab_3\ttm4\dots\ttm p\Ab_p)_{(n)}=
\Ab_n\Tt_{(n)}(\Ab_1\kron\dots\kron\Ab_{n-1}\kron\Ab_{n+1}\kron\dots\kron\Ab_p)^\ts.
\end{align*}
\end{proposition}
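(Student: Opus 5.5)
The plan is to argue directly on the tensor network diagram, as in the earlier proof of $(\Xt\ttm n\Mb)_{(n)}=\Mb\Xt_{(n)}$. The key step is to recognize a multi-factor Kronecker product inside the matricized diagram.

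\textbf{Dimension convention.} For the mode-$k$ products to be defined (the mode-$n$ product contracts the tensor's mode with the \emph{column} leg of the matrix), each $\Ab_k$ must have a leg of size $d_k$ contracted against mode $k$ of $\Tt$, and a free leg of size $n_k$. With the stated shapes $\Ab_k\in\RR^{d_k\times n_k}$, this only type-checks (and $\Ab_n\Tt_{(n)}$ only makes sense) if the matrices are read as $n_k\times d_k$, i.e.\ $\Ab_k\in\RR^{n_k\times d_k}$. I will adopt that reading, since in the diagrams transposes are invisible anyway.

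\textbf{Drawing the product.} The tensor $\Tt\ttm1\Ab_1\cdots\ttm p\Ab_p$ is the node $\Tt$ with each of its $p$ legs connected to the $d_k$-leg of a node $\Ab_k$. The free legs of the result are the $n_k$-legs of the $\Ab_k$. Because the mode products act on distinct modes, they commute, as shown earlier for two modes, so the order in which the $\Ab_k$ are attached is irrelevant.

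\textbf{Matricizing along mode $n$.} Following Definition~\ref{matricizitation:def}, I keep the $n_n$-leg of $\Ab_n$ as the row leg. I then group the remaining free legs $n_1,\dots,n_{n-1},n_{n+1},\dots,n_p$, in lexicographic order, into a single column leg. Next I insert a grouping on the internal side as well: the $d_k$-legs of $\Tt$ for $k\neq n$ are merged into one edge of size $\prod_{k\neq n}d_k$ and then split again. Grouping and ungrouping is just a reshape, so this does not change the network.

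\textbf{Reading off the three blocks.} After this regrouping the diagram splits into three pieces.
\begin{itemize}
\item The node $\Tt$ with its mode-$n$ leg separated and its other legs grouped; this is exactly $\Tt_{(n)}$.
\item $\Ab_n$ attached to the row side of $\Tt_{(n)}$.
\item The stack of nodes $\Ab_k$, $k\neq n$, with their $n_k$-legs grouped on one side and their $d_k$-legs grouped on the other.
\end{itemize}
By the diagrammatic definition of the Kronecker product, the third piece is $\Ab_1\kron\cdots\kron\Ab_{n-1}\kron\Ab_{n+1}\kron\cdots\kron\Ab_p$. This uses the definition extended to several factors, which follows from the two-factor picture by associativity of grouping legs. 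This Kronecker product is a $\prod n_k\times\prod d_k$ matrix. It is contracted on its $\prod d_k$ (column) leg with the column leg of $\Tt_{(n)}$, so it appears as a right factor with a transpose. Altogether this gives $\Ab_n\Tt_{(n)}(\bigotimes_{k\neq n}\Ab_k)^\ts$.

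\textbf{Main obstacle.} The diagrammatic steps are immediate. The delicate part is the translation back to formulas, as in the Sylvester identity. I must check that the lexicographic ordering used to build the column index of $\Tt_{(n)}$ matches the ordering of the grouped legs in the Kronecker product, so that the factor order is $\Ab_1\kron\cdots\kron\Ab_p$ (skipping $n$) rather than its reverse. I must also correctly identify the transpose coming from which leg is contracted. I would verify the ordering on the case $p=3$, $n=2$, writing the column index as the pair $(i_1,i_3)$ and confirming the entrywise identity. Under reverse-lexicographic ordering, as in \citep{kolda2009tensor}, the Kronecker factors would appear in reverse order.
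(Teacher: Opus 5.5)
Your proposal is correct and takes the same diagrammatic approach as the paper. The paper draws the network for the special case $p=3$, $n=1$, groups the legs, and reads off $\Ab_1\Tt_{(1)}(\Ab_2\kron\Ab_3)^\ts$; you carry out the same grouping for general $p$ and $n$. Two of your side remarks are also right: under the paper's mode-product convention the shapes must be read as $\Ab_k\in\RR^{n_k\times d_k}$ for the statement to type-check, and with lexicographic ordering the Kronecker factors come out in the order $\Ab_1\kron\cdots\kron\Ab_p$ (skipping $\Ab_n$).
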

\begin{proof}
For $p =3$ and $n=1$,
\begin{align*}
(\Tt\ttm1\Ab_1\ttm2\Ab_2\ttm3\Ab_3)_{(1)}
&= 
 \left(

=
\Ab_1\Tt_{(1)}(\Ab_2\kron\Ab_3)^\ts.
\end{align*}
\end{proof}
\begin{proposition}
    Let $\Tt\in\RR^{d_1\times d_2\times\cdots\times d_p}$  and $\Ab_i\in\RR^{d_i\times n_i}$ for $i\in[p]$. Let $m \in [p]$ and
$\Tt_{([m])}$ denotes reshaping the tensor $\Tt$ in a matrix of size $d_1d_2\cdots d_m\times d_{m+1}...d_p$ by mapping the first $m$ indices to
rows and the remaining ones to columns. Then,
\begin{align*}
    (\Tt\ttm 1\Ab_1\ttm 2 \Ab_2\cdots\ttm p \Ab_p)_{([m])}=(\Ab_1\kron\Ab_2\kron\cdots\kron\Ab_{m})\Tb_{([m])}(\Ab_{m+1}\kron\Ab_{m+2}\kron\cdots\kron\Ab_{p})^\ts.
\end{align*}
This identity can be extended to arbitrary subsets $I\subset [p]$:
$$ (\Tt\ttm 1\Ab_1\ttm 2 \Ab_2\cdots\ttm p \Ab_p)_{(I)}    = \left(\bigotimes_{i\in I} \Ab_i\right)\Tt_{([m])}\left(\bigotimes_{i\in [p] \setminus I} \Ab_i\right)^\ts.$$
\begin{proof}
For $p=6$ and $I = \{1,2,3\}$,
\begin{align*}
\left(
.
\end{align*}
\end{proof}
\end{proposition}

\paragraph{Khatri-Rao product}
The Khatri-Rao product is the column-wise Kronecker product~\citep{smilde2005multi,bro1998multi}. Let $\Ab\in\RR^{m\times R}$ and $\Bb\in\RR^{n\times R}$. The Khatri-Rao product of $\Ab$ and $\Bb$, denoted as $\Ab\krao\Bb\in\RR^{mn\times R}$, is defined by
\begin{align*}
    \def\baseline{-0.5ex}
    \Ab\krao\Bb =
    %
\hspace{0.5cm},
\end{align*}
where $\ab_1,\dots,\ab_R\in\RR^{m}$ are the columns of $\Ab$ and $\bb_1,\dots,\bb_R\in\RR^{n}$ are the columns of $\Bb$.  In the corresponding tensor network diagram, the copy tensor enforces that the resulting matrix only contains the Kronecker product of \emph{matching} columns of $\Ab$ and $\Bb$. 
\begin{remark}
Some of the Khatri-Rao product properties are listed below:
\begin{enumerate}
\item Like the Kronecker product, the Khatri-Rao product is not commutative, i.e., $\Ab\krao\Bb\neq \Bb\krao\Ab$ in general.
\item The Khatri-Rao product is associative, i.e., $\Ab\krao(\Bb\krao\Cb)=(\Ab\krao\Bb)\krao\Cb$, where $\Ab\in\RR^{m\times R},\Bb\in\RR^{n\times R}$ and $\Cb\in\RR^{s\times R}$.
\item The columns of $\Ab\krao\Bb$ form a subset of the columns of the Kronecker product $\Ab\kron \Bb$.
\end{enumerate}
\end{remark}

\paragraph{Hadamard product}
Let $\Ab\in\RR^{m\times n}$ and $\Bb\in\RR^{m\times n}$ be matrices of the same shape. The Hadamard (or component-wise) product of $\Ab$ and $\Bb$ is the matrix  $\Ab\hadam\Bb\in\RR^{m\times n}$ defined element-wise by 
$$(\Ab\hadam\Bb)_{ij} = \Ab_{ij}\Bb_{ij}.$$
In tensor network diagrams, the Hadamard product can easily be represented using copy tensors:
\begin{equation*}
\Ab\hadam\Bb =
.
\end{equation*}
More generally, for any $\Ab_1\in\RR^{m\times n},\dots,\Ab_d\in\RR^{m\times n}$ we have 
\begin{align*}
\Ab_1\hadam\Ab_2\dots\hadam\Ab_{d} 
=
    %
.
\end{align*}
\begin{remark}
We list here some useful properties of the Hadamard product.
\begin{itemize}
    \item The Hadamard product of a tensor $\Tt$ with entries in $\{0,1\}$ with itself is $\Tt$. This is in particular true for the copy tensor:
    
    Let $\Tt\in\RR^{d_1\times d_2\times d_3}$ be a 3rd way copy tensor. Then the Hadamard product $\Tt\hadam\Tt$ in tensor networks is:
    $$\Tt\hadam\Tt=
    %
$$
    
    \item Using Proposition~\ref{prop:copy.tensor}~(items 4 and 5), we can see that 
    $\vb\hadam\ub=$
         %
    $=\text{diag}(\vb)\ub$
\end{itemize}
\end{remark}
To conclude this chapter, we introduce several useful results in the next section.

\subsection{Some Useful Properties of Tensor Networks}\label{sec:useful-TNs}
In this section, we list several useful results about tensor networks. 

\paragraph{Bounding the Rank of Matricizations.}
Any tensor network structure induces a collection of upper bounds on the matricizations of the underlying tensor. More precisely, for an arbitrary tensor network, the rank of a given matricization is bounded by the weight of any cut separating the row legs from the column legs in the graph. Here, the weight of a cut refers to the product of the dimensions of the edges that form the cut. This is formalized in the following theorem. 

\begin{theorem}(Rank of matricization of TN)\label{thm: matricization-rank}
Let $\T \in \Rbb^{d_1\times d_2\times \dots \times d_p}$ be given as a tensor network, and let $I \subset [p]$. Then, for any cut $\mathcal{C}$ separating the legs corresponding to indices in $I$ from the ones in $[p] \setminus I$ in the TN,
$$\rank(\T_{(I)}) \leq \prod_{e \in  \mathrm{cutset}(\mathcal C)} \mathrm{dim}(e),$$
where $\mathrm{cutset}(\mathcal{C})$ is the cut-set\footnote{Formally, the TN is a graph $G=(V,E)$. Each dimension of $\Tt$ is associated with one vertex in $V$: the one to which the corresponding dangling leg is attached. Any cut of this graph $\mathcal C$ is a partition of $V$: $\mathcal{C} = (V_1, V_2)$. In the theorem, we consider cuts that separate indices in $I$ from the other ones, i.e. such that $I \subseteq V_1$~(and thus $([p]\setminus I) \subseteq V_2$ since $(V_1,V_2)$ is a partition of $V$). The cut-set of $\mathcal{C}$ is the set of edges that have one endpoint in $V_1$ and one in $V_2$: $\mathrm{cutset}(\mathcal{C}) = \{(u,v) \in E \mid u\in V_1, v\in V_2 \}$. } %
of $\mathcal C$ and $\mathrm{dim}(e)$ is the dimension of the edge $e$ in the tensor network.
\end{theorem}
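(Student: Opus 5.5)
The plan is to contract the network separately on each side of the cut, so that $\T_{(I)}$ appears as a product of two matrices whose inner dimension is the weight of the cut. Let the network be the graph $G=(V,E)$ and the cut $\mathcal C=(V_1,V_2)$, with every dangling leg of an index in $I$ attached to a node in $V_1$ and every other dangling leg attached to a node in $V_2$. Write $\mathrm{cutset}(\mathcal C)=\{e_1,\dots,e_k\}$ and $R=\prod_{j=1}^k \mathrm{dim}(e_j)$.

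\textbf{Step 1: contract each side.} Contract all edges internal to $V_1$, i.e.\ edges with both endpoints in $V_1$. Merging the nodes of $V_1$ this way yields a single tensor $\At$. Its free legs are the dangling legs indexed by $I$ together with one leg for each cut edge $e_1,\dots,e_k$. In the same way, contracting $V_2$ yields a tensor $\Bt$. Its legs are the dangling legs indexed by $[p]\setminus I$ together with the same cut legs $e_1,\dots,e_k$. Contracting $\At$ with $\Bt$ along these $k$ shared legs reproduces $\T$. This holds because a tensor network's value is independent of the order in which its edges are summed: the full contraction is one multi-index sum over all edges, and we have only split that sum into internal sums and cut sums. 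A hyperedge or copy tensor is treated as an ordinary node, so it causes no difficulty.

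\textbf{Step 2: matricize.} Group the $I$-legs of $\At$ into rows and its cut legs into columns, giving $\Ab=\At_{(I)}\in\RR^{\prod_{i\in I}d_i\times R}$. Group the cut legs of $\Bt$ into rows and its remaining legs into columns, giving $\Bb\in\RR^{R\times\prod_{i\notin I}d_i}$. The grouped cut legs form a single edge of size $R$, by the convention that convergent edges have size equal to the product of their sizes. The diagram then reads exactly like the matrix product diagram of Section~\ref{chapter:basics}, so $\T_{(I)}=\Ab\Bb$, up to the consistent lexicographic ordering of grouped indices. Permuting the rows and columns of a matrix does not change its rank, so the ordering is harmless.

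\textbf{Step 3: conclude.} Since $\rank(\Ab\Bb)\le\min(\rank\Ab,\rank\Bb)\le R$, we obtain $\rank(\T_{(I)})\le\prod_{e\in\mathrm{cutset}(\mathcal C)}\mathrm{dim}(e)$.

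\textbf{Expected obstacle.} The main obstacle is not analytic but one of bookkeeping, namely justifying Step 1 rigorously. One has to verify that the full network contraction factors as a sum over cut-edge indices of products of a $V_1$-part and a $V_2$-part. I would write the value of the network entrywise as a sum over all edge indices of the product of node entries. I would then split the node product into nodes in $V_1$ and nodes in $V_2$, and the edge indices into internal-to-$V_1$, internal-to-$V_2$ and cut indices. Fubini for finite sums then gives the factorization directly.

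Two degenerate cases need checking. If several cut edges join the same pair of nodes, they simply become separate legs, so nothing changes. If $V_1$ or $V_2$ contains no dangling legs, then one side is a vector, and the bound still holds.
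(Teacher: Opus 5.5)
The paper gives no proof of this theorem. It states the result and then only illustrates it with examples: the bound $\rank(\Ab\Bb)\le R$, a four-node network, and a Tucker unfolding. The reasoning those examples rely on is exactly yours: merge the nodes on each side of the cut, group the cut legs into one edge of size $\prod_{e}\mathrm{dim}(e)$, and read off a matrix product whose inner dimension is that size. Your argument is correct and complete, and the entrywise splitting of the multi-index sum in Step~1 supplies the rigor the paper leaves implicit.

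Two small remarks. First, you rightly assume both that the $I$-legs attach to $V_1$ \emph{and} that all other legs attach to $V_2$. The paper's footnote suggests the second condition follows from $I\subseteq V_1$, but it does not: putting a vertex that carries a non-$I$ leg into $V_1$ can make the bound false.

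Second, because you treat copy tensors as ordinary nodes, the bound depends on how the network is drawn. For the CP corollary, the paper first splits the copy tensor (Proposition~\ref{prop:copy.tensor}, item 3) so that a single edge of size $R$ crosses the cut.
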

Let us illustrate this theorem with some examples. First, we can obtain the classical bound on the rank of a product of matrices as a particular case. Let   $\Ab\in\Rbb^{m\times R}$ and $\Bb\in\RR^{R\times n}$. For the matrix~(i.e., second-order tensor) $\mat{AB} \in \Rbb^{m \times n}$, the theorem gives us the bound

$$\rank(\Ab\Bb)=\rank
\left(
\right)\leq R.$$

Now consider the following tensor network representation of a $d_1\times d_2$ matrix:
$$
%
$$
where $\Tt\in\RR^{d_1\times R_1\times R_2\times R_3}$, $\At\in\RR^{R_1\times R_2\times R_3\times R_4}$, $\Bb\in\RR^{R_4\times R_5}$ and $\Sb\in\RR^{R_5\times d_2}$.
Theorem~\ref{thm: matricization-rank} can for example be used to obtain the upper bounds
$$
\rank\left(%
\right)\leq R_5.
$$
As a final example, let $\Gt\in\RR^{R_1\times\cdots\times R_4}$, $\Ab_1\in\RR^{R_1\times d_1}, \Ab_2\in\RR^{R_2\times d_2},\Ab_3\in\RR^{R_3\times d_3}$ and $\Ab_4\in\RR^{R_4\times d_4}$ and consider the tensor $\T = \Gt \times_1 \Ab_1\times_2 \Ab_2\times_3 \Ab_3\times_4 \Ab_4$. Theorem~\ref{thm: matricization-rank} can be used for example to obtain an upper bound on the rank of the first matricization of $\Tt$:
$$
\rank(\Tt_{(1)}) = 
\rank\left(
%
\right)\leq R_1.
$$
\paragraph{TN Edges and Sum Tensors.}Let $\T$ be a tensor given as a TN. Any edge in which the TN corresponds to one way to express the tensor as a sum of tensors of the same size as $\T$. This follows directly from the definition of edges in TNs. An edge represents a contraction, which is a sum. We illustrate this concept with examples. First, consider a rank-$R$ factorization of an $m\times n$ matrix. The edge in the TN corresponds to expressing the matrix as a sum of $R$ (rank-one) matrices:  
$$
%
= \sum_{r=1}^R \Ab_{:,r} \Bb_{r,:} = \sum_{r=1}^R \ab_r \outprod \bb_r,
$$
where $\ab_r$ are the columns of $\Ab\in\RR^{m\times R}$ and $\bb_r$ are the rows of $\Bb\in\RR^{R\times n}$.

Similarly, the unique edge in the TN representing the trace of a matrix corresponds to expressing the TN~(which is a scalar) as a sum of scalars~(the diagonal entries of the matrix), let $\Ab\in\RR^{d\times d}$:
$$%
\right)=\sum_{i=1}^d\Ab_{ii}.
    $$
Consider now a more interesting example:
$$%
.$$
The edge in this TN corresponds to a way to express the matrix represented by the TN as a sum of matrices:
$$%
    =
    \sum_{r=1}^R\Ab_r\kron\Bb_r.$$
We recognize here that this TN corresponds to a sum of Kronecker products! 
\paragraph{Tensors, Multilinear Maps and Polynomials.}
We can think of tensors as \emph{representations of multi-linear maps}, in exactly the same sense that matrices represent linear maps. Recall that a matrix $\Ab\in\RR^{m\times n}$ represents a linear map $\RR^n\to\RR^m$ via $\xb\mapsto \Ab\xb$. In tensor network notation, this is simply
$$
\yb = \Ab\xb
=
%
\in\RR^{m}.
$$
Linearity means that $\Ab(\alpha \xb+\beta \zb)=\alpha \Ab\xb+\beta \Ab\zb$ for all $\alpha,\beta\in\RR$ and $\xb,\zb\in\RR^n$. A tensor is the natural generalization when the map is linear \emph{in several arguments separately}. Concretely, an order-$N$ tensor $\Tt\in\RR^{d_1\times\cdots\times d_N}$ defines a map that takes $N$ input vectors and returns a scalar by contracting along all modes:
$$
f(\xb^{(1)},\dots,\xb^{(N)})
:= \langle \Tt,\ \xb^{(1)}\outprod \cdots \outprod \xb^{(N)}\rangle
=
\sum_{i_1=1}^{d_1}\cdots\sum_{i_N=1}^{d_N}\Tt_{i_1\cdots i_N}\,\xb^{(1)}_{i_1}\cdots \xb^{(N)}_{i_N}.
$$
This map is \emph{multi-linear}: it is linear in each $\xb^{(k)}$ when the other vectors are fixed. In tensor network diagrams, this is simply
\[
f(\ab,\bb,\dots,\zb)
=

\in\RR,
\]
where $\ab, \bb,\cdots,\zb\in\RR^d$.
If we leave one leg open, we obtain a \emph{vector-valued} multi-linear map. For instance, if $\Tt\in\RR^{d_1\times\cdots\times d_N \times p}$, contracting all modes except the last defines a multi-linear map whose output is a vector of dimension $p$:
$$
f(\ab,\bb,\dots,\zb)
=
%
\in\RR^p.
$$

\vspace{0.2cm}
\noindent
This immediately connects the tensors to \emph{multivariate polynomials}. The scalar map
\[
f(\xb^{(1)},\dots,\xb^{(N)})=\sum_{i_1,\dots,i_N}\Tt_{i_1\cdots i_N}\,\xb^{(1)}_{i_1}\cdots \xb^{(N)}_{i_N},
\]
is a polynomial in the entries of the vectors and is \emph{multi-linear}: every variable appears with degree at most one (when viewed in each block of variables $\xb^{(k)}$). In the important special case where all inputs are the \emph{same} vector $\xb\in\RR^{d}$ and $\Tt\in\RR^{d\times\cdots\times d}$ has $N$ modes, the  contraction
\[
\begin{tikzpicture}[baseline=2ex]
    \tikzset{tensor/.style = {minimum size = 0.5cm,shape = circle,thick,draw=black,inner sep = 0pt},
             edge/.style   = {thick,line width=.4mm},
             every loop/.style={}}
    \def\x{0}\def\y{0}

    \node[tensor,fill=green!50!red!50!white] (T) at (\x,\y+0.8){$\Tt$};

    \node[tensor,fill=blue!20!white] (a) at (\x-1.2,\y){$\xb$};
    \node[tensor,fill=blue!20!white] (b) at (\x-0.5,\y){$\xb$};
    \node[draw=none] (dots) at (\x+0.5,\y){$\cdots$};
    \node[tensor,fill=blue!20!white] (z) at (\x+1.2,\y){$\xb$};

    \draw[edge] (T) -- (a) node [midway,left]{\scalebox{0.7}{\dimcolor{$d\ $}}};
    \draw[edge] (T) -- (b) node [near end,right]{\scalebox{0.7}{\dimcolor{$d$}}};
    \draw[edge] (T) -- (z) node [midway,right]{\scalebox{0.7}{\dimcolor{$\ d$}}};
\end{tikzpicture}
= \Tt \ttm{1}\xb \ttm{2}\xb \cdots \ttm{N}\xb
=
\sum_{i_1,\dots,i_N}\Tt_{i_1\cdots i_N}\,\xb_{i_1}\cdots \xb_{i_N} \in \RR.
\]
is a \emph{homogeneous} multivariate polynomial of degree $N$ (every monomial has total degree exactly $N$). The tensor entries are precisely the coefficients of this polynomial in the monomial basis $(\xb_{i_1}\cdots \xb_{i_N})_{i_1,\cdots,i_N}$.

If instead we contract an order-$(N+1)$ tensor $\Tt\in\RR^{ d\times\cdots\times d \times p}$ with $N$ copies of $\xb\in\RR^d$, leaving the last leg free, we obtain a vector-valued homogeneous polynomial map $\RR^d\to\RR^m$:
\[
p(\xb) = \Tt \ttm{1}\xb \ttm{2}\xb\cdots \ttm{N}\xb \in\RR^{p}.
\]

\vspace{0.2cm}
\noindent
Finally, \emph{non-homogeneous} polynomials can be obtained using a simple trick: \emph{append a constant $1$ to the input vector}. Let $\xb\in\RR^d$ and define the augmented vector $\tilde{\xb} =\begin{bmatrix}\xb\\1\end{bmatrix}\in\RR^{d+1}$. If $\widetilde{\Tt}\in\RR^{(d+1)\times\cdots\times(d+1)}$ is an order-$N$ tensor, then
\[
\tilde{p}(\xb)
:=
\widetilde{\Tt} \ttm{1}\tilde{\xb}\ttm{2}\tilde{\xb}\cdots\ttm{N}\tilde{\xb},
\]
is a polynomial in $\xb$ of degree at most $N$ (some factors can pick the last coordinate, which is the constant $1$, thereby lowering the degree of the corresponding monomials). In other words, a single homogeneous tensor polynomial in the augmented variables $(x_1,\dots,x_d,1)$ compactly represents a general multivariate polynomial in $(x_1,\dots,x_d)$. This viewpoint is especially useful in practice: it lets us treat bias terms and lower-degree interactions using the same contraction machinery as the homogeneous case, simply by adding one extra dimension to each mode and fixing it to $1$.  

\section{Tensor Decompositions}\label{chapter:tensor-decomposition}
Working with high-order tensors is computationally expensive because the number of elements grows exponentially with the tensor's order. Tensor decompositions have emerged as powerful and efficient tools to address this issue.
Similar to matrix factorizations, tensor decompositions break down a high-order tensor into smaller components with lower-order and fewer entries, making computations more tractable. However, the notion of the rank of a matrix can be extended in various ways for tensors, each of them associated with a different factorization/decomposition model.
In this chapter, we introduce the most well-known tensor decomposition models.
\subsection{CANDECOMP/PARAFAC~(CP) Decomposition}\label{sec:cp-decomposition}
The CP decomposition~\citep{hitchcock1927expression} of an $N$-th order tensor $\At\in\RR^{d_1\times\cdots\times d_N}$  decomposes $\At$ as the sum of a finite number of rank-one tensors. Equivalently, it is a linear combination of $R$ rank-one tensors where $R$ is called the  rank of the decomposition: 
$$\At = \sum_{r=1}^R \ab^{(r)}_1\outprod\dots\outprod\ab^{(r)}_N ,$$
where $\ab^{(1)}_n,\dots,\ab^{(R)}_n\in\RR^{d_n}$ for each $n\in [N]$.

By grouping all the vectors $\ab^{(r)}_n$ in factor matrices,
\[\Ab_1 = {
}{\in\RR^{d_N\times R}}, \]
the CP decomposition is concisely noted as $\At = \CP{\Ab_1,\cdots,\Ab_N}$.

In tensor networks, a CP decomposition $\At = \CP{\Ab,\Bb,\Cb}$ is represented by
$$
%
,
$$
where the black dot is the third-order copy tensor introduced in Section~\ref{sec:copy-tensor}. Indeed, we have
\begin{align*}
    \left(%
\right)_{ijk} 
&= \sum_{r_1=1}^R\sum_{r_2=1}^R\sum_{r_3=1}^R\delta_{r_1r_2r_3}\Ab_{ir_1}\Bb_{jr_2}\Cb_{kr_3}
=
\sum_{r=1}^R\Ab_{ir}\Bb_{jr}\Cb_{kr}\\
&=
\sum_{r=1}^R(\ab_r)_i(\bb_r)_j(\cbb_r)_k 
=
\sum_{r=1}^R(\ab_r\outprod\bb_r\outprod\cbb_r)_{ijk}\\
&=
\left(\sum_{r=1}^R\ab_r\outprod\bb_r\outprod\cbb_r\right)_{ijk}
\end{align*}
\paragraph{CP Rank of a Tensor} 
The most fundamental and oldest concept of the tensor rank is the CP rank, first introduced by~\citep{hitchcock1927expression}. The CP rank of a tensor is defined as the minimum number of rank-one tensors required to express the tensor as their sum.\footnote{Note the subtle difference in our terminology between the rank of a CP decomposition and the CP rank of a tensor:  the rank of a CP decomposition is the number of summands in a decomposition of the form $\At = \sum_{r=1}^R \ab^{(r)}_1\outprod\dots\outprod\ab^{(r)}_N$, whereas the CP rank of a tensor $\At$ is the smallest $R$ for which such a decomposition exists. Consequently, if $\At$ has a rank-$R$ CP decomposition, its CP rank is at most $R$, but could be lower.}
Although this definition is similar to that of matrix rank, the properties of tensor rank differ significantly. From a computational point of view, a key difference is that, unlike matrix rank, there is no polynomial-time algorithm to determine a tensor’s CP rank. In fact, computing the rank of a tensor is an NP-hard problem~\citep{hillar2013most}.
There are several variants of the tensor rank, each associated with a specific tensor decomposition. As we progress through this chapter, we will introduce some of these different types of ranks.

\begin{remark}
\newcommand*{\horzbar}{\rule[.5ex]{2.5ex}{0.5pt}}
We list here some interesting properties of the CP rank and the CP decomposition. 
\begin{enumerate}
\item 
From Theorem~\ref{thm: matricization-rank} and Proposition~\ref{prop:copy.tensor}~(item 3), one can show that if a tensor $\At$ admits a rank-$R$ CP decomposition, then all its matricizations have rank upper bounded by $R$:
\begin{proposition}
    If $\At = \CP{\Ab_1,\cdots, \Ab_N}$ is a rank-$R$ CP decomposition of $\At$, then $\rank(\At_{(I)}) \leq R$ for any $I\subset [N]$.
\end{proposition}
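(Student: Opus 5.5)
The plan is to apply Theorem~\ref{thm: matricization-rank} to the tensor network of the CP decomposition. First I write $\At = \CP{\Ab_1,\cdots,\Ab_N}$ as a tensor network: a single $N$-th order copy tensor of dimension $R$ in the center, with each factor matrix $\Ab_n\in\RR^{d_n\times R}$ attached to it by an edge of dimension $R$, and the dangling leg of dimension $d_n$ hanging off $\Ab_n$. This is the direct generalization of the third-order diagram computed entrywise above, since contracting all $R$-legs against $\delta_{r_1\cdots r_N}$ forces $r_1=\cdots=r_N=r$ and yields $\sum_{r}\ab^{(r)}_1\outprod\cdots\outprod\ab^{(r)}_N$.

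Next, given $I\subset[N]$, I build an explicit cut. The obstruction is that the central copy tensor is attached to every factor, so no single edge separates the $I$ legs from the rest. Proposition~\ref{prop:copy.tensor} (item~3) removes this obstruction. Read in reverse, it says that the single $N$-th order copy tensor equals the contraction of two copy tensors joined by one edge of dimension $R$. One of them carries the legs going to $\{\Ab_i\}_{i\in I}$ and has order $|I|+1$. The other carries the legs going to $\{\Ab_j\}_{j\notin I}$ and has order $N-|I|+1$. Merging two copy tensors along a contracted leg gives again a copy tensor, so this split does not change the tensor represented.

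In this rewritten network, I take $V_1$ to be the first copy tensor together with the $\Ab_i$, $i\in I$, and $V_2$ to be the rest. The only edge crossing the cut is the new bridge of dimension $R$, so Theorem~\ref{thm: matricization-rank} gives $\rank(\At_{(I)})\le R$. The degenerate cases $I=\emptyset$ or $I=[N]$ are trivial, because the matricization is then a vector and has rank at most $1\le R$; alternatively, the copy tensor of order $1$ is the all-ones vector, and the argument still goes through.

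The main obstacle is only the justification of the splitting step, namely that the reverse reading of item~3 is valid for arbitrary orders. If I wanted to avoid it, I would instead note directly that $\At_{(I)}=\sum_{r=1}^R \vectorize\big(\bigcirc_{i\in I}\ab_i^{(r)}\big)\,\vectorize\big(\bigcirc_{j\notin I}\ab_j^{(r)}\big)^\ts$. This is a sum of $R$ rank-one matrices, which gives the same bound and serves as a sanity check.
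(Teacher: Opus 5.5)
Your proposal is correct and follows essentially the paper's argument. The paper also uses Proposition~\ref{prop:copy.tensor} (item~3) to split the central copy tensor into two copy tensors joined by a single edge of dimension $R$, and then applies Theorem~\ref{thm: matricization-rank} to the cut through that edge; your degenerate-case remark and the explicit rank-one sum are valid extras that the paper omits.
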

The following tensor network illustrates this result for the matricization of a fourth-order tensor $\At = \CP{\Ab_1,\Ab_2,\Ab_3, \Ab_N}$ along its first two modes.
\begin{align*}
    \left(
.
\end{align*}
\item  The CP rank of a tensor $\At \in \RR^{d_1\times \cdots \times d_N}$ can easily be upper bounded as~\citep{kolda2009tensor}
$$\cprank(\At)\leq \min_{n\in [N]} \prod_{i\neq n} d_i.$$
\item For second-order tensors $\Ab\in\RR^{d_1\times d_2}$, the CP rank corresponds exactly to the classical notion of the rank of a matrix.
\item The CP decomposition $\At = \CP{\Ab,\Bb,\Cb}$ can be expressed using the Kronecker delta 
$$\At_{i,j,k} = \sum_{r_1,r_2,r_3}\delta_{r_1,r_2,r_3}\Ab_{i,r_1}\Bb_{j,r_2}\Cb_{k,r_3},$$
as well as with mode-$n$ products (see~\ref{par:mode-n-tensor-product}) 
$$\At = \It\ttm{1}\Ab\ttm{2}\Bb\ttm{3}\Cb$$
where $\It$ is the third-order copy tensor. 
\item The rank of the second-order tensors (matrices), over the fields $\RR$ and $\CC$ is the same. However, for higher-order tensors ($N$-th order tensors with $N\geq3$) the rank can vary depending on the decomposition field~\citep{kruskal1989rank}.
\item The number of parameters of a CP decomposition of an $N$-th order $d$-dimensional tensors ($d_1 = \dots = d_N = d$) is $NdR$ instead of $d^N$ for the full tensor. If $R$ is small, the number of parameters can be significantly reduced. 
\end{enumerate}
\end{remark}
The following proposition shows how the mode-$n$ matricization of the CP decomposition of a tensor can be expressed using the Khatri-Rao product.
\begin{proposition}
Let $\At\in\RR^{d_1\times d_2\times\dots\times d_N}$. If $\At=\CP{\Ab_1,\Ab_2,\dots,\Ab_N}$, then 
$$\At_{(n)}=\Ab_n\left(\Ab_1\krao\dots\krao\Ab_{(n-1)}\krao\Ab_{(n+1)}\krao\dots\krao\Ab_N\right)^\ts.$$
\begin{proof}
For $N=3$ and $n=1$,
\begin{align*}
\At_{(1)}
&=
\left(

=
\Ab_1(\Ab_2\krao\Ab_3)^\ts.
\end{align*}
\end{proof}
\end{proposition}

\subsection{Tucker Decomposition}
The Tucker decomposition \cite{tucker1966some} factorizes a $N$-th order tensor
into smaller tensor and factor matrices. In this decomposition, the smaller
tensor is called the \emph{core tensor}. The Tucker
decomposition consists of $N$ mode-$n$ products (see, e.g., Section~\ref{sec:product})
between the core tensor and the factor matrices.

Let $\Tt \in \RR^{d_1 \times \dots \times d_N}$. A Tucker decomposition of $\Tt$
is given by
\[
    \Tt
    = \Gt \times_1 \Ub_1 \times_2 \Ub_2\times_3 \dots \times_N \Ub_N,
\]
where $\Gt \in \RR^{R_1 \times \dots \times R_N}$ is the core tensor and
$\Ub_i \in \RR^{d_i \times R_i}$ for $i \in [N]$ are the factor matrices.
The $N$-tuple $(R_1,\dots,R_N)$ is called the \emph{rank} of Tucker
decomposition.

The Tucker decomposition of a fourth-order tensor $\Tt\in\RR^{d_1\times\cdots \times d_4}$ can be illustrated using tensor networks notation as follows.
$$
.
$$

It is not difficult to show that the factor matrices
$\Ub_i \in \RR^{d_i \times R_i}$ can always be chosen to be unitary (see
item~\ref{item:Tucker_orth} in Remark~\ref{rem:tuckers} below).

\paragraph{Multi-linear rank} The \emph{multi-linear (or Tucker) rank}\footnote{Note the distinction between the rank $(R_1,\dots,R_n)$ of a Tucker \emph{decomposition},
given by the size of the core tensor $\Gt$ appearing in
$\Tt = \Gt \times_1 \Ub_1 \times_2 \dots \times_N \Ub_N$, and the
(Tucker) multi-linear rank of a tensor $\Tt$, which is defined as the
minimal such tuple.} of a tensor $\Tt$ is the smallest tuple $(R_1,\dots,R_N)$
for which there exists a rank-$(R_1,\dots,R_N)$ decomposition of $\Tt$. Since there is only 
a partial order on tuples of integers, one may ask whether this notion of rank is well defined. 
The following proposition shows that it is. 

\begin{proposition}
Let $\Tt = \Gt \times_1 \Ub_1 \times_2 \Ub_2\times_3\dots \times_N \Ub_N$ and 
$\Tt = \tilde\Gt \times_1 \tilde\Ub_1 \times_2 \tilde\Ub_2\times_3 \dots \times_N \tilde\Ub_N$ be two 
Tucker decompositions of the same tensor $\Tt$ of rank-$(R_1,R_2,\dots, R_n)$ and 
$(\tilde R_1,\tilde R_2,\dots,\tilde R_n)$, respectively. Then $\Tt$ admits a Tucker decomposition of rank-$(\min(R_1,\tilde R_1), \min(R_2,\tilde R_2), \dots, \min(R_N,\tilde R_N))$. 
\end{proposition}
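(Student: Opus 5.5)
The plan is to work one mode at a time, reducing the problem to a statement about the column spaces of the matricizations $\Tt_{(n)}$. The key observation is the earlier proposition relating mode-$n$ products to matricizations. It gives
$$\Tt_{(n)} = \Ub_n \Gt_{(n)} (\Ub_1\kron\cdots\kron\Ub_{n-1}\kron\Ub_{n+1}\kron\cdots\kron\Ub_N)^\ts .$$
Hence the column space of $\Tt_{(n)}$ is contained in the column space of $\Ub_n$. In tensor network terms this is Theorem~\ref{thm: matricization-rank} with the cut placed on the $R_n$ edge between $\Ub_n$ and $\Gt$, sharpened to a statement about column spaces rather than just ranks. The same holds for the second decomposition, so the column space of $\Tt_{(n)}$ lies in $\mathrm{col}(\Ub_n)\cap\mathrm{col}(\tilde\Ub_n)$. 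Its dimension $r_n:=\rank(\Tt_{(n)})$ therefore satisfies $r_n\le\min(R_n,\tilde R_n)$.

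The construction then avoids intersecting subspaces altogether. For each $n$, I would take $\Qb_n\in\RR^{d_n\times r_n}$ to be a left-orthogonal matrix whose columns form an orthonormal basis of $\mathrm{col}(\Tt_{(n)})$, obtained for instance from the QR decomposition or SVD of $\Tt_{(n)}$. I would then define the core
$$\St = \Tt\times_1\Qb_1^\ts\times_2\cdots\times_N\Qb_N^\ts\in\RR^{r_1\times\cdots\times r_N},$$
and claim that $\Tt=\St\times_1\Qb_1\times_2\cdots\times_N\Qb_N$.

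To verify the claim, note that $\Qb_n\Qb_n^\ts$ is the orthogonal projector onto $\mathrm{col}(\Tt_{(n)})$, so $\Tt\times_n(\Qb_n\Qb_n^\ts)=\Tt$. This follows because $(\Tt\times_n\Qb_n\Qb_n^\ts)_{(n)}=\Qb_n\Qb_n^\ts\Tt_{(n)}=\Tt_{(n)}$, using the proposition $(\Xt\ttm n\Mb)_{(n)}=\Mb\Xt_{(n)}$. Applying this successively for $n=1,\dots,N$ requires that projecting along mode $n$ does not disturb the identities already established for the other modes. This is guaranteed by the commutativity of mode products along distinct modes and by $(\Xt\ttm n\Ab)\ttm n\Bb=\Xt\ttm n(\Bb\Ab)$. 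Diagrammatically, one inserts the projector $\Qb_n\Qb_n^\ts$ on every leg and splits each projector node into $\Qb_n$ and $\Qb_n^\ts$, absorbing the $\Qb_n^\ts$ factors into the core.

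This produces a Tucker decomposition of rank $(r_1,\dots,r_N)$, which is componentwise at most $(\min(R_n,\tilde R_n))_n$. To obtain exactly the stated rank, I would pad each $\Qb_n$ with zero columns and the core $\St$ with zero slices, since Tucker decompositions of larger rank are obtained trivially in this way. The main point to argue carefully is the first step, the inclusion of column spaces rather than the mere rank bound. It follows directly from the displayed factorization of $\Tt_{(n)}$ as $\Ub_n$ times another matrix.
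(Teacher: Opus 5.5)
Your proof is correct and follows the same route as the paper. You bound $\rank(\Tt_{(n)})\le\min(R_n,\tilde R_n)$ by cutting each decomposition at its $R_n$ (resp.\ $\tilde R_n$) edge, then realize a Tucker decomposition of rank $(\rank(\Tt_{(n)}))_n$ by projecting each mode onto the column space of $\Tt_{(n)}$; this is the HOSVD step the paper's proof leaves implicit, and your zero-padding gives exactly the stated ranks. The column-space inclusion you stress is more than you need: $\Qb_n$ is built from $\Tt_{(n)}$ itself, so the plain bound $\rank(\Tt_{(n)})\le\rank(\Ub_n)\le R_n$ suffices.
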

\begin{proof}
Suppose $N=3$. Applying the mode-$1$ unfolding and Theorem~\ref{thm: matricization-rank} to each of the two Tucker decompositions of $\Tt$ we obtain the following:
\begin{align*}
    \rank\left(
\right)\leq R_1,
\end{align*}
and at the same time
\begin{align*}
    \rank\left(%
\right)_{(1)}\leq\min(R_1,\tilde{R}_1)$. 
Applying the same reasoning for $n=2,3$ we conclude that $\Tt$ admits a Tucker decomposition of rank-$(\min(R_1,\tilde{R}_1),\min(R_2,\tilde{R}_2),\min(R_3,\tilde{R}_3))$.
\end{proof}

\paragraph{Computing The Tucker Decomposition.} In contrast with the CP decomposition, which is NP-hard to compute, finding an exact Tucker decomposition (and in particular the one with minimal multi-linear rank) can be achieved in polynomial time by successive SVDs. 

The basic idea of the Tucker decomposition algorithm is to find the $R_n$ leading left singular vectors
in mode-$n$, independently of the other modes. This algorithm, known as \textit{Higher-Order SVD (HOSVD)}~\citep{de2000multilinear}, is depicted below.  For simplicity, we illustrate this process for $N=4$:
\begin{align*}
    &

\end{align*}
If all the SVD above are exact, i.e. $R_n = \rank(\Tt_{(n)})$ for each $n$, then we have that $\Tt \times_n \Ub_n\Ub_n^\ts = \Tt$ for all $n$, and thus $\Tt = \Tt\times_1\Ub_1\Ub_1^\ts\times_2\cdots\times_N\Ub_N\Ub_N^\ts$. As an example, for $N=4$ we have:
$$
%
.
$$
where the tensor $\Gt = \Tt\ttm{1}\Ub_1^\ts\ttm{2}\dots\ttm{4}\Ub_4^\ts$. 

The core tensor  $\Gt\in\RR^{R_1\times\dots\times R_4}$ along with the factor matrices $\Ub_1,\dots,\Ub_4$ forms a Tucker decomposition of the tensor $\Tt$. If all the SVDs of the matricization of $\Tt$ are exact, the resulting Tucker decomposition will be exact as well. When using truncated SVDs, the overall approximation error of the Tucker decomposition can be expressed as a function of the errors made in each truncated SVD~(see~\citep{de2000multilinear}).

\paragraph{Multi-linear rank and rank of matricizations}
The algorithm above shows that, if $\tilde R_n = \rank(\Tt_{(n)})$ for all $n$, then there exists a Tucker decomposition of rank $(\tilde R_1,\dots, \tilde R_N)$, showing that the multi-linear rank $( R_1,\dots, R_N)$ of $\Tt$ is upper bounded by the rank of the matricizations: $ R_n \leq \rank(\Tt_{(n)} )$ for all $n$. It is also easy to show that the multi-linear rank is lower bounded by the rank of the matricizations. Indeed, suppose $\T$ has multi-linear rank-$(R_1,\dots, R_N)$. Then, there exists a rank-$(R_1,\dots, R_N)$ Tucker decomposition $\Tt = \Gt\times_1\Ub_1\times_2 \dots \times_N \Ub_N$. Using Theorem~\ref{thm: matricization-rank}, we can easily see that this implies $\rank(\Tt_{(n)} )\leq  R_n$ for all $n$.
This proves the following theorem.
\begin{theorem}
The multi-linear rank-$(R_1,R_2,\dots, R_N)$ of a tensor $\Tt$ is given by the rank of its matricizations, i.e., $R_n = \rank(\Tt_{(n)})$ for $n\in[N]$.
\end{theorem}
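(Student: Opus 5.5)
We prove the equality $R_n = \rank(\Tt_{(n)})$ by establishing the two inequalities separately. Write $r_n := \rank(\Tt_{(n)})$ and let $(R_1,\dots,R_N)$ be the multi-linear rank, i.e.\ the minimal tuple for which a Tucker decomposition exists. By the preceding proposition, the componentwise minimum of the ranks of two Tucker decompositions is again the rank of a Tucker decomposition. Hence this minimal tuple is well defined, and it is componentwise $\leq$ the rank of any Tucker decomposition of $\Tt$.

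\emph{Lower bound $r_n\leq R_n$.} Take a Tucker decomposition $\Tt = \Gt\times_1\Ub_1\times_2\cdots\times_N\Ub_N$ of rank $(R_1,\dots,R_N)$ and draw its tensor network: the core $\Gt$ is attached to each $\Ub_i$ by an edge of dimension $R_i$, and each $\Ub_i$ carries the dangling leg $d_i$. For the matricization $\Tt_{(n)}$, choose the cut with $V_1=\{\Ub_n\}$ and $V_2$ the remaining nodes. Its cut-set is the single edge of dimension $R_n$ between $\Gt$ and $\Ub_n$. Theorem~\ref{thm: matricization-rank} then gives $r_n\leq R_n$ directly. Alternatively, one can use the earlier identity $\Tt_{(n)} = \Ub_n\Gt_{(n)}(\cdots)^\ts$, whose rank is at most the inner dimension $R_n$.

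\emph{Upper bound $R_n\leq r_n$.} We construct a Tucker decomposition of rank exactly $(r_1,\dots,r_N)$ via HOSVD. For each $n$, let $\Ub_n\in\RR^{d_n\times r_n}$ be the left singular vectors from a compact SVD of $\Tt_{(n)}$. Then $\Ub_n\Ub_n^\ts$ is the orthogonal projector onto the column space of $\Tt_{(n)}$, so $\Ub_n\Ub_n^\ts\Tt_{(n)}=\Tt_{(n)}$. Using the proposition $(\Xt\ttm n\Mb)_{(n)}=\Mb\Xt_{(n)}$, this reads $\Tt\times_n\Ub_n\Ub_n^\ts=\Tt$.

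The step needing the most care is applying these projections in sequence, because after multiplying along other modes the tensor changes. We handle it by induction. Suppose $\Tt'=\Tt\times_1\Ub_1\Ub_1^\ts\cdots\times_{n-1}\Ub_{n-1}\Ub_{n-1}^\ts$ equals $\Tt$. Then $\Tt'\times_n\Ub_n\Ub_n^\ts=\Tt\times_n\Ub_n\Ub_n^\ts=\Tt$, since equal tensors stay equal under the same mode product. (One can also note that mode products along distinct modes commute, so the order is irrelevant.) We conclude $\Tt=\Tt\times_1\Ub_1\Ub_1^\ts\times_2\cdots\times_N\Ub_N\Ub_N^\ts$. Using $(\Xt\ttm n\Ab)\ttm n\Bb=\Xt\ttm n(\Bb\Ab)$ together with commutativity across distinct modes, we regroup this as $\Gt\times_1\Ub_1\cdots\times_N\Ub_N$ with $\Gt=\Tt\times_1\Ub_1^\ts\cdots\times_N\Ub_N^\ts\in\RR^{r_1\times\cdots\times r_N}$. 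This is a Tucker decomposition of rank $(r_1,\dots,r_N)$, so minimality gives $R_n\leq r_n$ for every $n$. Combining the two bounds yields $R_n=r_n$.
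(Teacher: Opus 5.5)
Your proof is correct and follows the paper's own argument. The bound $R_n\le\rank(\Tt_{(n)})$ comes from the exact HOSVD identity $\Tt=\Tt\times_1\Ub_1\Ub_1^\ts\cdots\times_N\Ub_N\Ub_N^\ts$, and the bound $\rank(\Tt_{(n)})\le R_n$ comes from applying Theorem~\ref{thm: matricization-rank} to the cut that isolates $\Ub_n$ in the Tucker network; you simply spell out the projection, induction and regrouping steps that the paper only sketches.
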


\begin{remark}\label{rem:tuckers}
We list here some interesting properties of the Tucker decomposition and the Higher-Order SVD~(HOSVD) algorithm for computing it.
\begin{enumerate}

\item  If  $\Tt = \Gt\ttm{1}\Ub_1\ttm{2}\dots\ttm{N-1}\Ub_{N-1}\ttm{N}\Ub_N$ with all the $\Ub_i$ orthogonal, then $\norm{\Tt}_F = \norm{\Gt}_F$ and $\vectorize(\Gt) = (\Ub_1\kron\cdots\kron\Ub_N)\vectorize(\Tt)$. 
Indeed, for $N=3$ we have
\begin{align*}
(\Ub_1\kron\Ub_2\kron\Ub_3)\vectorize(\Tt) &= 

    =\vectorize(\Gt).
\end{align*}

Observing that $\Ub_1\kron\Ub_2\kron\Ub_3$ is a left-orthogonal matrix, it follows that 
 $\norm{\Tt} = \norm{\vectorize(\Tt)} = \norm{(\Ub_1\kron\Ub_2\kron\Ub_3)\vectorize(\Tt)}= \norm{\vectorize(\Gt)} = \norm{\Gt}$.

    \item If we do not enforce any orthogonality constraints on the factor matrices and fix the core tensor to be a copy tensor,
$
%
,
$
we recover the CP decomposition. %
    \item \label{item:Tucker_orth} If $\Tt = \Gt\ttm{1}\Ab_1\ttm{2}\Ab_2\ttm{3}\Ab_3\ttm{4}\Ab_4$, with $\Ab_1,\Ab_2,\Ab_3$ and $\Ab_4$ not necessarily orthogonal, then there exists $\Tilde{\Gt}$ (of the same size as $\Gt$) and $\Qb_1,\Qb_2,\Qb_3$ and $\Qb_4$ orthogonal such that $\Tt = \Tilde{\Gt}\ttm{1}\Qb_1\ttm{2}\Qb_2\ttm{3}\Qb_3\ttm{4}\Qb_4$.
    \begin{proof}
By performing QR decomposition on each factor matrix, we obtain
\begin{align*}
%
.
\end{align*}
\end{proof}
\item For an $N$-th order $d$-dimensional tensor, the number of parameters in its Tucker decomposition is $\bigo(R^N+NdR)$ assuming $R_1=\dots=R_N=R$.
\item
We conclude this section by showing how the mode-$n$ matricization of the Tucker decomposition of a tensor can be expressed using Kronecker products.
\begin{proposition}\label{decomp:tucker-kron}
Let $\At\in\RR^{d_1\times d_2\times\dots\times d_N}$. If $\At=\Gt\ttm{1}\Ub_1\ttm{2}\dots\ttm{N}\Ub_N$ then
$$\At_{(i)}=\Ub_i\Gt_{(i)}\left(\Ub_1\kron\dots\kron\Ub_{(i-1)}\kron\Ub_{(i+1)}\kron\dots\kron\Ub_N\right)^\ts.$$
\begin{proof}
Assume $N=3$ and $n=1$, then we obtain
\begin{align*}
\At_{(1)}
&=
\left(

=
\Ub_1\Gt_{(1)}(\Ub_2\kron\Ub_3)^\ts.
\end{align*}
\end{proof}
\end{proposition}
\end{enumerate}
\end{remark}
\subsection{Tensor Train~(TT) Decomposition}\label{sec:tt-decomposition}
The Tensor Train~(TT) decomposition~\citep{oseledets2010approximation} is another popular tensor factorization model. It decomposes an $N$-th order tensor into $N$ smaller third-order tensors. 
A rank-$(R_1,\dots,R_{N-1})$ \textit{tensor train decomposition} of a tensor $\St\in\RR^{d_1\times\cdots\times d_N}$ factorizes it into a product of $N$  third-order tensors\footnote{The first and last cores can be though as matrices rather than third-order tensors since they have a singleton dimension. However, considering them as third-rd order tensors allows us to lighten some notations.}, $\Gt_n\in\RR^{R_{n-1}\times d_n\times R_n}$ for $n\in [N]$, with $R_0=R_N=1$:
$$\St_{i_1,\cdots,i_N}= \sum_{r_0=1}^{R_0}\sum_{r_1=1}^{R_1} \cdots \sum_{r_{N-1}=1}^{R_{N-1}}\sum_{r_N=1}^{R_N} \prod_{n=1}^N\Gt_n(r_{n-1},i_n,r_n) := \TT(({\Gt_n})_{n=1}^N),$$ 
for all $i_1\in[d_1],\cdots,i_N\in[d_N]$.  
The \emph{TT-rank} of $\St$ is the smallest
$(R_1,R_2,\dots,R_{N-1})$ such that $\St$ admits an exact rank-$(R_1,R_2,\dots,R_{N-1})$ TT-decomposition $\St=\TT(({\Gt_n})_{n=1}^N)$.
Similarly to the CP and Tucker decomposition, the rank can be viewed as a hyperparameter that controls the expressivity of the TT decomposition. That is, with a sufficiently large rank, a TT decomposition can represent any arbitrary tensor. The following tensor network illustrates the rank-$(R_1,R_2,R_3)$ TT decomposition of a fourth-order tensor: 

$$
\begin{tikzpicture}[baseline=-0.5ex]
    \tikzset{tensor/.style = {minimum size = 0.5cm,shape = circle,thick,draw=black,inner sep = 0pt}, edge/.style = {thick,line width=.4mm},every loop/.style={}}
    \def\x{6}
    \def\y{0}
     \node[tensor,fill=green!50!red!50!white] (B) at (\x+1,\y) {$\St$};
    \draw[edge] (B) -- (\x+0.3,0) node [midway,above] {\scalebox{0.7}{\textcolor{gray}{$d_1$}}};
    \draw[edge] (B) -- (\x+1.6,0) node [midway,above] {\scalebox{0.7}{\textcolor{gray}{$d_4$}}};;
    \draw[edge] (B) -- (\x+0.4,-0.3) node [midway, below] {\scalebox{0.7}{\textcolor{gray}{$d_2$}}};;
    \draw[edge] (B) -- (\x+1.5,-0.4)node [midway, below] {\scalebox{0.7}{\textcolor{gray}{$d_3$}}};;
\end{tikzpicture}=
\begin{tikzpicture}[baseline=\baseline]
    \tikzset{tensor/.style = {minimum size = 0.5cm,shape = circle,thick,draw=black,inner sep = 0pt}, edge/.style   = {thick,line width=.4mm},every loop/.style={}}
    \def\y{0}
    \def\x{0}
  \node[tensor,fill=red!30!white] (D) at (\x-1,\y){$\Gt_1$};
  \node[tensor,fill=red!50!white] (A) at (\x,\y){\scalebox{0.85}{$\Gt_2$}};
  \node[tensor,fill=blue!50!white] (B) at (\x+1,\y){$\Gt_3$};
   \node[tensor,fill=blue!20!white] (C) at (\x+2,\y){$\Gt_4$};
    \draw[edge](A) -- (\x,\y-0.7)node [midway,left] {\edgelabel{$d_2$}};
    \draw[edge](B) -- (\x+1,\y-0.7)node [midway,left] {\edgelabel{$d_3$}};
    \draw[edge](C) -- (\x+2,\y-0.7)node [midway,left] {\edgelabel{$d_4$}};
     \draw[edge](D) -- (\x-1,\y-0.7)node [midway,left] {\edgelabel{$d_1$}};
    \draw[edge](D) -- (A) node [midway,above] {\edgelabel{$R_1$}};
    \draw[edge](A) -- (B) node [midway,above] {\edgelabel{$R_2$}};
    \draw[edge](B) -- (C) node [midway,above] {\edgelabel{$R_3$}};
\end{tikzpicture}.
$$
The TT decomposition was initially introduced in the quantum physics and condensed matter community as \emph{Matrix Product States~(MPS)}. In this context, the dimensions of the internal edges are referred to as \textit{bond dimensions}, and the free legs are called \textit{physical dimensions}.
Next, we show how an exact TT decomposition of a tensor $\St\in\RR^{d_1\times\dots\times d_N}$ can be computed in polynomial time using successive singular value decomposition (SVD). This algorithm is known as the TT-SVD algorithm~\citep{oseledets2010approximation}, but was introduced earlier in the quantum physics community and referred to as \textit{successive Schmidt decompositions}~\citep{vidal2004efficient,mcculloch2007density}. 
Before presenting the TT-SVD algorithm, we introduce the notions of left and right orthogonality for third-order tensors.
\begin{definition}\label{def:left-right-ortho}A core tensor $\At_n\in\RR^{R_{n-1}\times d_n\times R_n}$ for $n\in[N]$ is left-orthogonal if $(\At_n)_{(3)}^\ts(\At_n)_{(3)}=\Ib_{R_n}$ and right-orthogonal if $(\At_n)_{(1)}(\At_n)_{(1)}^\ts=\Ib_{R_{n-1}}$. This can be illustrated using tensor networks 
as follows:
\begin{align*}
=\Ib_{R_{n-1}}.
\end{align*}
\end{definition}
The following theorem establishes the existence of a minimal tensor train decomposition for any tensor, and its proof is constructive, outlining the TT-SVD algorithm.
\begin{theorem}(\textbf{Computing (Orthogonal) Tensor Train Decomposition.})\label{thm:tt-decomposition}
For any $\St\in\RR^{d_1\times\dots\times d_N}$, let $\St_{([n])}\in\RR^{d_1\dots d_n\times d_{n+1}\dots d_N}$ be the matricization obtained by mapping the first $n$  modes of $\St$ to the rows of $\St_{[n]}$. Then the TT rank $(R_1,\cdots,R_{N-1})$ of $\St$  is given
by
    $R_n = \rank~(\St_{([n])})$ for all $n\in[N-1]$.
\end{theorem}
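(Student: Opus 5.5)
Two inequalities establish the statement, mirroring the argument given above for the multi-linear rank. The lower bound comes from Theorem~\ref{thm: matricization-rank}. Suppose $\St=\TT((\Gt_n)_{n=1}^N)$ is any rank-$(R_1',\dots,R_{N-1}')$ TT decomposition, and fix $n\in[N-1]$. The TT network is a chain, so cutting the single edge between $\Gt_n$ and $\Gt_{n+1}$ separates the legs $d_1,\dots,d_n$ from $d_{n+1},\dots,d_N$. The cut-set is that one edge, of dimension $R_n'$. Hence $\rank(\St_{([n])})\le R_n'$ for every TT decomposition. In particular, every TT decomposition has ranks componentwise at least $(\rank(\St_{([1])}),\dots,\rank(\St_{([N-1])}))$.

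The upper bound is constructive (TT-SVD): I will build a TT decomposition with exactly $R_n=\rank(\St_{([n])})$. This also shows that the componentwise minimum is attained, so the TT-rank is well defined as a tuple. The plan is induction on the number of cores peeled off from the left, with left-orthogonal cores.

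\emph{Step 1.} Take the compact SVD (or rank-revealing QR) of $\St_{([1])}=\Ub_1\Sigmab_1\Vb_1^\ts$ with $\Ub_1\in\RR^{d_1\times R_1}$ left-orthogonal and $R_1=\rank(\St_{([1])})$. Set $\Gt_1=\Ub_1$, reshaped to $1\times d_1\times R_1$, and let $\Wt_1=\Sigmab_1\Vb_1^\ts$, reshaped to $R_1\times d_2\times\cdots\times d_N$. Then $\St=\Gt_1$ contracted with $\Wt_1$ along $R_1$.

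\emph{Inductive step.} Suppose $\St$ equals the contraction of left-orthogonal cores $\Gt_1,\dots,\Gt_{n}$ (with bond $R_n=\rank(\St_{([n])})$) with a remainder $\Wt_n\in\RR^{R_n\times d_{n+1}\times\cdots\times d_N}$. Matricize $\Wt_n$ with rows grouping $(R_n,d_{n+1})$ and take its compact SVD. This yields a left-orthogonal core $\Gt_{n+1}\in\RR^{R_n\times d_{n+1}\times R'}$ and a new remainder $\Wt_{n+1}$. At $n=N-1$ the remainder is the last core.

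The main obstacle is showing that $R'=\rank(\St_{([n+1])})$ exactly, rather than only an upper bound. The key observation is that the contraction of left-orthogonal cores $\Gt_1,\dots,\Gt_{n+1}$, viewed as a $d_1\cdots d_{n+1}\times R'$ matrix, is itself left-orthogonal. This follows diagrammatically by contracting the network with its mirror copy and collapsing the pairs $\Gt_k$, $\Gt_k$ one at a time using Definition~\ref{def:left-right-ortho}, which leaves $\Ib_{R'}$. By the same argument, the left part $\mathbf{P}_n$ (cores $\Gt_1,\dots,\Gt_n$) is left-orthogonal. We then have
\[
\St_{([n+1])}=(\mathbf{P}_n\kron\Ib_{d_{n+1}})\,(\Wt_n)_{(\{1,2\})}.
\]
The first factor is left-orthogonal, hence injective, so $\rank(\St_{([n+1])})=\rank((\Wt_n)_{(\{1,2\})})$. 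This rank is exactly the number $R'$ of singular vectors kept in the compact SVD.

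Combining the two bounds gives the result: the tuple $(\rank(\St_{([n])}))_n$ is achieved by a decomposition, and it is componentwise below every decomposition's ranks, so it is the TT-rank.
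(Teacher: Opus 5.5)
Your proof is correct and follows the paper's route. The lower bound comes from Theorem~\ref{thm: matricization-rank} by cutting the bond between $\Gt_n$ and $\Gt_{n+1}$, and the upper bound comes from constructing the decomposition by successive orthogonal factorizations, with SVD in place of the paper's QR. The only difference is the rank step. The paper shows the reshaped remainder has rank at most $R_2$ by writing $\Rb_1=\Qb_1^\ts\St_{(1)}$, substituting a rank-$R_2$ factorization of $\St_{([2])}$, and applying the cut theorem. You instead get exact equality $R'=\rank(\St_{([n+1])})$ directly from the injectivity of the left-orthogonal factor $\mathbf{P}_n\kron\Ib_{d_{n+1}}$, which is equally valid and slightly more informative about what the algorithm produces.
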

\begin{proof}
    Let $R_n = \rank~(\St_{([n])})$. We first show that we can construct a TT decomposition of $\St\in\RR^{d_1\times\cdots\times d_N}$ of rank $(R_1,R_2,\cdots,R_{N-1})$ by successive QR decomposition. For simplicity, we consider the case $N=4$, but the argument can easily be generalized.

We first take a QR decomposition of the first mode matricization of $\ten S$, which is of rank-$R_1$:
\begin{align*}

\end{align*}
Then, we do a QR decomposition of $\mat R_1$ obtained at the previous step:
\begin{align*}
    %
\end{align*}\\
For the exact QR decomposition to be of rank-$R_2$, we first need to show that
the rank of $\Rb_1$, reshaped as a $R_1d_2 \times d_3d_4$ is at most $R_2$. Observe that, since $\rank(\St_{([2])}) = R_2$,
there exist $\Ab \in \RR^{d_1 d_2 \times R_2}$ and
$\Bb \in \RR^{R_2 \times d_3 d_4}$ such that $\St_{([2])} = \Ab\Bb$, i.e.,
$$
%
.
$$

By contracting $\Qb_1$ with the first-mode matricization $\Sb_{(1)}$ and using the above factorization we obtain
\begin{align}\label{eq:tt-qr-fac}
    %
.
\end{align}
Moreover, since $\Qb_1$ is orthogonal and comes from the QR factorization of $\Sb_{(1)}$, we have that $\Qb_1\Sb_{(1)}$ is equal to $\Rb_1$, hence:
\begin{align}\label{eq:tt-equal-r1}
    %
.
\end{align}
By combining the identities~\eqref{eq:tt-qr-fac} and~\eqref{eq:tt-equal-r1}, we have
$$
%
.
$$
Using Theorem~\ref{thm: matricization-rank}, the TN above implies that reshaping $\Rb_1$ into a $R_1d_2\times d_3d_4$
matrix, yields a matrix of rank at most $R_2$.
By repeating the steps above for $\Rb_2$, we ultimately obtain a TT decomposition with  rank-$(R_1,R_2,R_3)$ where $R_n = \rank{(\St_{([n])})}$ for $n\in [3]$, i.e.,
\begin{align}\label{eq:tt-format}
    %
.
    \end{align}
resulting in a TT decomposition where all cores are orthogonal except for the last core.

The algorithm above shows that, if $\tilde R_n = \rank(\St_{([n])})$ for all $n$, then there exists a TT decomposition of rank-$(\tilde R_1,\dots, \tilde R_{N-1})$, showing that the TT rank $(R_1,\dots, R_{N-1})$ of $\St$ is upper-bounded by the rank of the matricizations: $ R_n \leq \rank(\St_{([n])} )$ for all $n$. It is also easy to show that the TT rank is lower-bounded by the rank of the matricizations. Indeed, suppose that $\St$ has the TT rank  $(R_1,\dots, R_{N-1})$. Then, there exists a rank-$(R_1,\dots, R_{N-1})$ TT decomposition. Using Theorem~\ref{thm: matricization-rank}, we can easily see that this implies $\rank(\St_{([n])} )\leq  R_n$ for all $n$.
\end{proof}
Although the proof above uses successive QR decompositions to construct an exact TT decomposition, the algorithm commonly referred to as TT-SVD uses SVDs of the successive unfoldings. In the exact case, retaining all nonzero singular values yields TT ranks
$
R_n = \rank(\St_{([n])} )
$.
The advantage of the SVD formulation is that it immediately provides a low-rank approximation algorithm: at each step, truncated SVD can be used to prescribed TT ranks or according to a tolerance on the discarded singular values~\citep{oseledets2010approximation,holtz2012manifolds}.

Another widely used approach for decomposing a tensor into the tensor-train (TT) format is the Alternating Least Squares (ALS) algorithm. Before introducing this method, we first describe the canonical form of the TT decomposition.
\begin{definition}\label{def:canonical-form}
   The TT decomposition $\TT(({\At_n})_{n=1}^N)\in\RR^{d_1\times\dots\times d_N}$ is said to be in canonical form with respect to a fixed index $j\in [N]$ if all cores with $n<j$ are left-orthogonal and all cores with $n>j$ are right-orthogonal, e.g., the following TT decomposition is in canonical form with respect to its third core tensor:
$$
.
$$
The cores on the left side of $\At_3$ are left-orthogonal and the cores on the right are right-orthogonal. The core $\At_3$ is called the center of  orthogonality. Note that any TT decomposition can be efficiently converted to a canonical form w.r.t. any index $j\in [N]$ by performing a series of QR decompositions on the core tensors~\citep{holtz2012alternating,evenbly2018gauge}.
\end{definition}
\paragraph{Computing TT with Alternating Least Squares~(ALS)} The single-site Tensor Train Alternating Least Squares (TT-ALS) method~\citep{holtz2012alternating} starts with a TT decomposition in canonical form, initialized with a crude approximation~(e.g., with random cores, or using TT-SVD).
In the first step, the core 
$\At_1$ of the decomposition is non-orthogonal. During sweeps from left to right~(respectively, right to left), the algorithm keeps all cores fixed except for the $n$-th  one, $\At_n$, and optimizes it to minimize the distance to the target tensor. The resulting minimization problem is a simple least-squares problem, hence the name  ALS.  
After updating $\At_n$, a QR decomposition is applied to ensure numerical stability and to maintain the canonical form. The non-orthonormal factor is then merged into the left (or right, depending on the sweep direction), a step known as \textit{core orthogonalization}.
\begin{figure}[h!]
    \centering
~~~\text{step:~3}
\caption{A half-sweep of TT-ALS}\label{fig:ortho-tt-als-alg}
\end{figure}
A half-sweep of TT-ALS is presented in Figure~\ref{fig:ortho-tt-als-alg}. In each non-QR step, the fully colored core is optimized, and in each QR step, the non-orthogonal component (depicted by a black circle)~is absorbed into the next core. This procedure repeats until reaching the right side of the decomposition, and then the same procedure is repeated from the right until reaching the left side (not demonstrated in this figure). The whole process is repeated until convergence. 
\begin{remark}
    This algorithm can be used with other loss functions in addition to the reconstruction error. In this case, the optimization of the loss w.r.t. one of the cores may not result in a least-squares problem, it is thus not called ALS anymore, but simply \emph{alternating minimization}. There also exists a two-site variant of this algorithm where pairs of consecutive cores are merged and optimized jointly at each iteration, before being split back in two using truncated SVD~\citep{holtz2012alternating}.
\end{remark}

In the next section, we first introduce the TT-matrix, also known as the Matrix Product Operator (MPO), and then demonstrate how to perform operations efficiently in the TT format.
\subsection{Efficient Operations in TT Format}\label{sec:efficient-operations}
A key advantage of the Tensor Train (TT) format is that it not only provides a low-rank representation of high-dimensional tensors, but also enables efficient execution of key algebraic operations.
In this section, we present the main operations that can be performed efficiently in TT format. We begin by describing the computation of inner products between two tensor-train (TT) tensors. We then introduce the Matrix Product Operator (MPO) representation for large matrices, followed by the matrix–vector product~(mat–vec) in the TT format. Next, we discuss TT rounding, a crucial procedure for controlling the growth of TT ranks. We conclude by outlining additional operations that can be performed efficiently within the TT framework, together with their computational complexities.
\paragraph{Inner Product}
As mentioned in Section~\ref{subsec:contraction}, for both tensors and vectors, the inner product is obtained by connecting all corresponding indices. Suppose that we have two fourth-order tensors $\Tt=\TT(({\Gt_n})_{n=1}^N)$, $\widetilde{\Tt}=\TT(({\Tilde{\Gt_n}})_{n=1}^N)\in\RR^{d_1\times...\times d_4}$ in the TT format. Then, the inner product can be computed efficiently, e.g.,
\begin{align}\label{eq:tt-inner}
\langle\Tt,\widetilde{\Tt}\rangle 
&=
,
\end{align}
where, for simplicity, we assume that all the ranks are equal to $R$.
The complexity of computing the inner product of two $N$-th order tensors in TT format with the process described in Eqn.~\eqref{eq:tt-inner} is $\bigo(NdR^4)$ provided that $d_1=\cdots=d_N=d$ and $R_1=\cdots=R_{N-1} = R$. This represents a significant improvement compared to the standard method, which has a complexity of $\bigo(d^N)$. Therefore, the TT format provides an efficient and practical approach for performing operations on high-dimensional tensors of low TT rank. Moreover, cores can be contracted in an even  more efficient manner:
\begin{align*}

\end{align*}
As we can see above, the complexity of computing the inner product between two vectors of size $\bigo(d^N)$ can be reduced to $\bigo(NdR^3)$. In general, determining the optimal order of contraction for an arbitrary tensor network is an NP-hard problem~\citep{chi1997optimizing}.
\paragraph{Matrix Product Operator~(MPO)}\label{par:mpo}
As a generalization of the TT decomposition, we introduce the Matrix Product Operator (MPO)~\citep{oseledets2010approximation}. An MPO, also known as a TT-matrix, is a chain of four-way tensors that is used to represent a matrix. It was originally developed to describe operators acting on multi-body quantum systems. In simple terms, an MPO is a method of representing a matrix using tensors. Suppose that we have a matrix of size $\Ab\in\RR^{I_1I_2\dots I_N\times J_1J_2\dots J_N}$. For $n\in[N]$, let  $\At_n\in\RR^{R_{n-1}\times I_n\times J_n\times R_n}$  with $R_0=R_N=1$. A rank-$(R_1,\cdots,R_N)$ MPO decomposition of $\Ab$ is given by 
$$\Ab_{i_1i_2\dotsi_N,j_1j_2\dots j_N} = (\At_1)_{i_1,j_1,:}(\At_2)_{:,i_2,j_2,:}\dots(\At_{N-1})_{:,i_{N-1},j_{N-1},:}(\At_N)_{:,i_N,j_N},$$ 
for all indices $i_1\in[I_1],\cdots,i_N\in[I_N]$ and $j_1\in[J_1],\dots, j_N\in[J_N]$.
Here is an example of a MPO decomposition for a matrix $\Ab\in\RR^{I_1I_2I_3\times J_1J_2J_3}$ of rank~$(R_1,R_2)$ in the tensor network notation:
$$
.
$$
\paragraph{Matrix-vector Product in MPO} The product between a matrix 

$\Ab\in\RR^{I_1I_2I_3\times J_1J_2J_3}$
(given as a TT-matrix) and a vector $\vb\in\RR^{I_1I_2I_3}$
(given in TT format) can be efficiently computed:
\begin{align}\label{eq:tt-mat-vec}
%
.
\end{align}
In the last equality, each pair of cores $\At_i,\Gt_i$ has been merged to obtain the core $\Ht_i$~(for $i = 1,2,3$), resulting in a  TT vector with ranks $(R_1 S_1, R_2 S_2)$. More generally, for an $N$-th order tensor, if $I_1 = \cdots = I_N = I$, $J_1 = \cdots = J_N = J$, $R_1 = \cdots = R_{N-1} = R$, and $S_1 = \cdots = S_{N-1} = S$, the overall computational complexity of the matrix-vector product is $\mathcal{O}(N R^2 S^2 I J)$.

As illustrated with the matrix-vector product above, performing operations in the TT format can lead to an increase in TT ranks. This applies to basic linear algebra operations, such as summation and the entry-wise product of
tensors. Although the resulting tensors remain in the TT format, the ranks typically grow as a consequence of these operations~\citep{oseledets2010approximation}. A summary of these operations and their effects on the TT ranks is provided in Table~\ref{tab:tt-operations}~\citep{novikov2014putting}. 
\begin{table}
\centering
\begin{tabular}{lll}
\toprule
\textsc{Operation} & \textsc{Output Ranks} & \textsc{Complexity} \\
\midrule
$\Ct = \At \cdot \text{const}$ & $R(\Ct) = R(\At)$ & $\mathcal{O}(d\,R(\At))$ \\
$\Ct = \At + \text{const}$ & $R(\Ct) \leq R(\At) + 1$ & $\mathcal{O}(nd\,R(\At)^2)$ \\
$\Ct = \At + \Bt$ & $R(\Ct) \leq R(\At) + R(\Bt)$ & $\mathcal{O}(nd\,R(\Ct)^2)$ \\
$\Ct = \At \krao \Bt$ & $R(\Ct) \leq R(\At)\,R(\Bt)$ & $\mathcal{O}(nd\,R(\At)^2\,R(\Bt)^2)$ \\
$\mathrm{sum} \At$ & -- & $\mathcal{O}(nd\,R(\At)^2)$ \\
$\norm{\At}_F$ & -- & $\mathcal{O}(nd\,R(\mathbf{A})^3)$ \\
\bottomrule
\end{tabular}
\caption{ Efficient operations on tensors in the TT-format~(table taken from~\citep{novikov2014putting}.)}
\label{tab:tt-operations}
\end{table}
In this table, the term $R$ refers to the maximal TT rank, i.e., $R(\At) = \max_{i=0,\dots,N} R_i$, $n$ to the order of the tensor, and $d$ to its maximal dimension.

The TT-rounding algorithm~\citep{oseledets2010approximation} can be used to reduce the ranks when needed~(at the cost of some approximation error). 

\paragraph{TT-rounding} As noted earlier, tensor operations in the TT format, such as addition or multiplication, generally lead to an increase in the TT ranks. To prevent uncontrolled rank growth while maintaining accuracy, it is necessary to perform rank reduction. This is achieved by applying a \emph{TT-rounding} procedure: starting from an $N$-th order TT tensor with ranks $R^\prime_k$ for $k\in[N-1]$, the goal is to find a good approximation with lower TT rank $R_k$. A naive and inefficient way to achieve this would be to reconstruct the full tensor and perform TT-SVD to obtain a rank $(R_1,\cdots,R_{N-1})$ TT approximation. It turns out that this can be done without reconstructing the full tensor! We show that the following procedure is exactly equivalent to performing TT-SVD on the full tensor.

First, the tensor is orthogonalized by performing a sequence of QR decompositions on the TT cores in a left-to-right (or right-to-left) sweep. Subsequently, a truncation step is performed on each core using a truncated SVD, where singular values are discarded according to a prescribed rank-$R_k$. The truncated factors are then redistributed back into adjacent TT cores, yielding a new TT representation with reduced ranks while controlling the approximation error. As an illustrative example, let us consider a fourth-order tensor given in the TT format with rank-$R^\prime_k$. As a first step, we perform the QR decomposition from right to left:
\begin{align*}
&
.
\end{align*}
The procedure continues with truncated SVDs from left to right:
\begin{align*}
&%
,   
\end{align*}
where in the final step, all remaining tensors are contracted and combined into the final core $\Gt_4$. One can show that this procedure is \emph{exactly} equivalent to performing TT-SVD.

As expected, several other tensor networks are not covered in this chapter. In the next section, we briefly introduce some of them, which are notable generalizations of the Tucker and TT decompositions.
\subsection{Other decompositions: Tensor Ring, PEPS \& Hierarchical Tucker}

A first popular extension of Tensor Train is the Tensor Ring decomposition. 
\paragraph{Tensor Ring Decomposition}
The Tensor Ring (TR) decomposition is a generalization of the TT decomposition~\citep{zhao2016tensor}. Originally introduced in quantum physics, it has recently gained popularity in the machine learning community~\citep{wang2017efficient,wang2018wide,yuan2018higher}. While the TR decomposition is known to have some numerical stability issues~\citep{batselier2018trouble,chen2020tensor}, it generally requires fewer parameters and achieves better compression ratios than TT decomposition in practice~\citep{mickelin2020algorithms}. Let $\Xt\in\RR^{d_1\times\dots\times d_N}$
    be an $N$-th order tensor. Let 
    $\Gt_n\in\RR^{R_{n-1}\times d_n\times R_n}$ for $n\in[N]$ be the core tensors. 
    The rank-$R$ \emph{tensor ring decomposition} of the tensor $\Xt$ is given by
\begin{align*}
    \Xt_{i_1,\dots,i_N}=\hspace{0.25cm}\sum_{r_1=1}^{R_1}\dots\sum_{r_{N}=1}^{R_N}
    (\Gt_1)_{r_N,i_1,r_1} (\Gt_2)_{ r_1,i_2,r_2}\dots  (\Gt_{N-1})_{r_{N-2},i_{N-1},r_{N-1}} (\Gt_{N})_{r_{N-1},i_{N},r_N},
\end{align*}
for all indices $i_1\in[d_1],\cdots,i_N\in[d_N]$.
The TR decomposition can be represented in a tensor network notation for a fourth-order tensor as follows:
$$
.
$$
As a special case of tensor ring decomposition, we can also obtain a rank-one decomposition by setting all the ranks equal to one:
$$
%
=
\xb^1\outprod\xb^2\outprod\xb^3\outprod\xb^4.
$$
\paragraph{Projected Entangled Pair States (PEPS)}
The family of Projected Entangled Pair States~(PEPS) has emerged as one of the generalizations of the TT decomposition to higher-order lattice structures in quantum physics~\citep{verstraete2004renormalization,orus2014practical}.
PEPS are tensor networks structured as two-dimensional arrays of fifth-order core tensors. The following is the PEPS tensor diagram  corresponding to a $4\times 4$ square lattice:
$$
%
$$
\paragraph{Tree Tensor Networks/~Hierarchical Tucker Decomposition}
Another tensor network of particular interest is the Tree Tensor Networks~(TTN) or Hierarchical Tucker~(HT) decomposition, originally introduced in~\citep{hackbusch2009new, grasedyck2010hierarchical}. A key advantage of this tensor network is its higher expressive power compared to simpler models like the TT decomposition. The tensor network diagram for the HT decomposition of a fourth-order tensor is shown below:
$$
%
$$

\section{Computing Gradients of Tensor Networks}\label{chapter:gradients}
Optimizing tensor networks in a general setting is a significant challenge in many research areas. Although many optimization techniques have been developed for matrices, extending these methods to tensor networks can remain challenging~\citep{liao2019differentiable}. This complexity stems from the high computational cost of tensor contractions and the lack of efficient optimization algorithms for higher-dimensional cases. Additionally, manually computing gradients using the chain rule can seem daunting beyond simple tensor network structures. In this chapter, we introduce an elegant and intuitive approach for deriving higher-order derivatives using the graphical notation of a tensor network. This approach illustrates that derivatives of tensor network functions often inherit the low-rank structure of the initial tensor network.

In the next section, we first define the classical concepts of gradient and Jacobian for functions that take vectors as input. We then extend these concepts to functions that take tensors as inputs and produce tensors as outputs.
\subsection{Jacobians}
We first recall the classical notions of gradient and Jacobian of functions defined over $\mathbb{R}^n$.

\begin{definition}
  For $f:\RR^n\to\RR$ and $g:\RR^n\to\RR^p$. The gradient of $f$ and the Jacobian of $g$ at $\theta\in\mathbb{R}^n$ are defined by
    $$\nabla_\theta f = \left[\frac{\partial f(\theta)}
{\partial\theta_1},\frac{\partial f(\theta)}{\partial\theta_2},\dots,\frac{\partial f(\theta)} 
    {\partial\theta_n}\right]^\ts
        =
    \begin{tikzpicture}[baseline=\baseline]
    \tikzset{tensor/.style = {minimum size = 0.5cm,shape = circle,thick,draw=black,inner sep = 0pt}, edge/.style   = {thick,line width=.4mm}}
    \def\y{0}
     \def\x{0}
    \node[tensor,fill=blue!40!red!60!white] (a1) at (\x,\y){};
    \draw[edge](a1) -- (\x+0.75,\y)node [midway, above]{\edgelabel{$n$}};
    \end{tikzpicture} \in\mathbb{R}^n,$$
    and 
    $$\frac{\partial g(\theta)}{\partial\theta} = \left(\frac{\partial g(\theta)_i}{\partial\theta_j}\right)_{i=1,\cdots,p\atop j=1,\cdots, n}
    =
    \begin{tikzpicture}[baseline=\baseline]
    \tikzset{tensor/.style = {minimum size = 0.5cm,shape = circle,thick,draw=black,inner sep = 0pt}, edge/.style   = {thick,line width=.4mm}}
    \def\y{0}
     \def\x{0}
    \node[tensor,fill=blue!20!red!50!white] (A) at (\x,\y){};
    \draw[edge](A) -- (\x+0.75,\y)node [midway, above]{\scalebox{0.7}{\dimcolor{$n$}}};
    \draw[edge](A) -- (\x-0.75,\y)node [midway, above]{\scalebox{0.7}{\dimcolor{$p$}}};
    \end{tikzpicture}\in\mathbb{R}^{p\times n}.$$  
\end{definition}
We can naturally extend the notion of Jacobian to functions that take tensors as input and output tensors~(of arbitrary orders).

\begin{definition}
 Let $f:\RR^{n_1\times\dots\times n_N}\to\RR^{m_1\times\dots\times m_M}$,. The \emph{Jacobian tensor} of $f$ at $\theta\in\RR^{n_1\times\dots\times n_N}$ is the tensor $\frac{\partial f}{\partial\theta}\in\RR^{m_1\times\dots\times m_M\times n_1\times\dots\times n_N}$
 defined by
\begin{align*}
   \frac{\partial f(\theta)}{\partial\theta} 
   &= 
   \left(\frac{\partial f(\theta)_{i_1,\dots,i_M}}{\partial\theta_{j_1,\dots,j_N}}\right)_{i_1\in[m_1],\cdots,i_M\in[m_M]\atop j_1\in[n_1],\cdots,j_N
\in[n_N]}\\
&=
\begin{tikzpicture}[baseline=-0.5ex]
    \tikzset{tensor/.style = {minimum size = 0.5cm,shape = circle,thick,draw=black,inner sep = 0pt}, edge/.style = {thick,line width=.4mm},every loop/.style={}}
    \def\x{6}
    \node[tensor,fill=green!50!red!50!white] (T) at (\x,0) {};
    \draw[edge] (T) -- (\x-0.85,0.2)node [midway, above]{\scalebox{0.7}{\dimcolor{$m_1$}}};
    \draw[edge] (T) -- (\x-0.85,-0.2)node [left]{\scalebox{0.7}{\dimcolor{$m_2$}}};
    \draw[edge] (T) -- (\x-0.1,-0.75)node [near end, left]{\scalebox{0.7}{\dimcolor{$m_M$}}};
    \draw[dotted] (\x-0.55,-0.3) -- (\x-0.2,-0.45);
    \draw[edge] (T) -- (\x+0.85,0.2)node [midway, above]{\scalebox{0.7}{\dimcolor{$n_1$}}};
    \draw[edge] (T) -- (\x+0.85,-0.2)node [right]{\scalebox{0.7}{\dimcolor{$n_2$}}};
    \draw[edge] (T) -- (\x+0.1,-0.75)node [near end, right]{\scalebox{0.7}{\dimcolor{$n_N$}}};
    \draw[dotted] (\x+0.55,-0.3) -- (\x+0.2,-0.45);
    \end{tikzpicture}\in\RR^{m_1\times\cdots\times m_M\times n_1\times\cdots\times n_N}. 
\end{align*}
\end{definition}

The following theorem demonstrates that first-order derivatives of tensor networks with respect to a core tensor can be easily computed using tensor networks.

\begin{theorem}\label{thm:gradients}
    Let $\Tt$ be a tensor given as a tensor network, where $\Gt$ is a core tensor \emph{appearing only once} in the tensor network. Then $\frac{\partial \Tt}{\partial \Gt}$ is obtained by removing $\Gt$ from the tensor network of $\Tt$. 
\end{theorem}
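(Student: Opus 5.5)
The plan is to work at the level of index notation and exploit the fact that a tensor network is a multilinear expression in its cores. Write $\Gt\in\RR^{n_1\times\cdots\times n_N}$ and group the rest of the network into a single tensor $\Ht$. Contracting all nodes other than $\Gt$ (including any copy tensors and any self-loops among them) is legitimate because contraction is associative and the result does not depend on the order in which edges are summed.

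Each leg of $\Gt$ is of one of two kinds. It is either connected to $\Ht$ (a contracted index), or it is a free leg of $\Tt$. A self-loop on $\Gt$ would make $\Tt$ non-linear only in the trace sense, and it is still linear; I would handle it by first replacing the loop with an identity matrix node, i.e.\ an order-2 copy tensor as in Proposition~\ref{prop:copy.tensor}, absorbed into $\Ht$. Similarly, a free leg of $\Gt$ is rewritten as a contraction of $\Gt$ with an identity matrix whose other leg is free, and that identity is also absorbed into $\Ht$. After this normalization every leg of $\Gt$ is contracted with $\Ht$, so in index form
\begin{align*}
\Tt_{i_1\cdots i_M}=\sum_{j_1,\dots,j_N}\Ht_{i_1\cdots i_M j_1\cdots j_N}\,\Gt_{j_1\cdots j_N},
\end{align*}
where $\Ht$ does not depend on $\Gt$ because $\Gt$ appears only once.

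Differentiating entrywise with respect to $\Gt_{k_1\cdots k_N}$ and using $\partial \Gt_{j_1\cdots j_N}/\partial\Gt_{k_1\cdots k_N}=\prod_l\delta_{j_lk_l}$ gives
\[
\frac{\partial \Tt_{i_1\cdots i_M}}{\partial \Gt_{k_1\cdots k_N}}=\Ht_{i_1\cdots i_M k_1\cdots k_N}.
\]
By the definition of the Jacobian tensor, $\partial\Tt/\partial\Gt=\Ht$. This $\Ht$ is exactly the network of $\Tt$ with the node $\Gt$ deleted, where the legs formerly attached to $\Gt$ become the new dangling legs indexed by $n_1,\dots,n_N$. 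The identity matrices introduced in the normalization simply become the bare wires of the pictured network, since an identity node may be drawn as a line.

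The only real subtlety is the bookkeeping for the legs of $\Gt$ that were free, or that formed loops on $\Gt$. In the ``removed'' diagram these correspond to disconnected identity edges, whose ordering of modes must match the Jacobian convention $m_1,\dots,m_M,n_1,\dots,n_N$. I expect this identification to be the main obstacle, and I would handle it with the identity-insertion trick above.

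The hypothesis that $\Gt$ appears once is essential: with multiple occurrences $\Tt$ is polynomial rather than linear in $\Gt$, and the derivative becomes a sum over occurrences via the product rule.
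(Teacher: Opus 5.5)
Your proof is correct and takes essentially the same route as the paper. The paper justifies the theorem by viewing the network as a linear map $\Mb\vectorize(\Gt)$, where $\Mb$ is the matricization of the rest of the network and a free leg of $\Gt$ survives as a bare wire; it then invokes the Jacobian of a linear map, which is exactly your entrywise computation with $\Ht$. Your identity-insertion treatment of free legs and self-loops makes explicit, for a general network, what the paper only illustrates on examples.
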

The following examples illustrate how Theorem~\ref{thm:gradients} can be applied to tensor networks.
\paragraph{Examples}
Let 
$

$ be a tensor network. Observe that we can think of $\Xt$ as a function of any of the core tensors of its tensor network representation. We can thus naturally ask what the Jacobian of $\Xt$ is with respect to, e.g., $\Gt_2$ or $\Gt_4$.
The previous theorem gives us an easy way to compute these derivatives:
\begin{itemize}
\item According to Theorem~\ref{thm:gradients}, to find the gradient of $\Xt$ with respect to $\Gt_2$, we need to remove the core $\Gt_2$ from the tensor network, i.e.,
$$
\frac{\partial}{\partial\Gt_2}\left(%
.
$$ 
To understand why this procedure works, we can view the tensor decomposition of $\Xt$ as a matrix-vector product:
$$
\Mb\vb 
=
%
,
$$
where $\vb = \vectorize(\Gt_2) $ and $\Mb$ is the matrix obtained by matricization of the remaining part of the tensor network. 
Using the classical identity for the Jacobian of a linear map, the expected result is obtained:
$$\frac{\partial\Mb\vb}{\partial\vb} = \Mb = 
%
\in\RR^{d_1d_2d_3\times R_1R_3R_4}.
$$
\item According to Theorem~\ref{thm:gradients} to find the gradient of $\Xt$ with respect to $\Gt_4$, we need to remove the core $\Gt_4$ from the corresponding tensor network. i.e., 
$$
    \frac{\partial}{\partial\Gt_4}\left(%
.
$$
Observe that the dangling leg of $\Gt_4$ remains in the Jacobian tensor! To understand why this is the case, we can again view the tensor decomposition of $\Xt$ as a matrix-vector product $\Mb\vb$, such that 
$$
\Mb\vb = %
,
$$
where now $\vb  =\vectorize(\Gt_4)$.
Taking derivative of this tensor network with respect to $\vb = \vectorize(\Gb_4)$~gives us the expected result:  $$\frac{\partial\Mb\vb}{\partial\vb} = \Mb
=
%
\in\RR^{d_1d_2d_3\times R_4}.$$
Note that taking the derivative with respect to a tensor removes only the corresponding node from the network, but leaves its connecting edges (legs) intact.
\end{itemize}
We now illustrate how Theorem~\ref{thm:gradients} can be used to effortlessly recover classical derivatives and gradient computations. 
\paragraph{Re-deriving Classical Identities with Tensor Networks}
\begin{enumerate}
\item 
    $\frac{\partial\langle\ub,\vb\rangle}{\partial \ub}
=
    \frac{\partial\left(%
=
    \Ib\outprod\xb
$,
\end{enumerate}
where in the last identity, the result is a third-order tensor.
Theorem~\ref{thm:gradients} applies when the tensor with respect to which the derivative is taken appears only once in the network. The following theorem shows how to handle the case where the tensor appears multiple times.
\begin{theorem}\label{thm:multiple-tensors-gradients}
    Let $\Tt$ be a tensor network where $\Gt$ is a core tensor. If $\Gt$ appears $k$ times in the tensor network of $\Tt$, then $\frac{\partial\Tt}{\partial\Gt}$ is obtained by summing $k$ copies of the tensor network of $\Tt$, where a different occurrence of $\Gt$ is removed in each copy.
\end{theorem}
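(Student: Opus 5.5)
The plan is to reduce Theorem~\ref{thm:multiple-tensors-gradients} to Theorem~\ref{thm:gradients} via a relabeling trick combined with the multivariate chain rule. Given the tensor network of $\Tt$ in which $\Gt$ occurs $k$ times, I introduce $k$ formally independent tensors $\Gt^{(1)},\dots,\Gt^{(k)}$, each of the same shape as $\Gt$, and place $\Gt^{(j)}$ at the $j$-th occurrence of $\Gt$. This defines a function
\[
F(\Gt^{(1)},\dots,\Gt^{(k)}) \in \RR^{m_1\times\cdots\times m_M},
\]
which is multilinear in its $k$ arguments. The original tensor is then recovered as the composition $\Tt(\Gt) = F(\Gt,\Gt,\dots,\Gt)$, that is, $F$ composed with the diagonal map $\Gt\mapsto(\Gt,\dots,\Gt)$.

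The first step is to compute each partial Jacobian $\frac{\partial F}{\partial \Gt^{(j)}}$. In the network defining $F$, each $\Gt^{(j)}$ appears exactly once, so Theorem~\ref{thm:gradients} applies directly. It says this partial Jacobian is the tensor network of $F$ with the node $\Gt^{(j)}$ removed, with its dangling legs left open as the input modes of the Jacobian. Evaluating at $\Gt^{(1)}=\cdots=\Gt^{(k)}=\Gt$ gives precisely the network of $\Tt$ with the $j$-th occurrence of $\Gt$ removed.

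The second step is the chain rule. To avoid any ambiguity about index ordering, I vectorize everything: write $\theta=\vectorize(\Gt)$ and $\theta_j=\vectorize(\Gt^{(j)})$. The diagonal map $\theta\mapsto(\theta,\dots,\theta)$ has Jacobian equal to $k$ stacked identity blocks. By the classical chain rule for functions on Euclidean spaces,
\[
\frac{\partial \vectorize(\Tt)}{\partial\theta} = \sum_{j=1}^k \frac{\partial \vectorize(F)}{\partial\theta_j}\Big|_{\theta_1=\cdots=\theta_k=\theta}.
\]
Reshaping back, this identity says the Jacobian tensor $\frac{\partial\Tt}{\partial\Gt}$ is the sum of the $k$ networks obtained in the first step, which is the claim.

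The main obstacle is bookkeeping rather than analysis. The open legs left by each removed occurrence must be identified consistently with the modes $n_1,\dots,n_N$ of $\Gt$, so that the $k$ summands are tensors of the same shape, indexed identically, and can legitimately be added. I would handle this by fixing, for every occurrence, the correspondence between its legs and the modes of $\Gt$ that is already implicit in the network, and by checking this on a small example such as $\Tt=\langle\Gb\xb,\Gb\xb\rangle$, where the formula should reproduce the familiar gradient $2\Gb\xb\xb^\ts$. A secondary point is that if some occurrence appears transposed, for instance as $\Gb^\ts$ in $\xb^\ts\Gb^\ts\Gb\xb$, the leg matching automatically absorbs the transpose, since the diagram records which leg is which. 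This is exactly where the diagrammatic formulation avoids the case analysis an index-based proof would require.
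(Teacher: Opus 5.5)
Your proof is correct, and it supplies a proof the paper does not give. The paper states Theorem~\ref{thm:multiple-tensors-gradients} and goes straight to examples: the quadratic form $\xb^\ts\Ab\xb$, and a squared-norm loss yielding $2\Xb^\ts\Xb\Wb$. Even Theorem~\ref{thm:gradients} is justified only informally, by writing the network as $\Mb\vb$ with $\vb=\vectorize(\Gt)$ and using the Jacobian of a linear map.

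That justification does not carry over to $k>1$ occurrences. In that case $\Tt$ is a homogeneous polynomial of degree $k$ in $\vectorize(\Gt)$, not a linear function of it. Your decoupling step handles exactly this: you replace the occurrences by independent copies $\Gt^{(1)},\dots,\Gt^{(k)}$, so $F$ is multilinear and Theorem~\ref{thm:gradients} applies slot by slot. Composing with the diagonal embedding then gives the sum, because that map's Jacobian is $k$ stacked identities. The chain rule is legitimate because $F$ is polynomial, hence smooth.

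An equivalent route avoids vectorizing altogether. Expand $F(\Gt+\Ht,\dots,\Gt+\Ht)$ by multilinearity and read off the first-order part $\sum_{j} F(\Gt,\dots,\Ht,\dots,\Gt)$, where $\Ht$ sits in slot $j$. Each term is the network with the $j$-th occurrence replaced by $\Ht$. This is the same argument phrased as a directional derivative. Its advantage is that the leg-matching bookkeeping you flag becomes automatic, since $\Ht$ is plugged into precisely the legs the removed copy used, including transposed occurrences such as $\Gb^\ts$.
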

The following examples illustrate how Theorem~\ref{thm:multiple-tensors-gradients} can be applied.
\paragraph{Examples}
\begin{itemize}
    \item Let $\xb\in\RR^{d}$ and $\Ab\in\RR^{d\times d}$
    \begin{align*}
    \frac{\partial \xb^\ts\Ab\xb}{\partial\xb}
=
    \frac{\partial\left(
\\
&=
2\Xb^\ts\Xb\Wb.
\end{align*}
\end{itemize}
We conclude this chapter by presenting several applications of tensor network gradients.
\subsection{Applications}
There is a wide range of applications for tensor network gradients, we highlight two notable examples here. 

We first briefly introduce the notion of loss functions and explain how the chain rule can be easily applied with expressions involving tensors and Jacobian tensors. A loss function $L:S\times S\to \RR^{\geq 0}$ serves to measure the error or cost incurred by a computational model, algorithm, or decision process when producing the output $\hat{y}\in S$ as a prediction of the ground truth $y\in S$. The goal of many learning and optimization algorithms is to minimize this loss. One approach is to use the Frobenius norm for the difference between the predicted output and the ground truth. To minimize the Frobenius norm when both the input and output are tensors, Jacobians can be used to compute the necessary gradients. Let $\At\in\RR^{d_1\times d_2\times d_3}$ be the ground truth and $\Tt\in\RR^{d_1\times d_2\times d_3}$ be the predicted output. Using Theorem~\ref{thm:gradients}, the gradient of the loss $L = \norm{\At-\Tt}_F^2$ can be written as:
    \begin{align*}
        \frac{\partial\norm{\At-\Tt}_F^2}{\partial\Tt} 
        &=
        \frac{\partial}{\partial\Tt}\left(\norm{\At}_F^2 + \norm{\Tt}_F^2 - 2\langle\At,\Tt\rangle\right)\\
        &= 
    \frac{\partial}{\partial\Tt}\left(
\right).
\end{align*}
The chain rule can be easily extended to expressions involving tensor variables. Consider, for example, an arbitrary loss function $\mathcal{ L}: \mathbb{R}^{d_1\times \cdots \times d_N} \to \mathbb{R}$ that one wishes to minimize over the $N$-th order tensor inputs $\Tt$. Now suppose that the tensor input is parameterized as a tensor network where $\Gt \in \mathbb{R}^{m_1\times \cdots \times m_M}$ is an $M$-th order core tensor. To compute the Jacobian tensor of $\mathcal L$ with respect to $\Gt$, one can use the chain rule as follows: 
$$\frac{\partial  \mathcal L}{\partial\Gt} 
=
\frac{\partial \mathcal L}{\partial \Tt }\frac{\partial\Tt}{\partial\Gt}  $$
where $\frac{\partial  \mathcal L}{\partial\Gt}$ is of size $m_1\times\cdots \times m_M$, $\frac{\partial  \mathcal L}{\partial \Tt }$ is of size%
\footnote{One could say that,technically, $\frac{\partial  \mathcal L}{\partial\Gt}$ has shape $1\times m_1 \cdots \times m_M$  and $\frac{\partial  \mathcal L}{\partial \Tt }$ has shape $1\times d_1\times \cdots \times d_N$, but the first trivial mode of dimension $1$ is naturally omitted.}
$d_1\times \cdots \times d_N$, and $\frac{\partial\Tt}{\partial\Gt} $ is of size $d_1\times \cdots \times d_N \times m_1 \cdots \times m_M$. The product between $\frac{\partial  \mathcal L}{\partial \Tt }$ and $\frac{\partial\Tt}{\partial\Gt}  $ is implicitly understood as the contraction between the $M$ modes of $\frac{\partial  \mathcal L}{\partial \Tt }$ with the first $M$-modes  of $\frac{\partial\Tt}{\partial\Gt}  $.

We are now ready to conclude this chapter by presenting two applications of tensor network gradient computations. 
\begin{itemize}
\item\textbf{CP Decomposition with Gradient Descent} To find an approximate CP decomposition~(see Section~\ref{sec:cp-decomposition}) of a tensor $\Tt\in\RR^{d_1\times\cdots\times d_N}$, one can use gradient descent to minimize the loss $L = \norm{\Tt-\CP{\Gb_1,\cdots,\Gb_N}}^2_F$. Here $\CP{\Gb_1,\cdots,\Gb_N}$ denotes the CP decomposition where $\Gb_1,\cdots,\Gb_N$ are the factor matrices. To optimize w.r.t. the factor matrices, we can apply gradient descent to the loss function. For instance, in the case of $N=3$, to compute $\Gb_1$  we can apply the chain rule and write $\frac{\partial L}{\partial\Gb_1} = \frac{\partial L}{\partial\CP{\Gb_1,\Gb_2,\Gb_3}}\frac{\partial\CP{\Gb_1,\Gb_2,\Gb_3}}{\partial\Gb_1}$, where $\frac{\partial L}{\partial\CP{\Gb_1,\Gb_2,\Gb_3}}= 2(\Tt-\CP{\Gb_1,\Gb_2,\Gb_3})\in\RR^{d_1\times d_2\times d_3}$ and 
\begin{align*}
\frac{\partial\CP{\Gb_1,\Gb_2,\Gb_3}}{\partial\Gb_1}
=
\in\RR^{d_1\times d_2\times d_3\times R}.
\end{align*}
Looking at the resulting TN, one can see that it can be expressed using matricizations and the Khatri-Rao product:
\begin{align*}
  \frac{\partial L}{\partial\Gb_1} 
&=
\frac{\partial L}{\partial\CP{\Gb_1,\Gb_2,\Gb_3}}\frac{\partial\CP{\Gb_1,\Gb_2,\Gb_3}}{\partial\Gb_1}\\
&=
%
\\
&=
2(\Tt-\CP{\Gb_1,\Gb_2,\Gb_3})_{(1)}(\Gb_2\krao\Gb_3)
\in\RR^{d_1\times R}.
\end{align*}
 \item\textbf{Machine Learning} To compress neural networks, the dense weight matrices of fully connected layers can be parameterized using tensor factorizations to reduce memory usage~\citep{novikov2015tensorizing}. Let $\Wb\in\RR^{P\times D}$ be a weight matrix and $\xb\in\RR^{D}$ the input vector of a neural network layer. The output $\yb\in\RR^{P}$ is obtained by contracting the weight matrix $\Wb$ with the input vector $\xb$. 
    Suppose $P = \Pi_{i=1}^N{p_i}$ and $D =\Pi_{i=1}^Nd_i$. Then by reshaping $\Wb$ and $\xb$ in tensors we have
\begin{align*}
.
\end{align*}
For simplicity, we proceed with the case $N=4$. In~\citep{novikov2015tensorizing}, the authors propose to parameterize $\Wt$ in the MPO format~(see Section~\ref{sec:efficient-operations}). i.e., 
$\Wt = 
%
.$
During the backpropagation step, since we aim to update the weight matrix, we need to optimize it with respect to all the core tensors. According to the chain rule, this gives: $\frac{\partial L}{\partial \Gt_i} = \frac{\partial L}{\partial \Yt}\frac{\partial\Yt}{\partial\Gt_i}$ for $i\in[4]$, where $L$ denotes the loss function. For instance, for $i=2$ we need to find the derivative of $\Yt$ with respect to $\Gt_2$. By applying Theorem~\ref{thm:gradients} we just need to remove $\Gt_2$ from the tensor network to obtain the result:
\begin{align*}
\frac{\partial\Yt}{\partial\Gt_2} 
&=
\frac{\partial\left(
\in\RR^{p_1\times R_1\times p_2\times R_2\times p_3\times p_4\times d_2}.
\end{align*}
The gradient of the loss is thus given by
\begin{align*}
    \frac{\partial L}{\partial\Gt_2} = \frac{\partial L}{\partial\Yt}\frac{\partial \Yt}{\partial\Gt_2}
    =
    %
\in\RR^{R_1\times p_2\times R_2\times d_2}.
\end{align*}

\end{itemize}

\section{Probability Distributions and Random Tensor Networks}\label{chapter:prob}
In this chapter, we explore a very useful application of tensor networks: their ability to represent high-dimensional probability distributions in an intuitive and computationally efficient manner. For example, some tensor network decompositions, such as TT, allow efficient encoding of both probability distributions and their marginals~\citep{gardiner2024tensor, glasser2019expressive}. We start this chapter by showing how tensor networks can be used to efficiently model probability distributions. We then shift our focus to studying the statistical properties of tensors obtained by contracting random tensors together. Both parts of this chapter will illustrate how tensor networks can help us easily derive non-trivial results involving probability distributions and high-order tensors. 
\subsection{Probability Distributions and Tensor Decompositions}
In this section, we consider how probability distributions can be parameterized using tensor networks.
\subsubsection{Probability Distributions as Tensors}\label{sub:probs}
 One of the popular approaches to modeling a probability distribution with a tensor network is to represent the probabilities directly. Consider a multivariate probability mass function $\PP(X_1,\cdots,X_N)$ of $N$ discrete random variables. This joint probability distribution can be seen as an $N$-th order tensor with non-negative entries summing to one: 
\begin{definition}\label{def:probability}
Let $\PP$ be a joint distribution over $N$ discrete random variables $X_1,\cdots,X_N$ taking their values in $[d_1], [d_2], \cdots,[d_N]$, respectively. 
We say that the tensor $\Tt\in\RR^{d_1\times\cdots\times d_N}$ defined by $\Tt_{i_1\cdots i_N} = \PP(X_1=i_1,\cdots,X_N=i_N)$, \emph{represents} the probability $\PP$, which we denote by $\Tt \simeq  \PP(X_1,\cdots,X_N)$. By construction, the entries of $\Tt$ are non-negative and sum to one: 
$$
\sum_{i_1,\cdots,i_N}\Tt_{i_1,\cdots,i_N} 
=
\begin{tikzpicture}[baseline=\baseline]
    \tikzset{tensor/.style = {minimum size = 0.5cm,shape = circle,thick,draw=black,inner sep = 0pt}, edge/.style = {thick,line width=.4mm},every loop/.style={}}
    \def\x{0}
    \def\y{0}
    \node[tensor,fill=blue!20!white] (T) at (\x+1,\y) {$\Tt$};
    \draw  node[fill = black,circle,inner sep=0pt,minimum size=4pt] (m_1) at (\x,\y-0.65) {};
    \draw  node[fill = black,circle,inner sep=0pt,minimum size=4pt] (m_2) at (\x+0.5,\y-0.65) {};
    \draw  node[fill = black,circle,inner sep=0pt,minimum size=4pt] (m_3) at (\x+2,\y-0.65) {};
    \node[draw=none] at (\x+1.25,\y-0.65){$\dots$};
    \draw[edge] (T) -- (m_1)node [very near end,above]
    {\scalebox{0.7}{\dimcolor{$d_1$}}};
    \draw[edge] (T) -- (m_2)node [very near end,right]
    {\scalebox{0.7}{\dimcolor{$d_2$}}};
    \draw[edge] (T) -- (m_3)node [very near end,above]
    {\scalebox{0.7}{\dimcolor{$d_N$}}};
\end{tikzpicture}
= 1,
$$
where $
\begin{tikzpicture}[baseline=\baseline]
    \tikzset{tensor/.style = {minimum size = 0.5cm,shape = circle,thick,draw=black,inner sep = 0pt}, edge/.style = {thick,line width=.4mm},every loop/.style={}}
    \def\x{0}
    \def\y{0}
    \draw  node[fill = black,circle,inner sep=0pt,minimum size=4pt] (m_1) at (\x,\y) {};
    \draw[edge](m_1)--(\x+1,\y);
    \end{tikzpicture}
$
is a vector with all ones~(see item~\ref{itm:one-copy-tensor} in Proposition~\ref{prop:copy.tensor}).
\end{definition}
We now show how marginal and conditional probabilities can be expressed using tensor network notation.
\paragraph{Marginal Probabilities}
Let  $\Tt\in\RR^{d_1\times d_2\times d_3}$ be the tensor representing the probability distribution $\PP(X_1, X_2, X_3)$ then 
\begin{enumerate}
    \item Using the law of total probabilities, the marginal distribution of $X_2$ and $X_3$ can be expressed as
    
\begin{align*}
    \PP(X_2=i_2,X_3=i_3) 
    &=
    \sum_{i_1=1}^{d_1}\PP(X_1= i_1,X_2=i_2,X_3=i_3) \\
    &= 

 = \sum_{i_1=1}^{d_1}\Tt_{i_1,i_2,i_3},
\end{align*}
for all $i_2,i_3$. This shows that the tensor representing the marginal distribution can be obtained from a simple operation on the tensor representing the joint, i.e.,
\begin{align*}
    \text{If}~~~ %
\simeq\PP(X_2, X_3).
\end{align*}
\item
We can similarly express the marginal distribution of $X_2$:
\begin{align*}
\PP(X_2=i_2) 
&= \sum_{i_1,i_3=1}^{d_1,d_3}\PP(X_1=i_1,X_2= i_2, X_3=i_3)\\
&=
%
=\sum_{i_1,i_3=1}^{d_1,d_3}\Tt_{i_1,i_2,i_3},
\end{align*}
for all $i_2$. Hence, the marginal distribution for $X_2$ in the tensor network notation can be represented as:
\begin{align*}
    \text{If}~~~ %
\simeq\PP(X_2).
\end{align*}
\end{enumerate}
In the following, we use $\PP(i_1,\cdots,i_N)$ for simplicity to denote the probability $\PP(X_1=i_1,\cdots,X_N=i_N)$. 
As we can see, to represent marginals in tensor network notation, we simply need to contract a vector of all ones with the corresponding legs.
Similarly, we can represent conditional probability distributions.
\paragraph{Conditional Probabilities}
Let 
$\PP(i_n \mid i_1, \dots, i_{n-1})
$
denote the conditional probability of $X_n = i_n$ given that the preceding $n-1$ random variables take the values $i_1, \dots, i_{n-1}$.  
For a third-order probability tensor $\Tt\in \RR^{d_1 \times d_2 \times d_3}$, these conditional probabilities can be represented compactly using tensor network notation:
\begin{enumerate}
    \item 
We have $
\PP(i_1|i_2) = \frac{\PP(i_1,i_2)}{\PP(i_2)} = \frac{\sum_{i_3}\PP(i_1,i_2,i_3)}{\sum_{i_1,i_3}\PP(i_1,i_2,i_3)} = 
\frac{
},
$
for all $i_1, i_2$. 

Hence, for a fixed $i_2$, the conditional distribution $\PP(X_1|X_2=i_2)\propto%
},
$
for all $i_1, i_2$ and $i_3$.

Hence, for fixed $i_1$ and $ i_2$, the conditional distribution $\PP(X_3|X_1=i_1, X_2 = i_2)\propto%
.$ 
\end{enumerate}
\subsubsection{Tensor Train Parameterization of Probability Distributions}
As we saw in Section~\ref{chapter:tensor-decomposition}, working with high-order tensors becomes computationally expensive because the number of elements grows exponentially with the tensor order. This also applies to tensors representing joint probability distributions: the tensor representation grows exponentially with the number of random variables. One way to address this issue is to parameterize the probability tensor as a low-rank tensor network.

We will show here how this can be done with the tensor train decomposition. This can also be done with other decomposition models, but the tensor train format is particularly interesting and has been used very successfully in quantum physics to model many-body systems~\citep{TensorNetworkOrg}.

Let $\Tt\in\RR^{d_1\times\cdots\times d_N}$ be a probability tensor parameterized in the tensor train format: 
\begin{center}
\begin{tikzpicture}[baseline=-0.5ex]
    \tikzset{tensor/.style = {minimum size = 0.5cm,shape = circle,thick,draw=black,inner sep = 0pt}, edge/.style = {thick,line width=.4mm},every loop/.style={}}
    \def\x{6}
    \def\y{0}
     \node[tensor,fill=green!50!red!50!white] (B) at (\x+1,\y) {$\Tt$};
    \draw[edge] (B) -- (\x+0.3,0) node [midway,above] {\scalebox{0.7}{\dimcolor{$d_1$}}};
    \draw[edge] (B) -- (\x+1.6,0) node [midway,above] {\scalebox{0.7}{\dimcolor{$d_4$}}};
    \draw[edge] (B) -- (\x+0.4,-0.3) node [midway, below] {\scalebox{0.7}{\dimcolor{$d_2$}}};
    \draw[edge] (B) -- (\x+1.5,-0.4)node [midway, below] {\scalebox{0.7}{\dimcolor{$d_3$}}};
\end{tikzpicture}
=
\begin{tikzpicture}[baseline=\baseline]
    \tikzset{tensor/.style = {minimum size = 0.5cm,shape = circle,thick,draw=black,inner sep = 0pt}, edge/.style   = {thick,line width=.4mm},every loop/.style={}}
    \def\y{0}
    \def\x{0}
  \node[tensor,fill=red!30!white] (D) at (\x-1,\y){$\Gt_1$};
  \node[tensor,fill=red!50!white] (A) at (\x,\y){\scalebox{0.85}{$\Gt_2$}};
  \node[tensor,fill=blue!50!white] (B) at (\x+1,\y){$\Gt_3$};
   \node[tensor,fill=blue!20!white] (C) at (\x+2,\y){$\Gt_4$};
    \draw[edge](A) -- (\x,\y-0.7)node [midway,left] {\edgelabel{$d_2$}};
    \draw[edge](B) -- (\x+1,\y-0.7)node [midway,left] {\edgelabel{$d_3$}};
    \draw[edge](C) -- (\x+2,\y-0.7)node [midway,left] {\edgelabel{$d_4$}};
     \draw[edge](D) -- (\x-1,\y-0.7)node [midway,left] {\edgelabel{$d_1$}};
    \draw[edge](D) -- (A) node [midway,above] {\edgelabel{$R_1$}};
    \draw[edge](A) -- (B) node [midway,above] {\edgelabel{$R_2$}};
    \draw[edge](B) -- (C) node [midway,above] {\edgelabel{$R_3$}};
\end{tikzpicture}.
\end{center}

Since $\Tt$ represents a probability distribution, its entries must be non-negative and sum to one.
In the context of machine learning, where we want to optimize the core tensors with respect to some loss function~(e.g. log-likelihood), there are several constraints one can impose on the core tensors $\Gt_1,\cdots,\Gt_N$ to ensure these two properties. 
One easy way to ensure non-negativity of the entries of $\Tt$ is to constrain the core tensors to have non-negative entries, or to apply a non-negative valued "activation" function component-wise to all the core tensors' entries. Another way, with its roots in quantum physics, is to assume that the entries of $\Tt$ are the square roots of the probabilities, rather than the probabilities themselves: $\Tt_{i_1,\cdots,i_N} = \sqrt{\PP(X_1=i_1,\cdots,X_N=i_N)}$. When $\Tt$ is in the TT format, this corresponds to the following tensor network:
$$
\PP(X_1=i_1,\cdots,X_4=i_4) = (\Tt_{i_1,\cdots,i_4})^2 = 

.
$$

Using the copy tensor, we can write this as:
\begin{align}\label{eqn:parametrized-prob}
    \PP(X_1,\cdots,X_4) \simeq
    %
\end{align}

To ensure that the probabilities sum to one, we can choose to model un-normalized squared-root probabilities, since, as we will see, computing the normalization constant is very easy when the probability tensor is in the TT format. We thus assume that
$$\PP(X_1=i_1,\cdots,X_N=i_N) = \frac{(\Tt_{i_1,\cdots,i_N})^2}{\zeta},$$
where $\zeta = \sum_{i_1,\cdots,i_N}(\Tt_{i_1,\cdots,i_N})^2$ is the normalization constant. Similarly to efficient operations in the TT format presented in Section~\ref{sec:efficient-operations}, this normalization factor can be efficiently computed in the TT format: 
$$\zeta= \sum_{i_1,\cdots,i_4}\Tt_{i_1,\cdots,i_4}^2
=
.
$$
The idea above is also called \emph{Born Machine}~\citep{glasser2019expressive,han2018unsupervised}.
Similarly to what we showed in Section~\ref{sub:probs}, we can compute marginal distributions of TT parametrized probability distributions~(see Eq.~\eqref{eqn:parametrized-prob}) as tensor networks:
\begin{enumerate}
    \item %
    Using the law of total probabilities, the marginal distribution of $X_2,X_3$, and $X_4$ can be expressed as 
    \begin{align*}
    \PP(X_2=i_2, X_3=i_3, X_4=i_4) 
    &= \sum_{i_1=1}^{d_1}\PP(X_1=i_1,\cdots,X_4=i_4)\\ 
    &=
    %
=
\sum_{i_1}^{d_1}\Tt_{i_1,\cdots,i_4},
    \end{align*}
for all $i_2\in[d_1],i_3\in[d_3]$ and $i_4\in[d_4]$. 
\item We can similarly express the remaining marginal distributions. For example, the marginal distribution of $X_2$ can be written as
\begin{align*}
    \PP(X_2=i_2) 
    &= \sum_{i_1,i_3,i_4=1}^{d_1,d_3,d_4}\PP(X_1=i_1,\cdots,X_4=i_4)\\ 
    &=
    %
=
\sum_{i_1,i_3,i_4=1}^{d_1,d_3,d_4}\Tt_{i_1,\cdots,i_4},
    \end{align*}
\end{enumerate}
\subsection{Computing Expectations of Random Tensor Networks} 
We will now show how the tensor networks formalism can help us derive complex properties of random tensors. 
We begin this section by presenting two very useful propositions.
The first one is a trivial consequence of linearity of expectation, but will prove to be very useful in this section.
\begin{proposition}\label{prop:inner-expectation}
    Let $\At$ and $\Bt$ be two independent random tensor networks. tensor networks. Then their inner product in expectation is 
$$\EE\left(\begin{tikzpicture}[baseline=\baseline]
    \tikzset{tensor/.style = {minimum size = 0.5cm,shape = circle,thick,draw=black,inner sep = 0pt}, edge/.style = {thick,line width=.4mm},every loop/.style={}}
    \def\y{0}
    \def\x{0}
    \node[tensor,fill=red!50!blue!50!white] (At) at (\x,\y){\scalebox{0.85}{$\At$}};
    \node[tensor,fill=red!30!blue!50!white] (Bt) at (\x+1,\y){\scalebox{0.85}{$\Bt$}};
    \draw[edge] (At) -- (Bt);
    \draw[edge] (At) to [out=45,in=135] (Bt);
    \draw[edge] (At) to [out=-45,in=-135] (Bt);
    \draw[edge](At)--(\x-0.85,\y);
    \draw[edge](At)--(\x-0.75,\y+0.5);    \draw[edge](At)--(\x-0.75,\y-0.5);
    \draw[edge](Bt)--(\x+1.85,\y);
    \draw[edge](Bt)--(\x+1.75,\y+0.5);    \draw[edge](Bt)--(\x+1.75,\y-0.5);
\end{tikzpicture} \right)
=
\begin{tikzpicture}[baseline=\baseline]
    \tikzset{tensor/.style = {minimum size = 0.8cm,shape = circle,thick,draw=black,inner sep = 0pt}, edge/.style = {thick,line width=.4mm},every loop/.style={}}
    \def\y{0}
    \def\x{0}
    \node[tensor,fill=red!50!blue!50!white] (At) at (\x,\y){\scalebox{0.85}{$\EE(\At)$}};
    \node[tensor,fill=red!30!blue!50!white] (Bt) at (\x+1.5,\y){\scalebox{0.85}{$\EE(\Bt)$}};
    \draw[edge] (At) -- (Bt);
    \draw[edge] (At) to [out=45,in=135] (Bt);
    \draw[edge] (At) to [out=-45,in=-135] (Bt);
    \draw[edge](At)--(\x-0.85,\y);
    \draw[edge](At)--(\x-0.75,\y+0.5);    \draw[edge](At)--(\x-0.75,\y-0.5);
    \draw[edge](Bt)--(\x+2.35,\y);
    \draw[edge](Bt)--(\x+2.25,\y+0.5);    \draw[edge](Bt)--(\x+2.25,\y-0.5);
\end{tikzpicture}.
$$
\begin{proof}
The result directly follows from the linearity of the expected value and the independence of $\At$ and $\Bt$.
\end{proof}
\end{proposition}
The second one shows an interesting properties of matrices whose entries are drawn from a Gaussian distribution: the expectation of the outer product of such a matrix with itself reduces to a very simple tensor network. 
\begin{proposition}\label{prop:outer-prod}Let $\Ab\in\RR^{m\times n}$ be a random matrix whose elements are drawn identically and independently from the standard normal distribution. The expected value of the outer product of $\Ab$ with itself is 
$$\EE\left(
,
$$
where the right-hand side
is the outer product of two Kronecker delta~(i.e., $
%
=\delta_{i_1i_3}\delta_{i_2i_4}$ for $i_1,i_3\in[m]$ and $i_2,i_4\in[n]$).
\begin{proof}
Since the entries of $\Ab$ are drawn independently from $\mathcal{N}(0,1)$, its vectorization follows a multivariate normal distribution, $\vectorize(\Ab) \sim \mathcal{N}(\mathbf{0},\Ib_{mn})$, and thus $\EE(\vectorize(\Ab)\vectorize(\Ab)^\ts) = \Ib_{mn}$. This means that 
$$ \EE(\Ab_{i_1i_2}\Ab_{i_3i_4})=
\begin{cases}
        1 & \text{ if } i_1=i_3\text{ and } i_2=i_4\\
        0 & \text{ otherwise, }
    \end{cases} 
    $$
hence, $\EE(\Ab\outprod\Ab)_{i_1i_2i_3i_4}=\EE(\Ab_{i_1i_2}\Ab_{i_3i_4}) = \delta_{i_1i_3}\delta_{i_2i_4}=
$, for all $i_1,i_3\in[m]$ and $i_2,i_4\in[n]$.

\end{proof}
\end{proposition}
\begin{remark}
    Note that different orderings of indices of $\EE(\Ab\outprod\Ab)$ lead to the following tensor networks:  
    \begin{align*}
            \EE(\Ab\outprod\Ab)_{i_1i_2i_4i_3}
            &= 
            \EE(\Ab_{i_1i_2}\Ab_{i_4i_3}) 
    = \delta_{i_1i_4}\delta_{i_2i_3}\\
    &=
    \begin{cases}
        1 & \text{ if } i_1=i_4\text{ and } i_2=i_3\\
        0 & \text{ otherwise, }
    \end{cases} 
    =
    %
.
\end{align*}
\end{remark}
Next, we show how Propositions~\ref{prop:inner-expectation} and~\ref{prop:outer-prod} can be used to derive a very simple tensor network derivation of the expected value of the product of a random matrix with Gaussian entries with itself. 
\begin{proposition}\label{prop:wishart-mean}
If $\Ab\in\RR^{m\times n}$ is a random matrix whose elements are drawn i.i.d from the standard normal distribution, then $\EE(\Ab^\ts\Ab) = m\Ib_n$. 
\begin{proof}
        Using Propositions~\ref{prop:inner-expectation} and~\ref{prop:outer-prod} we can write 
\begin{align*}
\EE(\Ab^\ts\Ab) 
&= 
\EE\left(

= m\Ib_n.
\end{align*}
The final diagram is obtained by contracting the legs of size $m$. As observed, once the contraction is performed, the resulting expression is simply the trace of the identity matrix, which corresponds to the circle of size $m$ in the above equality.
\end{proof}
\end{proposition}
The previous proposition can be interpreted as a statement on the mean of a Wishart distribution.  
Recall that  sampling a matrix $\mat X$ from the Wishart distribution $W_p(\mat \Sigma, m)$, where $\mat \Sigma \in \RR^{n\times n}$ is a covariance matrix, is done by sampling each column of a matrix $\mat A\in\mathbb R^{m\times n}$ from the multivariate normal $\mathcal N(\vec 0 , \mat \Sigma)$ independently and setting $\mat X = \mat A^\top\mat A$. Hence, the matrix $\Ab ^\top \Ab$ in Proposition~\ref{prop:inner-expectation} follows the Wishart distribution  $W_p(\mat I,m)$.
\begin{remark} As a consequence of Proposition~\ref{prop:inner-expectation}, if $\Ab \in \RR^{m \times n}$ is a random matrix with entries drawn from the standard normal distribution, then $\EE(\norm{\Ab}_F^2) = mn$. 
This result can be extended to higher-order tensors whose entries are drawn i.i.d. from the standard normal distribution, e.g., if $\Tt\in\RR^{d_1\times d_2\times\cdots\times d_N}$ then $\EE(\norm{\Tt}_F^2) = d_1d_2\cdots d_N$.
\end{remark}
By leveraging the properties of random Gaussian matrices established in the previous propositions, we can compute the expected norm of their product. Once again, the proof is remarkably simple when expressed through tensor networks, whereas an explicit treatment using indices and summations would be considerably more intricate.
\begin{proposition}\label{prop:expected-norm}
    Let $\Ab\in\RR^{m\times r}$ and $\Bb\in\RR^{r\times n}$ be random matrices whose entries are independently and identically distributed (i.i.d.) according to the standard normal distribution with mean zero and variance one. Then, the expected value of the squared Frobenius norm of their product is given by $\EE({\norm{\Ab\Bb}^2_F}) = mnr$.
\begin{proof}
Using Proposition~\ref{prop:inner-expectation} we can write
\begin{align*}
 \EE({\norm{\Ab\Bb}^2_F})
 &= 
 \EE\left(
.
$ 
Since there are contractions between the legs of size $r$ we finally obtain
\begin{align*}
 \EE({\norm{\Ab\Bb}^2_F})
 = 
 \EE\left(%
= mnr.
\end{align*}
\end{proof}
\end{proposition}
Now that we understand how to compute the expected norm of products of Gaussian random matrices, we can extend this method to evaluate expectations of more complex tensor networks.

\paragraph{Example} Suppose, we have an arbitrary tensor network, e.g.,
$$\Tt = 
%
,
$$
where $\At\in\RR^{m_1\times d_1\times r_1}, \Bt\in\RR^{m_2\times r_1\times d_2\times r_2}, \Ct\in\RR^{m_3\times r_2\times d_3}$ and $\Gt\in\RR^{d_1\times d_2\times d_3}$ are tensors whose elements are drawn i.i.d. from the standard normal distribution, and we want to compute the expectation of $\norm{\Tt}^2_F$. Using tensor network diagrams, we have
\begin{align*}
    \EE\left(\norm{%
\right).
\end{align*}
From Prop.~\ref{prop:wishart-mean}, we know that 

$$
\EE\left(%
,
$$
and thus by Proposition~\ref{prop:inner-expectation} we can write:
\begin{align*}
\EE\left(%
.
\end{align*}
Therefore, since $\Gt$ is independent of $\At,\Bt$ and $\Ct$ we can use Proposition~\ref{prop:inner-expectation} again  and write 
\begin{align*}
    \EE\left(%
\right)
\ \ \ \textequal{Prop.~\ref{prop:wishart-mean}}\ \  m_1m_2m_3r_1r_2d_1d_2d_3.
\end{align*}
We can see how efficiently complex computations can be performed using tensor networks. We now introduce other identities involving random Gaussian matrices.
\begin{enumerate}
\item  The first identity is a useful consequence of \emph{Isserlis'} theorem~\citep{isserlis1918formula}.
\begin{theorem}\label{thm:isserlis}
Let $\ab\in\RR^{n}$ be a random vector whose elements are drawn i.i.d. from the normal distribution with mean zero and variance one. Then
\begin{align*}
\EE(\ab^{\outprod 4}) 
&= 
\EE\left(

\end{align*}
where $\ab^{\outprod 4}$ denotes the outer product of $\ab$ with itself 4 times. In the last equality, each term in the sum corresponds to a different reshaping of the identity matrix.

This result can be extended to an arbitrary covariance structure: if $\ab$ is drawn from the multivariate normal $\mathcal N(\mathbf 0, \Sigmab )$, then
\begin{align*}
\EE(\ab^{\outprod 4})=
%
 .
\end{align*}

\begin{proof}
    We show the result for the case where the covariance is the identity. The proof extends straightforwardly to the case of an arbitrary covariance matrix. 
    
    Element-wise, by using Isserlis’
    theorem and using the fact that $\ab\sim\mathcal{N}(\mathbf{0},\Ib)$, for $i_1,\cdots,i_4\in[n]$, we have
\begin{align*}
(\EE(&\ab^{\outprod 4}))_{i_1,i_2,i_3,i_4} 
=
    \EE(\ab_{i_1}\ab_{i_2}\ab_{i_3}\ab_{i_4})\\
&= 
    \EE(\ab_{i_1}\ab_{i_2})\EE(\ab_{i_3}\ab_{i_4}) + \EE(\ab_{i_1}\ab_{i_3})\EE(\ab_{i_2}\ab_{i_4})
    +\EE(\ab_{i_1}\ab_{i_4})\EE(\ab_{i_2}\ab_{i_3})\\
&=
    \delta_{i_1i_2}\delta_{i_3i_4} + \delta_{i_1i_3}\delta_{i_2i_4} + \delta_{i_1i_4}\delta_{i_2i_3},
\end{align*}
which in tensor network notation gives:
\begin{align*}
\EE(\ab^{\outprod 4})_{i_1i_2i_3i_4}
=
.
\end{align*}
\end{proof}
\end{theorem}
We can also extend the result of \emph{Isserlis' theorem} to the case where the random variable takes the form of a higher-order tensor. For example, for a third-order tensor $\At\in\RR^{d_1\times d_2\times d_3}$, we can write
\begin{align*}
    (\EE(\At^{\outprod 4}))_{i_1j_1r_1i_2j_2r_2i_3j_3r_3i_4j_4r_4}
&=\\
&\hspace{-3cm}%
,
\end{align*}
for $i_1,\cdots,i_4\in[d_1], j_1,\cdots,j_4\in [d_2]$ and $r_1,\cdots,r_4\in[d_3]$.
\item Using the above results, we can easily derive the expectation of  $(\Ab^\ts\Ab)^{\outprod 2}$ when $\Ab$ is a random matrix:
\begin{proposition}\label{prop:kronecker-exp}
    Let $\Ab\in\RR^{m\times n}$ be a random matrix whose elements are drawn i.i.d from the standard normal distribution. Then
\begin{align*}
\EE((\Ab^\ts\Ab)^{\outprod 2})) 
=
m^2%
.
\end{align*}
\begin{proof}
By applying Theorem~\ref{thm:isserlis} to matrices and using Proposition~\ref{prop:wishart-mean}, we can derive the desired expression. First, from Theorem~\ref{thm:isserlis} we have
\begin{align*}
\EE(\Ab^{\outprod4})_{i_1j_1i_2j_2i_3j_3i_4j_4}
&= 
\EE\left(%
,
\end{align*}
for $i_1,\cdots, i_4\in[n]$ and $j_1,\cdots,j_4\in [m]$. It follows that
\begin{align*}
\EE((\Ab^\ts\Ab)^{\outprod 2})_{i_1i_2i_3i_4}
&=
\EE\left(%
.
\end{align*}
\end{proof}
\end{proposition}
\item More generally, if $\Ab\in\RR^{m\times n}$ is a random matrix with i.i.d. standard normal entries and $\Tt\in\RR^{n\times n\times n\times n}$ is an arbitrary random tensor independent of $\Ab$ , then contracting $\Tt$ with $\Ab$ yields
\begin{align*}
&\hspace*{-2cm}\EE\left(%
\right)= \sum_{i_1i_2i_3i_4}\EE(\Tt_{i_1i_2i_3i_4})(m^2\delta_{i_1i_2}\delta_{i_3i_4}+m\delta_{i_1i_3}\delta_{i_2i_4}+m\delta_{i_1i_4}\delta_{i_2i_3}).
\end{align*}
\item Let $\Ab\in\RR^{m\times n}$ and $\Bb\in\RR^{n\times p}$ be two random matrices whose elements are drawn i.i.d from the standard normal distribution, then
$$\EE(\Ab\Bb)^{\outprod 2} = 
\EE\left(%
.
$$
\begin{proof}
    Using Proposition~\ref{prop:inner-expectation} and~\ref{prop:outer-prod} we can write:
\begin{align*}
\EE\left(%
.
\end{align*}
\end{proof}
\item The next proposition is a useful special case of Theorem 3.3.15 item (iv) of~\citep{gupta2018matrix}.
\begin{proposition}
    Let $\Ab\in\RR^{m\times n}$ be a random matrix with i.i.d. standard normal entries, and let $\Xb\in\RR^{d\times m}$ be a constant matrix then 
$$\EE(\tr(\Xb\Ab\Ab^\ts\Xb^\ts)^2) = n^2\norm{\Xb}_F^4 + 2n\tr((\Xb^\ts\Xb)^2).$$
\begin{proof}
    We prove this identity by using tensor network notations:
\begin{align*}
\EE(\tr(\Xb\Ab\Ab^\ts\Xb^\ts)^2) 
&=
\EE\left(%
,
\end{align*}
The last equality follows from the fact that the matrix $\Xb$ is constant. By Proposition~\ref{prop:kronecker-exp} we know:
\begin{align*}
\EE(\Ab^\ts\Ab)^{\outprod 2}
&=
n^2%
\\
&= 
n^2\norm{\Xb}_F^4+2n\tr((\Xb^\ts\Xb)^2).
\end{align*}
\end{proof}
\end{proposition}
\end{enumerate}

The derivations we presented in this chapter demonstrate how manipulating tensor network expressions can help us derive non-trivial results in a very intuitive manner.

\section{Conclusion}\label{chapter:conclusion}

This manuscript presents a self-contained graphical language for tensor networks, progressing from foundational notation and basic concepts to tensor decompositions, gradient computation, and probability distributions. Its primary objective is to provide practical frameworks for performing operations, ranging from simple to complex, on high-dimensional objects.
A central theme throughout the manuscript is that the graphical language of tensor
networks is not merely a notational convenience. Rather, it offers a clear and precise mathematical framework that brings together many classical results into a single, unified structure.
Several examples illustrate this point. The bounds on the matricization rank are determined by the weight of any cut that separates the row legs from the column legs in a tensor diagram (Theorem~\ref{thm: matricization-rank}). The gradient rules of Chapter~\ref{chapter:gradients}~(Theorems~\ref{thm:gradients} and~\ref{thm:multiple-tensors-gradients}) reduce the computation
of Jacobians of complex tensor network expressions to a single diagrammatic operation,
node removal, recovering classical identities such as
$\partial(\Ab\xb)/\partial \Ab = \xb \outprod \Ib$ and $\partial \tr(\Ab)/\partial \Ab = \Ib$
as immediate corollaries. The probabilistic computations of Chapter~\ref{chapter:prob} show that
non-trivial results, including the mean of the Wishart distribution (Proposition~\ref{prop:wishart-mean}), Isserlis' theorem for Gaussian tensors~(Theorem~\ref{thm:isserlis}), and the expected norm of the product of Gaussian random matrices (Proposition~\ref{prop:expected-norm}), admit concise derivations using tensor network notation.

Taken together, these examples highlight a general principle: when working with
high-dimensional tensors, organizing computations as tensor network contractions
and reasoning diagrammatically leads to representations that are both compact and
conceptually transparent.

\section*{Acknowledgments}
The authors thank Jacob Miller for fruitful discussions and for being part of the initial motivation to write this manuscript.
We would like to gratefully acknowledge the Canadian Institute for Advanced Research (CIFAR AI chair program), the Natural Sciences and Engineering
Research Council of Canada (Discovery program, RGPIN-2019-05949) and IVADO for funding this research.

\newpage
\bibliography{biblio}
\end{document}